\documentclass{article}




 \usepackage[preprint,nonatbib]{neurips_2020}



\usepackage[utf8]{inputenc} 
\usepackage[T1]{fontenc}    
\usepackage{hyperref}       

\usepackage{url}            
\usepackage{booktabs}       
\usepackage{amsfonts}       
\usepackage{nicefrac}       
\usepackage{microtype}      

\usepackage{epsf}
\usepackage{fancyhdr}
\usepackage{graphics}
\usepackage{graphicx}
\usepackage{psfrag}
\usepackage{color,xcolor}


\usepackage{algorithm}
\usepackage{algorithmic}

\usepackage{wrapfig}
\usepackage{float}

\usepackage{amsthm}
\usepackage{amsfonts}
\usepackage{amsmath}
\usepackage{amssymb,bbm}
\usepackage{multirow}
\usepackage{balance}

\usepackage[numbers]{natbib}

\usepackage{url}



\usepackage{comment}


 \usepackage{tabularx}%

\usepackage{enumitem}
\usepackage{booktabs}

\usepackage{bm,bbm}
\usepackage{mathtools}

\newtheorem{theorem}{Theorem}
\newtheorem{proposition}{Proposition}
\newtheorem{definition}{Definition}
\newtheorem{remark}{Remark}


\newcommand{\norm}[1]{\left\lVert#1\right\rVert}

\def\TM{\texttt{T}}

\def\TMX{\texttt{T}_{\!X}}

\def\TMZ{\texttt{T}_{\!Z}}

\def\FlowTGW{\textit{FlowAlign}} 
\def\DepthTGW{\textit{DepthAlign}} 
\def\FlowAlign{\textit{FlowAlign}}
\def\DepthAlign{\textit{DepthAlign}}

\def\GW{{\mathcal{GW}}}



\def\FTGW{{\mathcal{A}_{f}}}

\def\RFTGW{{\widehat{\mathcal{A}}_{f}}}

\def\RDTGW{{\widehat{\mathcal{A}}_{d}}}

\def\FA{{\mathcal{A}_{f}}}
\def\SFA{{\mathcal{A}_{f}^2}}
\def\RFFA{{\widehat{\mathcal{A}}_{f}}}
\def\SRFFA{{\widehat{\mathcal{A}}_{f}^2}}

\def\DA{{\mathcal{A}_{d}}}

\def\RFDA{{\widehat{\mathcal{A}}_{d}}}

\def\O{\mathcal{O}}

\def\RR{\mathbb{R}}

\def\Pp{\mathcal{P}}
\def\Ss{\mathcal{S}}

\def\Tt{\mathcal{T}}
\def\Mm{\mathcal{M}}


\title{Flow-based Alignment Approaches \\ for Probability Measures in Different Spaces}

%


\author{
  Tam Le\thanks{Equal contribution.} \\
  RIKEN AIP, Japan\\
  \texttt{tam.le@riken.jp} \\
  \And
  Nhat Ho$^{*}$ \\
  University of California, Berkeley \\
  \texttt{minhnhat@berkeley.edu} \\
  \AND
  Makoto Yamada \\
  Kyoto University, Japan \& RIKEN AIP \\
  \texttt{myamada@i.kyoto-u.ac.jp }
}

\begin{document}

\maketitle

\begin{abstract}
Gromov-Wasserstein (GW) is a powerful tool to compare probability measures whose supports are in different metric spaces. GW suffers however from a computational drawback since it requires to solve a complex non-convex quadratic program. We consider in this work a specific family of cost metrics, namely \textit{tree metrics} for a space of supports of each probability measure, and aim for developing efficient and scalable discrepancies between the probability measures. By leveraging a tree structure, we propose to align \textit{flows} from a root to each support instead of pair-wise tree metrics of supports, i.e., flows from a support to another, in GW. Consequently, we propose a novel discrepancy, named Flow-based Alignment (\FlowAlign), by matching the flows of the probability measures. We show that \FlowAlign~shares a similar structure as a univariate optimal transport distance. Therefore, \FlowAlign~is fast for computation and scalable for large-scale applications. By further exploring tree structures, we propose a variant of \FlowAlign, named Depth-based Alignment (\DepthAlign), by aligning the flows hierarchically along each depth level of the tree structures. Theoretically, we prove that both \FlowAlign~and \DepthAlign~are pseudo-distances. Moreover, we also derive tree-sliced variants, computed by averaging the corresponding \FlowAlign~/ \DepthAlign~using random tree metrics, built adaptively in spaces of supports. Empirically, we test our proposed discrepancies against other baselines on some benchmark tasks.
\end{abstract}




\section{Introduction}
Optimal transport (OT) theory provides a powerful set of tools to compare probability measures. OT has recently gained considerable interests in machine learning community~\cite{Cuturi-2013-Sinkhorn, genevay2016stochastic, NIPS2019_9130, NIPS2019_8703, muzellec2018generalizing, pmlr-v97-paty19a, perrot2016mapping, NIPS2019_8872}, and played an increasingly important role in several research areas, such as computer graphics~\cite{bonneel2016wasserstein, lavenant2018dynamical, solomon2015convolutional, solomon2019optimal}, domain adaptation~\cite{bhushan2018deepjdot, courty2017joint, courty2016optimal, pmlr-v89-redko19a}, and deep generative models~\cite{Arjovsky-2017-Wasserstein,  pmlr-v84-genevay18a, Gulrajani-2017-Improved, kolouri2018sliced, NIPS2019_8318, wu2019sliced}.

When probability measures are discrete and their supports are in the same space, OT distance can be recasted as a linear programming, which can be solved by standard interior-point method algorithms. However, these algorithms are not efficient when the number of supports is large. In order to account for the scalability of the OT distance, Cuturi \cite{Cuturi-2013-Sinkhorn} initiated a new research line by regularizing the OT with the entropy of the transport plans. Several efficient algorithms have been recently proposed to solve the entropic OT~\cite{Altschuler-2017-Near, altschuler2019massively, Dvurechensky-2018-Computational, Lin-2019-Efficient}.


When probability measures are discrete and their supports are in different spaces, the classical OT distance is no longer valid to measure their discrepancy. In his seminal work, M{\'e}moli \cite{Memoli_Gromov} introduced Gromov-Wasserstein (GW) distance to compare probability measures whose supports are in different metric spaces. Due to its flexibility, the GW distance has been used in several applications, including quantum chemistry~\cite{Peyre_Gromov}, computer graphics~\cite{solomon2016entropic}, cross-lingual embeddings~\cite{Alvarez-2018-Gromov, grave2019unsupervised}, graph partitioning and matching~\cite{Xu-2019-Scalable, Xu-2019-Gromov}, and deep generative models~\cite{bunne2019learning}. However, the GW is a complex non-convex quadratic program and NP-hard for arbitrary inputs~\cite{peyre2019computational} (\S10.6.3). Therefore, its computation is very costly, which hinders applications in large-scale settings where the number of supports is large. 

Reposing on the entropic regularization idea from OT, Peyr{\'e} et al. \cite{Peyre_Gromov} proposed an entropic GW discrepancy. The entropic GW can be efficiently solved by the Sinkhorn algorithm under certain cases of regularization parameter and a specific family of loss functions. Nevertheless, entropic GW requires the regularization to be sufficiently large for a fast computation, which leads to a poor approximation of GW. Following the direction of leveraging entropic regularization, Xu et al. \cite{Xu-2019-Scalable, Xu-2019-Gromov} proposed algorithmic approaches to further speed up GW for graph data. Another approach for scaling up the computation of GW is sliced GW~\cite{vayer2019sliced}, which relies on a one-dimensional projection of supports of the probability measures. Consequently, similar to sliced-Wasserstein, sliced GW albeit fast limits its capacity to capture high-dimensional structure in a distribution of supports \cite{le2019tree, pmlr-v97-liutkus19a}. Additionally, sliced GW can be \textit{only} either applied for discrete measures with the same number of supports and uniform weights, or required an artifact zero-padding for probability measures having different number of supports~\cite{vayer2019sliced}.

\textbf{Contributions.} In this work, we consider a particular family of cost metrics, namely \textit{tree metrics} for a space of supports of each probability measure, and aim for developing efficient and scalable discrepancies for probability measures in different spaces. Although it is well-known that one can leverage tree metrics to speed up a computation of arbitrary metrics \cite{bartal1996probabilistic, bartal1998approximating, charikar1998approximating, fakcharoenphol2004tight, indyk2001algorithmic}, our goal is rather to sample tree metrics for spaces of supports, and use them as cost metrics, similar to tree-sliced-Wasserstein (TSW)~\cite{le2019tree}. However, different to TSW, one may \textit{not} apply this idea straightforwardly by only using tree metrics as cost metrics for GW to develop scalable discrepancy for the probability measures in different tree metric spaces. Therefore, by exploiting a tree structure, we propose to align \textit{flows} from a root to each support instead of pair-wise tree metrics of supports, i.e., flows from a support to another, in GW for the probability measures. Consequently, we propose a novel discrepancy, named Flow-based Alignment (\FlowAlign), by matching the flows of the probability measures. \FlowAlign~is fast for computation and scalable for large-scale applications due to sharing a similar structure as a univariate OT. For further exploring tree structures, we propose to align the flows hierarchically along each depth level of the tree structures, named Depth-based Alignment (\DepthAlign). Theoretically, we prove that both \FlowAlign~and \DepthAlign~are pseudo-metrics, i.e., they are symmetric and satisfy the triangle inequality. Furthermore, we derive tree-sliced variants, computed by averaging the corresponding \FlowAlign~/ \DepthAlign~using random tree metrics, sampled by a fast adaptive method, e.g., clustering-based tree metric sampling~\cite{le2019tree} (\S4). 

\textbf{Organization.} 
The paper is organized as follows: we review tree metrics and GW in \S\ref{sec:TGW}. We propose two novel discrepancies: \FlowAlign~and \DepthAlign~ for probability measures in different tree metric spaces in \S\ref{sec:flow_tree_Gromov} and \S\ref{sec:depth_tree_Gromov} respectively. In \S\ref{sec:discussion}, we derive their tree-sliced variants for practical applications, and then evaluate the proposed discrepancies against other baselines on some benchmark tasks in \S\ref{sec:experiment} before concluding in \S\ref{sec:discuss}. 

\textbf{Notation.} We denote $[n] = \{1, 2, \ldots, n \}$, $\forall n \in \mathbb{N}_{+}$. For $x \in \RR^{d}$, let $\norm{x}_1$ be the $\ell_1$-norm of $x$, and $\delta_x$ be the Dirac function at $x$. For probability measure $\mu$, denote $|\mu|$ for the number of supports of $\mu$.

\section{Reminders on Tree Metric and Gromov-Wasserstein}
\label{sec:TGW}

In this section, we briefly review tree metric space and GW between probability measures in different tree metric spaces.

\subsection{Tree metric space}
\label{subsec:prelim}

Given a tree $\Tt$, let $d_{\TM}$ be a tree metric on $\Tt$. The tree metric $d_{\TM}$ between two nodes in $\Tt$ is equal to a length of the (unique) path between them \cite{semple2003phylogenetics} (\S7, p.145--182). Given node $x \in \Tt$, let $\Gamma(x)$ be the set of nodes in the subtree of $\Tt$ rooted at $x$, i.e., $\Gamma(x) = \left\{z \in \Tt \mid x \in \Pp(r, z) \right\}$ where $\Pp(r, z)$ is the (unique) path between root $r$ and node $z$ in $\Tt$, $\Ss(x)$ be the set of child nodes of $x$, and $\left|\Ss(\cdot) \right|$ is the cardinality of set $\Ss(\cdot)$. Given edge $e$, we write $u_e$ and $v_e$ for the nodes that are respectively at a shallower (closer to $r$) and deeper (further away from $r$) level of edge $e$, and $w_e$ be the non-negative length of that edge. We illustrate those notions in Fig.~\ref{fg:TreeMetric}.


 \begin{figure}
  \begin{center}
    \includegraphics[width=0.25\textwidth]{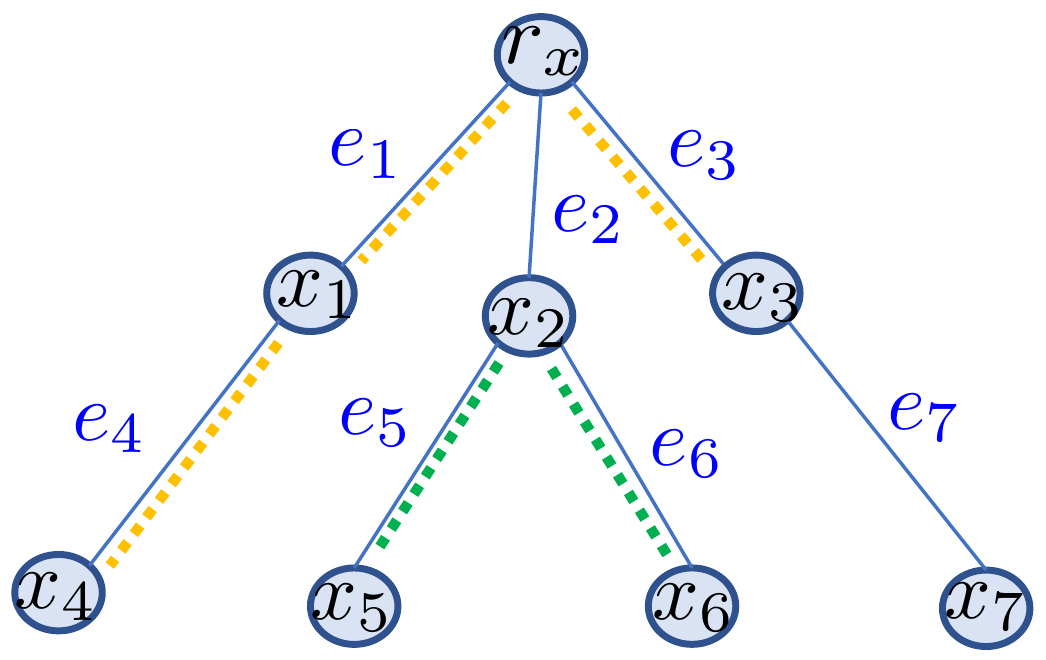}
  \end{center}
  \vspace{-10pt}
  \caption{An illustration for a tree metric space. $x_2$ is at depth level $2$. $\Pp(x_3, x_4)$ contains $e_3, e_1, e_4$ (the orange dot path), $\Gamma(x_2) = \left\{x_2, x_5, x_6 \right\}$ (the green dot subtree), and $\Ss(r_x) = \left\{x_1, x_2, x_3 \right\}$. For edge $e_5$, $v_{e_5} = x_5$ and $u_{e_5} = x_2$.}
  \label{fg:TreeMetric}
 \vspace{-5pt}
\end{figure}

Throughout the paper, we are given two probability measures $\mu = \sum_{i \in [k]} a_i \delta_{x_i}$ and $\nu = \sum_{j \in [k']} b_j \delta_{z_j}$ whose supports $x_i \mid_{i \in [k]}$ and $z_j \mid_{j \in [k']}$ are in different tree metric spaces $(\Tt_X, d_{\TMX})$ and $(\Tt_Z, d_{\TMZ})$ respectively; $a_i, b_j \in \RR_{+}, \forall i \in [k], j \in [k']$ such that $\sum_{i \in [k]} a_i = \sum_{j \in [k']} b_j = 1$. Our goal is to define discrepancies between these probability measures.

\subsection{Gromov-Wasserstein}
In \cite{Memoli_Gromov}, M{\'e}moli defined GW between $\mu$ and $\nu$ in different tree metric spaces as follow:
\begin{equation}
   \GW^2(\mu, \nu) :=  \min_{T \in \Pi(\mu, \nu)} \sum_{i,j,i', j'} \left| d_{\TMX}(x_i, x_{i'}) - d_{\TMZ}(z_j, z_{j'} )\right|^2 T_{ij} T_{i'j'},   
 \label{equ:GW}
\end{equation}
where $\Pi(\mu, \nu) : = \left\{T \in \RR_{+}^{k \times k'} \mid \sum_{j \in [k']} T_{ij} = a_i, \sum_{i \in [k]} T_{ij} = b_j, \forall i \in [k], j \in [k'] \right\}$ is a set of the transport plans between $\mu$ and $\nu$. Intuitively, GW aligns pair-wise tree metrics of supports $d_{\TM_X}(x_i, x_{i'})\mid_{i, i'}$ and $d_{\TM_Z}(z_j, z_{j'}) \mid_{j, j'}$ for $\mu$ and $\nu$.

However, one may not scale up GW by straightforwardly using tree metrics as cost metrics as in Eq.~\eqref{equ:GW} like TSW~\citep{le2019tree}. Therefore, we propose to leverage tree structure to align \textit{flows} from a root to each support instead of pair-wise tree metrics of supports, i.e., flows from a support to another, in GW to develop scalable discrepancy for the probability measures. Consequently, we propose two novel discrepancies for probability measures in different tree metric spaces: \FlowAlign~and \DepthAlign, detailed in \S\ref{sec:flow_tree_Gromov} and \S\ref{sec:depth_tree_Gromov} respectively. 


\section{Flow-based Alignment Discrepancy}
\label{sec:flow_tree_Gromov}
In this section, we propose a novel, efficient and scalable discrepancy, named Flow-based Alignment (\FlowAlign), for probability measures in different tree metric spaces. 

\subsection{Definition of \FlowAlign}
\label{subsec:definition_flow_Gromov}
Different from GW for probability measures in different tree metric spaces, \FlowAlign~exploits tree structures for the alignment.   
\begin{definition}
\label{def:flow_based}
The Flow-based Alignment discrepancy between $\mu$ and $\nu$ is defined as follows:
\begin{equation}
\label{equ:FTGW}
\hspace{-6pt} \SFA(\mu, \nu) := \min_{r_x, r_z, T \in \Pi(\mu, \nu)} \sum_{i, j} \left|d_{\TMX}(r_x, x_i) - d_{\TMZ}(r_z, z_j) \right|^2 T_{ij}.
\end{equation}
  \vspace{-16pt}
\end{definition}
Intuitively, \FlowAlign~considers the matching for \textit{flows} from a root to each support for probability measures based on (i) the flow lengths (i.e., tree metrics from a root to each support), and (ii) the flow masses (i.e., weights on supports corresponding to the flows). Moreover, \FlowAlign~also takes into account the root alignment for corresponding tree structures of tree metric spaces since the \textit{flows} depend on which node in the tree structure has a role as the tree root. Therefore, instead of matching \textit{pairs of supports} as in GW for probability measures in different spaces, \FlowAlign~exploits tree structures of the tree metric spaces to align \textit{both tree root and supports} for the probability measures. One should distinguish \FlowAlign~from tree-(sliced)-Wasserstein which directly matches \textit{supports} for probability measures in the \textit{same} tree metric space.

\begin{theorem}\label{thm:FlowTGW}
\FlowAlign~is a pseudo-distance. It satisfies symmetry and the triangle inequality.
\vspace{-5pt}
\end{theorem}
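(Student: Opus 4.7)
The plan is to attack the two assertions of the theorem separately. For symmetry, one observes that the scalar cost $|d_{\TMX}(r_x, x_i) - d_{\TMZ}(r_z, z_j)|^2$ is symmetric in its two arguments, the transposition $T \mapsto T^\top$ is a bijection between $\Pi(\mu, \nu)$ and $\Pi(\nu, \mu)$ preserving mass, and the two root variables swap cleanly when exchanging $\mu$ and $\nu$. Putting these together gives $\SFA(\mu, \nu) = \SFA(\nu, \mu)$, and therefore $\FA(\mu, \nu) = \FA(\nu, \mu)$.

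For the triangle inequality of $\FA = \sqrt{\SFA}$, I would mimic the classical proof of the triangle inequality for the Wasserstein and Gromov-Wasserstein distances, which rests on three ingredients: (i) a gluing construction for transport plans through the intermediate measure, (ii) the scalar triangle inequality in $\RR$, and (iii) Minkowski's inequality in $L^2$. Given a third measure $\eta = \sum_l c_l \delta_{y_l}$ on a tree metric space $(\Tt_Y, d_{\TMY})$, take optimal triples $(r_x^*, r_y^*, T^{12})$ realizing $\SFA(\mu, \eta)$ and $(r_y^{**}, r_z^*, T^{23})$ realizing $\SFA(\eta, \nu)$, and define the glued coupling
\[ T^{13}_{ij} := \sum_{l : c_l > 0} \frac{T^{12}_{il} T^{23}_{lj}}{c_l}, \]
which a routine check of row and column sums shows to lie in $\Pi(\mu, \nu)$.

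Next, for any choice of intermediate root $r_y \in \Tt_Y$, insert the scalar bound $|d_{\TMX}(r_x, x_i) - d_{\TMZ}(r_z, z_j)| \le |d_{\TMX}(r_x, x_i) - d_{\TMY}(r_y, y_l)| + |d_{\TMY}(r_y, y_l) - d_{\TMZ}(r_z, z_j)|$ into the cost $\sum_{i,j,l} |\cdot|^2 T^{12}_{il} T^{23}_{lj}/c_l$ and apply Minkowski's inequality in $L^2$ with respect to those positive weights. The square root of the cost splits into two $L^2$ norms; summing out the redundant index in each piece via $\sum_j T^{23}_{lj} = c_l$ and $\sum_i T^{12}_{il} = c_l$ collapses them to $\sqrt{\sum_{il}|d_{\TMX}(r_x, x_i) - d_{\TMY}(r_y, y_l)|^2 T^{12}_{il}}$ and its analogue on the $(\eta, \nu)$ side, i.e., to the $\SFA$-objective evaluated with roots $(r_x, r_y)$ and $(r_y, r_z)$ respectively. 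Choosing $r_x = r_x^*$, $r_z = r_z^*$ and optimizing $r_y$ then delivers the desired bound $\FA(\mu, \nu) \le \FA(\mu, \eta) + \FA(\eta, \nu)$.

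The step I expect to be the main obstacle is the handling of the single intermediate root $r_y$: the scalar triangle inequality forces the same $r_y$ on both sides of the decomposition, yet the optimal roots $r_y^*$ and $r_y^{**}$ realizing $\SFA(\mu, \eta)$ and $\SFA(\eta, \nu)$ need not coincide in $\Tt_Y$. Picking $r_y = r_y^*$ makes the first summand tight but only upper-bounds the second, and symmetrically for $r_y = r_y^{**}$. Closing this gap, either by exploiting the tree structure of $\Tt_Y$ to argue that the optimal roots for the two sub-problems can in fact be chosen coherently, or by carrying through a residual term that can be reabsorbed into $\SFA(\mu, \eta) + \SFA(\eta, \nu)$, is what turns the sketch into a full proof.
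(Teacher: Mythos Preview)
Your argument mirrors the paper's own proof almost step for step: both establish the aligned-root inequality
\[
\RFFA(\mu,\gamma;r_x,r_z)\;\le\;\RFFA(\mu,\nu;r_x,r_y)+\RFFA(\nu,\gamma;r_y,r_z)
\]
for arbitrary roots via the gluing lemma, the paper finishing with a Cauchy--Schwarz/H\"older bound on the cross term where you invoke Minkowski in $L^2$ (the two are equivalent here). The paper then simply asserts that this inequality ``for any roots'' suffices to conclude $\FA(\mu,\gamma)\le\FA(\mu,\nu)+\FA(\nu,\gamma)$, without ever addressing the very issue you single out as the main obstacle.

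That obstacle is genuine and, as far as I can see, cannot be repaired: forcing a common intermediate root $r_y$ can cost strictly more than $\FA(\mu,\nu)+\FA(\nu,\gamma)$. Take $\Tt_X=\{x_0,x_1\}$ with edge length $1$ and $\mu=\tfrac12\delta_{x_0}+\tfrac12\delta_{x_1}$; $\Tt_Z=\{z_0,z_1\}$ with edge length $3$ and $\gamma=\tfrac12\delta_{z_0}+\tfrac12\delta_{z_1}$; and $\Tt_Y$ the three-node path with $d_{\TMY}(y_0,y_1)=1$, $d_{\TMY}(y_1,y_2)=2$ and $\nu=\tfrac12\delta_{y_0}+\tfrac12\delta_{y_2}$. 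One checks directly that $\FA(\mu,\gamma)=\sqrt{2}$, while $\FA(\mu,\nu)=1$ (attained at $r_y=y_1$) and $\FA(\nu,\gamma)=0$ (attained at $r_y\in\{y_0,y_2\}$), so $\FA(\mu,\gamma)>\FA(\mu,\nu)+\FA(\nu,\gamma)$. Your suspicion was therefore exactly right: the shared-root step is not a technicality to be tidied up but the place where the claimed triangle inequality, as stated with a free minimization over roots, actually fails.
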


See the supplementary (\S A) for the proof of Theorem~\ref{thm:FlowTGW}. When $\SFA(\mu, \nu) = 0$, we can find roots $r_{x}^{*}$ and $r_{z}^{*}$ such that $\tilde{\mu}^{*} \equiv \tilde{\nu}^{*}$ where $\tilde{\mu}^{*} = \sum_{i} a_{i} \delta_{d_{\TMX}(r_{\!x}^{*}, x_i)}$ and $\tilde{\nu}^{*} = \sum_j b_j \delta_{d_{\TMZ}(r_{\!z}^{*}, z_j)}$. It demonstrates that $\mu$ and $\nu$ have similar weights on supports (i.e., flow masses) while the tree metrics of their supports to the corresponding root $r_{x}^{*}$ or $r_{z}^{*}$ (i.e., flow lengths) are identical. When the deepest levels of trees $\Tt_{X}$ and $\Tt_{Z}$ are equal to two, we have the following relation between \FlowAlign~and GW.
\begin{proposition}
\label{prop:flow_to_Gromov}
If the deepest levels of trees $\Tt_{X}$ and $\Tt_{Z}$ are 2, then $\GW(\mu, \nu) \leq 2 \FA(\mu, \nu)$.
\end{proposition}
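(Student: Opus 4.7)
The plan is to feed the $\SFA$ minimizers into the $\GW^2$ objective as an admissible point and then exploit the depth-two hypothesis to decouple the quadratic cost into two flow-matching terms. First I would fix $(r_x^*, r_z^*, T^*)$ achieving $\SFA(\mu,\nu)$ and set $f_i := d_{\TMX}(r_x^*, x_i)$, $g_j := d_{\TMZ}(r_z^*, z_j)$, so that $\SFA(\mu,\nu) = \sum_{i,j}(f_i - g_j)^2 T^*_{ij}$. Since $T^* \in \Pi(\mu,\nu)$ is admissible for the GW program,
\begin{equation*}
\GW^2(\mu,\nu) \;\leq\; \sum_{i,i',j,j'}\bigl(d_{\TMX}(x_i, x_{i'}) - d_{\TMZ}(z_j, z_{j'})\bigr)^2 T^*_{ij} T^*_{i'j'}.
\end{equation*}

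Next I would invoke the depth-two hypothesis: choosing $r_x^*$ to be the root of $\Tt_X$, whose non-root nodes all sit at level two, the unique path between any two distinct supports $x_i, x_{i'}$ must pass through $r_x^*$, so $d_{\TMX}(x_i, x_{i'}) = f_i + f_{i'}$, and the analogous identity holds in $\Tt_Z$. Writing the integrand as $\bigl((f_i - g_j) + (f_{i'} - g_{j'})\bigr)^2$ and applying $(a+b)^2 \leq 2a^2 + 2b^2$ would give the decoupled pointwise bound
\begin{equation*}
\bigl(d_{\TMX}(x_i, x_{i'}) - d_{\TMZ}(z_j, z_{j'})\bigr)^2 \;\leq\; 2(f_i - g_j)^2 + 2(f_{i'} - g_{j'})^2.
\end{equation*}

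Multiplying by $T^*_{ij} T^*_{i'j'}$, summing, and using the marginals $\sum_{i',j'} T^*_{i'j'} = \sum_{i,j} T^*_{ij} = 1$ to collapse each of the two resulting double sums into a single copy of $\SFA$, the chain of inequalities would yield $\GW^2(\mu,\nu) \leq 4\,\SFA(\mu,\nu) = 4\,\FA(\mu,\nu)^2$, and taking square roots would deliver $\GW \leq 2\,\FA$.

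The step I expect to be the main obstacle is the identity $d_{\TMX}(x_i, x_{i'}) = f_i + f_{i'}$, which holds cleanly only for two distinct non-root supports at the deepest level: on the diagonal $i = i'$ the left side vanishes while the right is $2f_i$, and a support coinciding with $r_x^*$ creates analogous slack. Extending the decoupled pointwise bound to these degenerate cases without forfeiting the factor of $4$ is where the depth-two assumption must do real work; I would handle it by retaining only the triangle-inequality upper bound $d_{\TMX}(x_i, x_{i'}) \leq f_i + f_{i'}$ (and its $\Tt_Z$ counterpart) and exploiting the $(i, j) \leftrightarrow (i', j')$ symmetry of the sum so that the surplus on the diagonal is absorbed by the symmetric flow-mismatch terms rather than the factor of $2$ in the elementary inequality.
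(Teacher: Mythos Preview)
Your argument is the paper's argument almost line for line: the paper also invokes $d_{\TMX}(x_i,x_{i'})=d_{\TMX}(x_i,r_x)+d_{\TMX}(r_x,x_{i'})$, applies $(a+b)^2\le 2(a^2+b^2)$, and collapses the product sum to $4\,\SRFFA$. The only difference is that the paper writes that path identity as an unqualified equality for \emph{all} $(i,i')$ and never pauses at the diagonal, whereas you correctly single it out as the delicate step.

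Your proposed repair, however, does not close the gap---and the gap cannot be closed, because the inequality as stated is false. Replacing $d_{\TMX}(x_i,x_{i'})$ by $f_i+f_{i'}$ on the diagonal via the triangle inequality does not dominate $|d_{\TMX}-d_{\TMZ}|^2$, since the analogous substitution simultaneously inflates $d_{\TMZ}$ and the squared difference can move either way. Concretely, let $\Tt_X$ be the star with centre $r_x$ and leaves $x_1,x_2$, each edge of length $1$; let $\Tt_Z$ be the single edge $r_z$--$z_1$ of length $1$; and take $\mu=\tfrac12\delta_{x_1}+\tfrac12\delta_{x_2}$, $\nu=\delta_{z_1}$. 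Both trees have deepest level $2$. With the natural roots every flow length equals $1$, so $\SFA(\mu,\nu)=0$ and $\FA(\mu,\nu)=0$; but the unique plan gives $\GW^2(\mu,\nu)=\sum_{i,i'}d_{\TMX}(x_i,x_{i'})^2\cdot\tfrac14=2$, hence $\GW(\mu,\nu)=\sqrt{2}>0=2\FA(\mu,\nu)$. Here every $(f_i-g_j)^2$ vanishes, so there is no flow-mismatch slack into which the diagonal surplus could be ``absorbed''; the $(i,j)\leftrightarrow(i',j')$ symmetry you appeal to cannot manufacture any. The paper's proof thus shares precisely the gap you flagged.
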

See the supplementary (\S A) for the proof of Proposition~\ref{prop:flow_to_Gromov}. In practical applications without priori knowledge about tree structures for probability measures, $\Tt_{X}$ and $\Tt_{Z}$ are sampled from support data points, e.g., by clustering-based tree metric sampling~\cite{le2019tree}. We argue in the supplementary (\S B) that the farthest-point clustering within the clustering-based tree metric sampling ensures that \FlowAlign~is invariant to rotation and translation.

\subsection{Efficient computation for \FlowAlign}
\label{subsec:compute_flow_tree_Gromov}
A naive implementation for \FlowAlign~$\FA$ has a complexity $\O(N^3 \log N)$ where $N$ is the number of nodes in tree, if one exhaustively searches the optimal pair of roots for $\Tt_X$ and $\Tt_Z$\footnote{More details about (aligned-root) \FlowAlign~complexity are given in \S\ref{subsec:align_flow_tree_gromov}, and in the supplementary (\S C).}. In this section, we present an efficient computation approach which reduces this complexity into nearly $\O(N^2)$.

Consider $\FA$ between $\mu, \nu$ in $\Tt_X, \Tt_Z$ rooted at $r_x, r_z$ respectively. When one changes into the new root $\bar{r}_z$ for tree $\Tt_Z$, as illustrated in Fig.~\ref{fg:DP_FlowTGW}, there are two cases that can happen:

\begin{wrapfigure}{r}{0.46\textwidth} 
  \vspace{-20pt}
  \begin{center}
    \includegraphics[width=0.46\textwidth]{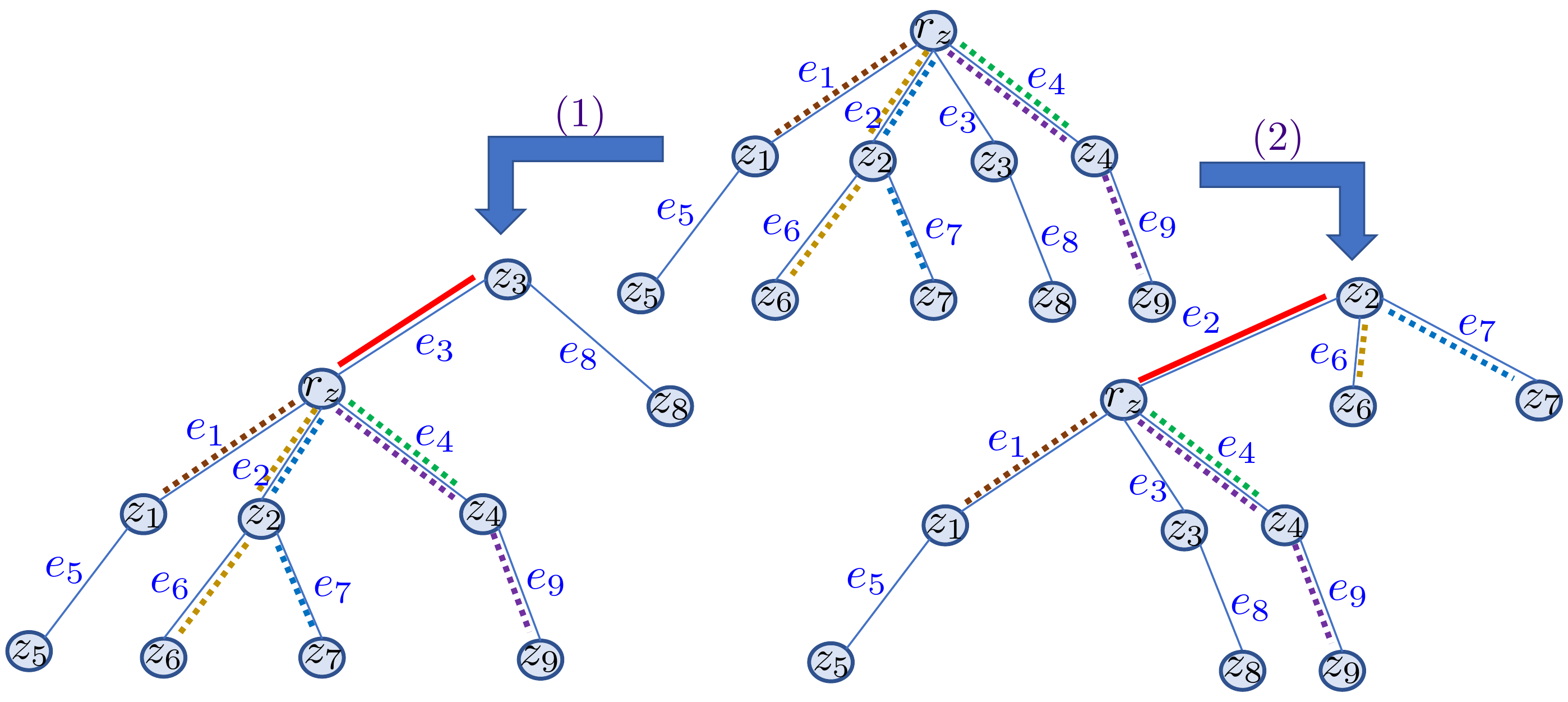}
  \end{center}
  \vspace{-14pt}
  \caption{An illustration for an efficient computation for \FlowTGW. Given $\nu = b_1\delta_{z_1} + b_2\delta_{z_6} + b_3\delta_{z_7} + b_4\delta_{z_4} + b_5\delta_{z_9}$, when the new root $\bar{r}_z = z_3$ ($z_3$ is in the subtree rooted at $z_3$, and not containing any supports of $\nu$), the order of $d_{\TM}(r_z, z_i) \mid_{z_i \in \nu}$ is the same as that of $d_{\TM}(z_3, z_i) \mid_{z_i \in \nu}$, and $d_{\TM}(z_3, z_i) = d_{\TM}(r_z, z_i) + d_{\TM}(r_z, z_3), \forall z_i \in \nu$ (\textbf{Case 1}: the left-bottom tree). When the new root $\bar{r}_z = z_2$ ($z_2$ is in the subtree rooted at $z_2$, and containing supports $\Omega_{\nu} = \{z_6, z_7\}$ in $\nu$), the order of $d_{\TM}(r_z, z_i) \mid_{z_i \in \nu, z_i \notin \Omega_{\nu}}$ is the same as that of $d_{\TM}(z_3, z_i) \mid_{z_i \in \nu, z_i \notin \Omega_{\nu}}$, and $d_{\TM}(z_2, z_i) = d_{\TM}(r_z, z_i) + d_{\TM}(r_z, z_3), \forall z_i \in \nu, z_i \notin \Omega_{\nu}$ (\textbf{Case 2}: the right-bottom tree).}
  \label{fg:DP_FlowTGW}
  \vspace{-35pt}
\end{wrapfigure}

\textbf{Case 1}: $\bar{r}_z$ is in the subtree rooted at a node in $\Ss(r_z)$, which does not contain any supports in $\nu$, illustrated in the left-bottom tree of Fig.~\ref{fg:DP_FlowTGW}. Then, $\forall z_i$ in $\nu$, we have $d_{\Tt_Z}(\bar{r}_z, z_i) = d_{\Tt_Z}(r_z, z_i) + d_{\Tt_Z}(\bar{r}_z, r_z)$. Therefore, the path-length order is preserved.

\textbf{Case 2}: $\bar{r}_z$ is in the subtree rooted at a node in $\Ss(r_z)$, containing some supports in $\nu$, denoted as $\Omega_{\nu}$, illustrated in the right-bottom tree of Fig.~\ref{fg:DP_FlowTGW}. Then, $\forall z_j$ in $\nu$, except $z_i \in \Omega_{\nu}$, we have $d_{\Tt_Z}(\bar{r}_z, z_j) = d_{\Tt_Z}(r_z, z_j) + d_{\Tt_Z}(\bar{r}_z, r_z)$. Thus, the path-length order (except those for $z_i \in \Omega_{\nu}$) is preserved. For supports in $\Omega_{\nu}$ (illustrated in the supplementary (\S B)), there are three following sub-cases:

$\bullet \,$ \textbf{Case 2a}: For supports $z_i \in \Omega_{\nu}$ which $\bar{r}_z \in \Pp(r_z, z_i)$, then $d_{\Tt_Z}(\bar{r}_z, z_i) = d_{\Tt_Z}(r_z, z_i) - d_{\Tt_Z}(r_z, \bar{r}_z)$. So, the path-length order of those supports are preserved.

$\bullet \,$ \textbf{Case 2b}: For supports $z_i \in \Omega_{\nu}$ which $z_i \in \Pp(r_z, \bar{r}_z)$, then $d_{\Tt_Z}(\bar{r}_z, z_i) = d_{\Tt_Z}(r_z, \bar{r}_z) - d_{\Tt_Z}(r_z, z_i)$. Therefore, the path-length order of those supports are reversed.

$\bullet \,$ \textbf{Case 2c}: For supports $z_i \in \Omega_{\nu}$ which $\bar{r}_z \notin \Pp(r_z, z_i)$ and $z_i \notin \Pp(r_z, \bar{r}_z)$, then one needs to find the corresponding closest common ancestor $\zeta_i$ of $\bar{r}_z$ and $z_i$, i.e., $\zeta_i$ is on both paths $\Pp(r_z, \bar{r}_z)$ and $\Pp(r_z, z_i)$, and $ d_{\Tt_Z}(\bar{r}_z, z_i) = d_{\Tt_Z}(r_z, \bar{r}_z) + d_{\Tt_Z}(r_z, z_i) - 2d_{\Tt_Z}(r_z, \zeta_i)$. Note that the path-length order of supports having the same $\zeta_i$ is preserved.

Therefore, one only needs to merge these ordered arrays with the complexity nearly $\O(N)$ (except the degenerated case where each array has only one node).

From the above observation, one may not need to sort for tree metrics between $\bar{r}_z$ and each support in $\nu$ by leveraging the sorted order of the tree metrics between $r_z$ and each support. Moreover, those computational steps can be done separately for each tree. Therefore, the complexity of $\FTGW$ reduces from $\O(N^3 log N)$ into nearly $\O(N^2)$. More details can be seen in the supplementary (\S C).

\subsection{Aligned-root \FlowAlign}\label{subsec:align_flow_tree_gromov}

We consider a special case of \FlowAlign~where roots have been already aligned. Therefore, we can leave out minimization step with roots in Def.~\ref{def:flow_based}, and name it as aligned-root \FlowAlign.

\begin{definition}
\label{def:align_flow_based}
Assume that root $r_x$ in $\Tt_{X}$ is aligned with root $r_z$ in $\Tt_{Z}$. Then, the aligned-root Flow-based Alignment discrepancy between $\mu$ and $\nu$ is defined as follow:
\begin{align}
\SRFFA(\mu, \nu; r_x, r_z) := \min_{T \in \Pi(\mu, \nu)} \sum_{i, j} \left|d_{\TMX}(r_x, x_i) - d_{\TMZ}(r_z, z_j) \right|^2 T_{ij}. \label{equ:RFTGW}
\end{align}
  \vspace{-15pt}
\end{definition}

The $\RFFA$ in Eq.~(\ref{equ:RFTGW}) is equivalent to the univariate Wasserstein distance between $\tilde{\mu} : = \sum_{i} a_i \delta_{d_{\TMX}(r_{\!x}, x_i)}$ and $\tilde{\nu} : = \sum_j b_j \delta_{d_{\TMZ}(r_{\!z}, z_j)}$, which is equal to the integral of the absolute difference between the generalized quantile functions of these two univariate probability distributions \cite{SantambrogioBook} (\S2). Therefore, one only needs to \textit{sort} flow lengths $d_{\TMX}(r_{\!x}, x_i) \mid_i$, and $d_{\TMZ}(r_{\!z}, z_j) \mid_j$ for the computation of $\RFTGW$, i.e., linearithmic complexity. Due to sharing the same structure as a univariate Wasserstein distance, $\RFFA$ inherits the same properties as those of the univariate Wasserstein distance. More precisely, $\RFFA$ is symmetric and satisfies the triangle inequality. Additionally, $\RFFA(\mu, \nu; r_x, r_z) = 0$ is equivalent to $\tilde{\mu} \equiv \tilde{\nu}$. See the supplementary (\S B) for its illustration.


\begin{wrapfigure}{r}{0.45\textwidth}
 \vspace{-30pt}
  \begin{center}
    \includegraphics[width=0.4\textwidth]{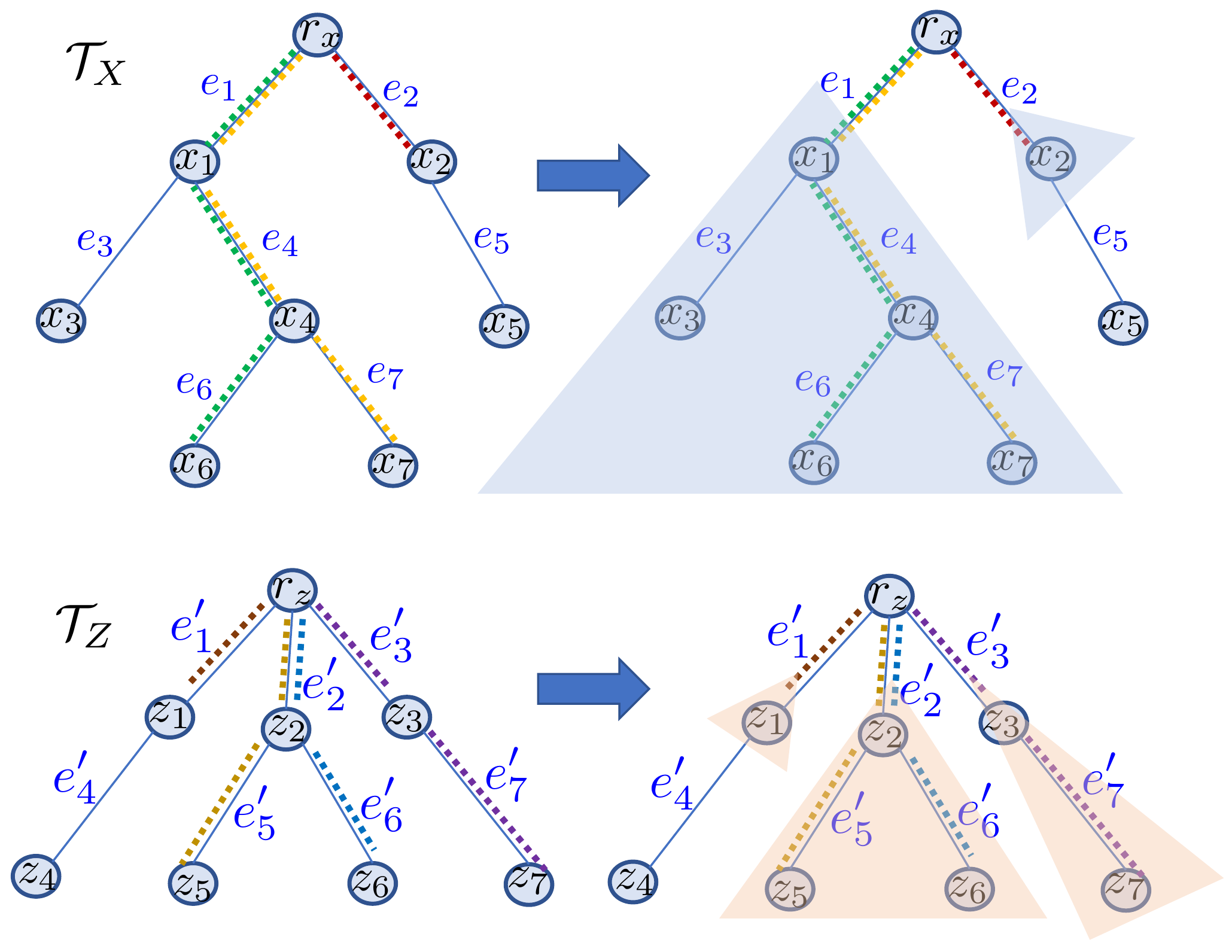}
  \end{center}
  \vspace{-16pt}
  \caption{An illustration for aligned-root \DepthAlign~$\RFDA$ between $\mu = a_1 \delta_{x_6} + a_2 \delta_{x_7} + a_3 \delta_{x_2}$ on $\Tt_X$ and $\nu = b_1 \delta_{z_1} + b_2 \delta_{z_5} + b_3 \delta_{z_6} + b_4 \delta_{z_7}$ on $\Tt_Z$. In $\RDTGW$, we consider the optimal alignment at each depth level $h$. At $h=1$, root $r_x$ is trivially aligned for root $r_z$. Since $r_x, r_z$ have their child nodes, the alignment $(r_x, r_z)$ is recursive into $h=2$. For $\mu$ in $\Tt_X$, $r_x$ has $2$ subtrees rooted at $x_1, x_2$, considered as ``leaves". Thus, $V(\Tt_{r_x}^2) = \left\{r_x, x_1, x_2\right\}$, and $\mu_{\Tt_{r_x}^2} = (a_1 + a_2)\delta_{x_1} + a_3 \delta_{x_2}$. Similarly, for $\nu$ in $\Tt_Z$, $\nu_{\Tt_{r_z}^2} = b_1 \delta_{z_1} + (b_2 + b_3)\delta_{z_2} + b_4 \delta_{z_3}$. The recursive procedure is repeated until the deepest level of the lower tree where only simple cases exist.}
  \label{fg:DepthBased_GTW}
  \vspace{-28pt}
\end{wrapfigure}

Note that in practical applications, we usually do not have priori knowledge about tree structures for probability measures. Therefore, we need to sample tree metrics for each support data space. Moreover, we can \textit{directly sample aligned-root tree metrics}, e.g., by choosing means of support data distributions as roots when using the clustering-based tree metric sampling~\cite{le2019tree}. Consequently, we can reduce the complexity of \FlowTGW~by using aligned-root \FlowTGW.

\textbf{Aligned-root \FlowAlign~barycenter.} The aligned-root \FlowAlign~can be handily used for a barycenter problem, especially in large-scale applications. Given $m$ probability measures $\mu_{i} \mid_{i \in [m]}$ in different tree metric spaces $(\Tt_{X_{i}}, d_{\TM_{\!X_{i}}}) \mid_{i \in [m]}$ with aligned-roots $r_{x_i} \mid_{i \in [m]}$ respectively, and corresponding weights $p_{i} \mid_{i \in [m]}$, the aligned-root \FlowAlign~barycenter aims to find a flow-based tree structure $\Delta_{\bar{\mu}} := \left\{ d_{\TMX}(r_x, x_i), a_i\right\}_{i \in [\texttt{k}]}$ of an optimal probability measure $\bar{\mu}$ with at most $\texttt{k}$ supports in $(\Tt_X, d_{\TMX})$ that takes the form:
\begin{align}
\Delta_{\bar{\mu}} \in \mathop{ \arg \min} \limits_{\Delta_{\hat{\mu}}} \biggr(\sum_{i = 1}^{m} p_{i}\SRFFA(\hat{\mu}, \mu_{i}; r_x, r_{x_{i}}) \biggr), \label{eq:bary_flow_based}
\end{align}
where the roots $r_{x_{i}}$ in $\Tt_{X_{i}}$ $\mid_{i \in [m]}$ are aligned with root $r_{x}$ in $\Tt_X$. The barycenter problem in Equ.~(\ref{eq:bary_flow_based}) is equivalent to the free-support univariate Wasserstein barycenter efficiently solved, e.g., by using Alg. $2$ in \cite{Cuturi-2014-Fast}.

\section{Depth-based Alignment Discrepancy}\label{sec:depth_tree_Gromov}

\FlowAlign~only focuses on flows from a root to each support and tree root alignment, but ignores the depth level of supports in trees. In this section, we take into account the depth level of supports, and propose Depth-based Alignment (\DepthAlign) discrepancy $\DA$ by considering the alignment for flows hierarchically for each depth level along the tree structures. We first introduce some necessary definitions to define $\DA$. Recall that, $\Ss(x)$ is a set of child nodes of $x$ in $\Tt$.
\begin{definition}\label{def:2-depth-level-tree}
Given node $x$ in $\Tt$, a 2-depth-level tree $\Tt(x, \Ss(x))$, or shortened as $\Tt_x^2$, is defined in $\Tt$ rooted at $x$, i.e., root $x$ is at depth level $1$, and $\left| \Ss(x) \right|$ subtrees rooted at $\bar{x} \mid_{\bar{x} \in \Ss(x)}$, considered as ``leaves" at depth level $2$ in $\Tt_{x}^2$.
\end{definition}

Let $V(\Tt_{x}^{2})$ be the set of vertices of $\Tt_x^2$. Following Def. \ref{def:2-depth-level-tree}, $V(\Tt_{x}^{2})$ contains $x$ and all $\bar{x} \in \Ss(x)$. Moreover, given $\mu$ in $\Tt$, we have a corresponding $\mu_{\Tt_{x}^{2}}$ in $\Tt_{x}^{2}$, defined as $\mu_{\Tt_{x}^{2}} = \sum_i \bar{a}_i \delta_{\bar{x}_i}$ where $\bar{x}_i \in V(\Tt_{x}^{2})$, and $\bar{a}_i = \mu(\Gamma(\bar{x}_i))/ \mu(\Gamma(x))$ if $\bar{x}_i \ne x$, otherwise  $\bar{a}_i = 1 - \left[\sum_{\bar{x}_j \in \Ss(x)} \mu(\Gamma(\bar{x}_j))\right]/ \mu(\Gamma(x))$. 

In order to define \DepthAlign, we start with its special case when roots are aligned.

\begin{definition}
\label{def:depth_based}
Assume that root $r_x$ in $\Tt_{X}$ is aligned with root $r_z$ in $\Tt_{Z}$. Then, the aligned-root Depth-based Alignment between $\mu$ and $\nu$ is defined as follows:
\begin{align}
\RFDA(\mu, \nu; r_x, r_z) := \sum_{h} \sum_{(x, z) \in \Mm_{h-1}} T_{h-1}^{*}(x, z) \RFFA(\mu_{\Tt_{x}^{2}}, \nu_{\Tt_{z}^{2}}; x, z), \label{equ:RDTGW}
\end{align}
where $h$ is the considered depth level, starting from $2$ to the deepest level of the lower tree between $\Tt_X$ and $\Tt_Z$; $\Mm_{h}$ is a set of optimal aligned pairs at the depth level $h$ where $\Mm_{1} = \left\{(r_x, r_z)\right\}$; $T_{h}^{*}(x, z)$ is the optimal matching mass for the pair $(x, z)$ at the depth level $h$ where $T_{1}^{*}(r_x, r_z)=1$.
\end{definition}
Intuitively, at each depth level $h$, we consider the alignment for the corresponding 2-depth-level trees. Note that the 2-depth-level tree structures are at the same depth level $h$ for both $\Tt_X$ and $\Tt_Z$, and one can consider $\RFFA$ for such alignment. Moreover, at $h=1$, $\RFDA$ trivially matches $r_x$ to $r_z$ with optimal matching mass $1$. Thus, the matching procedure is recursive along all depth levels in trees. The simple case of the recursive procedure is that either at least one node of considered pair does not have child nodes, or sum of weights of child nodes in the corresponding 2-depth-level tree is equal to $0$.


When roots of trees $\Tt_{X}$ and $\Tt_{Z}$ are not aligned yet, we optimize the root alignment as follow:
\begin{equation}
\label{equ:DTGW}
    \DA(\mu, \nu) : = \min_{r_x, r_z} \RFDA(\mu, \nu; r_x, r_z),
\end{equation}
which is referred to as \DepthAlign. Similar to \FlowAlign, we have a following theorem: 

\begin{theorem}\label{thm:DepthTGW}
\DepthTGW~is a pseudo-distance. It satisfies symmetry and the triangle inequality.
\vspace{-8pt}
\end{theorem}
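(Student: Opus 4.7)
The proof has two parts, symmetry and the triangle inequality.

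For symmetry, I would start from the fact that $\RFFA$ is symmetric because it reduces to a univariate Wasserstein distance between pushforward distributions on $\mathbb{R}$ (\S\ref{subsec:align_flow_tree_gromov}). Unrolling the recursion in Eq.~(\ref{equ:RDTGW}), at each depth $h$ the set $\Mm_h$ of matched pairs and the optimal transport masses $T_h^{*}$ are invariant under swapping $\mu$ and $\nu$. Hence $\RFDA(\mu, \nu; r_x, r_z) = \RFDA(\nu, \mu; r_z, r_x)$, and taking the minimum over roots yields $\DA(\mu, \nu) = \DA(\nu, \mu)$.

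For the triangle inequality, the plan is to first establish the aligned-root form
\[
\RFDA(\mu, \nu; r_x, r_z) \;\leq\; \RFDA(\mu, \eta; r_x, r_y) + \RFDA(\eta, \nu; r_y, r_z)
\]
for every triple of roots $(r_x, r_y, r_z)$, where $\eta$ is a third measure on a tree $\Tt_Y$. I would proceed by induction on the maximum depth of the three trees. The base case (depth two) is exactly the triangle inequality of $\RFFA$, inherited from the univariate $W_2$. For the inductive step, at each matched pair $(x,z)$ at depth $h-1$ I would: (i) apply the $\RFFA$ triangle inequality at the 2-depth level involving $\mu_{\Tt_x^2}$, $\eta_{\Tt_y^2}$, $\nu_{\Tt_z^2}$; (ii) use the gluing lemma to compose the optimal transport plans of the $(\mu, \eta)$ and $(\eta, \nu)$ recursions into a feasible, generally suboptimal plan for the $(\mu, \nu)$ recursion at that level; (iii) invoke the inductive hypothesis on each pair of matched descendant subtrees; and (iv) aggregate the per-level bounds with the induced $T_h^{*}$ weights, using that $\RFDA$ decomposes additively across disjoint matched subtrees.

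To lift from $\RFDA$ to $\DA$, I would take the minimum over $r_x, r_z$ on both sides. For any $r_y$ this produces
\[
\DA(\mu, \nu) \;\leq\; \min_{r_x}\RFDA(\mu, \eta; r_x, r_y) + \min_{r_z}\RFDA(\eta, \nu; r_y, r_z),
\]
and I would then minimize the right-hand side over $r_y$. This is the step I expect to be hardest. The outcome is a \emph{constrained} sum in which the intermediate root $r_y$ is shared across the two terms, whereas $\DA(\mu, \eta) + \DA(\eta, \nu)$ allows independent minimizers for $r_y$ on the two sides; since the constrained minimum is pointwise at least the unconstrained one, a naive composition goes the wrong way. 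To close the gap I expect to need either (i) a strengthened aligned-root inequality in which the intermediate roots on the two summands may be chosen independently, or (ii) a structural property of the recursion that forces the two inner $r_y$-minimizers to coincide. Identifying and justifying this property is the crux of the proof.
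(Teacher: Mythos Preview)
Your overall plan mirrors the paper's proof: establish the aligned-root inequality
\[
\RFDA(\mu,\nu;r_x,r_z)\;\le\;\RFDA(\mu,\eta;r_x,r_y)+\RFDA(\eta,\nu;r_y,r_z)
\]
for every triple of roots, and obtain it by working depth level by depth level with the gluing lemma applied to the optimal matching masses. The paper does this as a direct per-level argument (its Eq.~\eqref{eq:key_depth} and the display after it) rather than as an induction on depth, but the content is the same: glue the optimal plans $\bar{T}^{*}_{h}$ and $\tilde{T}^{*}_{h}$ into a feasible plan at level $h$, then use the already-established triangle inequality for $\RFFA$ on each matched pair.

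The issue you single out as the ``crux''---that the intermediate root $r_y$ is \emph{shared} across the two right-hand terms, so the constrained minimum $\min_{r_y}[\,\min_{r_x}\RFDA(\mu,\eta;r_x,r_y)+\min_{r_z}\RFDA(\eta,\nu;r_y,r_z)\,]$ can strictly exceed $\DA(\mu,\eta)+\DA(\eta,\nu)$---is a genuine logical gap, and you are right that naive minimization goes the wrong way. The paper, however, does \emph{not} resolve this step: it simply asserts ``it is sufficient to demonstrate'' the aligned-root inequality for all roots (both here and in the analogous proof for \FlowAlign) and never returns to justify the passage to $\DA$. Neither of your proposed fixes (a strengthened inequality with independent intermediate roots, or a structural coincidence of minimizers) appears in the paper. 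So on this point your sketch is not less complete than the paper's argument; you have simply been more careful in flagging where the reduction is unproven.
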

See the supplementary (\S A) for the proof of Theorem~\ref{thm:DepthTGW}. When $\DA(\mu, \nu)=0$, we can find roots $r_x^{*}$ and $r_z^{*}$ such that all the hierarchical corresponding $\RFFA(\mu_{\Tt_{\cdot}^{2}}, \nu_{\Tt_{\cdot}^{2}}; \cdot, \cdot)$ for each depth level along the trees are equal to $0$. It demonstrates that $\mu$ and $\nu$ have similar weights on supports while their supports have similar depth levels, and for each depth level, the tree metrics of supports in the corresponding $\mu_{\Tt_{\cdot}^{2}}, \nu_{\Tt_{\cdot}^{2}}$ to the 2-depth-level-tree roots are identical, i.e., corresponding weight edges are  identical.


\section{Tree-sliced Variants by Sampling Tree Metrics}\label{sec:discussion}


Similar to TSW~\cite{le2019tree}, computing (aligned-root) \FlowAlign~/ \DepthAlign~requires to choose or sample tree metrics for each space of supports. We use fast adaptive methods, e.g., clustering-based tree metric sampling~\cite{le2019tree}, to sample tree metrics, and further propose their tree-sliced variants by averaging the corresponding (aligned-root) \FlowAlign~/ \DepthAlign~using those random sampled tree metrics.

\begin{definition}\label{def:tree_sliced_GW}
Given $\mu, \nu$ supported on a set in which tree metric spaces $\left\{(\Tt_{X_i}, d_{\TM_{\!X_i}}) \mid i \in [n] \right\}$ and $\left\{(\Tt_{Z_i}, d_{\TM_{\!Z_i}}) \mid i \in [n] \right\}$ can be defined respectively, the tree-sliced variants of (aligned-root) \FlowAlign~/ \DepthAlign~is defined as an average of corresponding (aligned-root) \FlowAlign~/ \DepthAlign~for $\mu, \nu$ on $(\Tt_{X_i}, d_{\TM_{X_i}})$, and $(\Tt_{Z_i}, d_{\TM_{Z_i}})$ respectively.
\end{definition}

As discussed in \cite{le2019tree}, the average over different random tree metrics can reduce quantization effects or clustering sensitivity problems in which data points may be partitioned or clustered to adjacent but different hypercubes or clusters respectively in tree metric sampling. Moreover, the complexity of tree metric sampling is negligible. Indeed, for clustering-based tree metric sampling, its complexity is $\O(H_{\Tt} m \log \kappa)$ when one fixes the same number of clusters $\kappa$ for the farthest-point clustering~\citep{gonzalez1985clustering} and sets $H_{\Tt}$ for the predefined deepest level of tree $\Tt$, and $m$ is the number of input data points.

\begin{remark}\label{remark:tree_GW}
For specific applications with priori knowledge about tree metrics for probability measures, one can apply \FlowAlign, or consider \DepthAlign~if the known tree structure is important for the applications. Moreover, if roots of those known tree metrics are already aligned, one can use the corresponding aligned-root formulations to reduce the complexity. For general applications without priori knowledge about tree metrics for probability measures, one can directly sample aligned-root tree metrics, e.g., by choosing a mean of support data as its root for the clustering-based tree metric sampling~\cite{le2019tree}, and use the aligned-root formulations for an efficient computation.
\end{remark}

\section{Experiments}\label{sec:experiment}

We evaluate our proposed discrepancies for quantum chemistry and document classification with \textit{randomly linear transform} word embeddings. We also carry out the large-scale \FlowTGW~barycenter problem within $k$-means clustering for point clouds of handwritten digits in \texttt{MNIST} dataset \textit{rotated arbitrarily} in the plane as in~\cite{peyre2016gromov}. 





\textbf{Setup.} We consider two baselines: (i) sliced GW (SGW) \cite{vayer2019sliced} and (ii) entropic GW (EGW) \cite{peyre2016gromov}. In all of our experiments, we do not have prior knowledge about tree metrics for probability measures. Therefore, we sample aligned-root tree metrics from support data points by applying the clustering-based tree metric sampling \cite{le2019tree} where means of support data points are chosen as tree roots. Consequently, we can leverage the aligned-root formulations for both \FlowAlign~(FA) and \DepthAlign~(DA) to reduce their complexity. For SGW, we follow Vayer et al. \cite{vayer2019sliced} to add artifact zero-padding for discrete measures having different numbers of supports, and use the binomial expansion to reduce its complexity. For EGW, we use the entropic regularization to optimize transport plan, but exclude it when computing GW, which gives comparative or better performances than those of standard EGW. We also apply the log-stabilized Sinkhorn \cite{schmitzer2019stabilized}. We observe that the quality of EGW is better when entropic regularization becomes smaller, but the computation is considerably slower. In our experiments, the computation for EGW is either usually blown up, or too slow for evaluation when entropic regularization is less than or equal $1$. We run experiments with Intel Xeon CPU E7-8891v3 (2.80GHz), and 256GB RAM. Reported time consumption for all methods has already included their corresponding preprocessing, e.g., tree metric sampling for \FlowAlign~and \DepthAlign, or one-dimensional projection for SGW.

\begin{wrapfigure}{r}{0.38\textwidth}
  \vspace{-50pt}
  \begin{center}
    \includegraphics[width=0.38\textwidth]{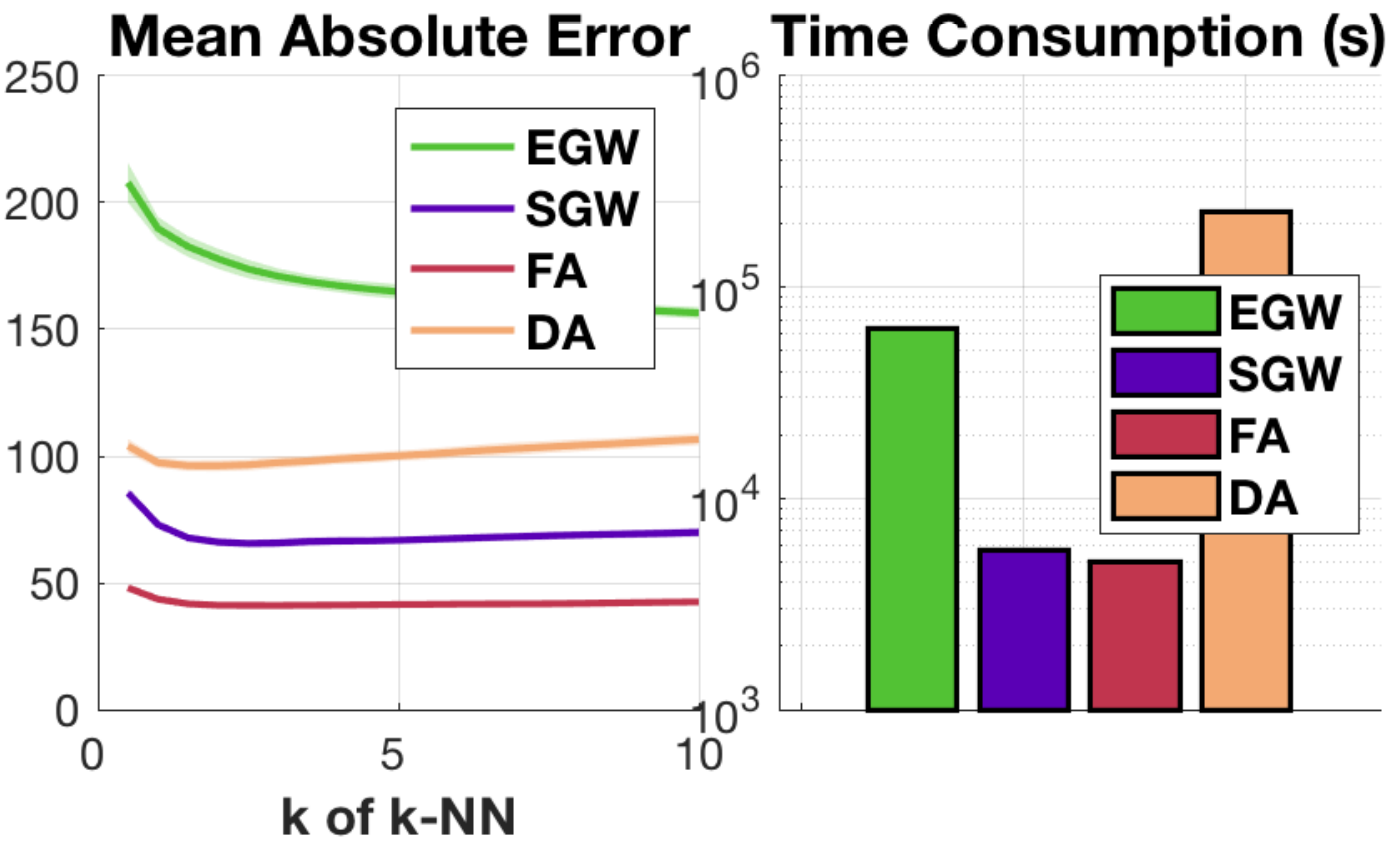}
  \end{center}
  \vspace{-10pt}
  \caption{MAE and time consumption of $k$-NN regression on \texttt{qm7} for EGW (eps=5), SGW (10 slices), FA (10 tree-slices), and DA (1 tree-slice).}
  \label{fg:exp_quantum}
 \vspace{-20pt}
\end{wrapfigure}

\subsection{Applications}

\textbf{Quantum chemistry.} We consider a regression problem on molecules for \texttt{qm7} dataset as in \cite{peyre2016gromov}. The task is to predict atomization energies for molecules based on similar labeled molecules instead of estimating them through expensive numerical simulations \cite{peyre2016gromov, rupp2012fast}. For simplicity, we only used the relative locations in $\RR^3$ of atoms in molecules, \textit{without information about atomic nuclear charges} as the experiments in \cite{peyre2016gromov, rupp2012fast}. We randomly split $80\%/20\%$ for training and test sets, and repeat $20$ times. Following Peyr{\'e} et al. \cite{peyre2016gromov}, we use $k$-nearest neighbor ($k$-NN) regression. 



\textbf{Document classification with non-registered word embeddings.} We also evaluate our proposed discrepancies for document classification with non-registered word embeddings in \texttt{TWITTER}, \texttt{RECIPE}, \texttt{CLASSIC}, and \texttt{AMAZON} datasets. For each document in these datasets, we use a \textit{randomly linear transform} for $word2vec$ word embedding \cite{mikolov2013distributed}, pre-trained on Google News\footnote{https://code.google.com/p/word2vec}, containing about $3$ million words/phrases. $word2vec$ maps those words/phrases into $\RR^{300}$. Following \cite{kusner2015word, le2019tree}, we remove SMART stop words \cite{salton1988term}, and drop words in documents if they are not in the pre-trained $word2vec$. We randomly split $80\%/20\%$ for training and test sets, and repeat $20$ times. 


\textbf{Performance results, time consumption and discussions.} The results of averaged mean absolute value (MAE) for different $k$ in $k$-NN regression, and time consumption of quantum chemistry in \texttt{qm7} dataset are illustrated in Fig.~\ref{fg:exp_quantum}, while the results of averaged accuracy for different $k$ in $k$-NN, and time consumption of document classification with non-registered word embeddings in \texttt{TWITTER}, \texttt{RECIPE}, \texttt{CLASSIC}, and \texttt{AMAZON} datasets are shown in Fig.~\ref{fg:exp_document}.

The computational time of \FlowAlign~is at least comparative to that of SGW, and several-order faster than that of EGW. Especially, in \texttt{CLASSIC}, it took $12$ minutes for \FlowAlign~($10$ slices), while $6.8$ hours for SGW ($10$ slices), and $3.3$ days for EGW (entropic regularization eps=10). Moreover, the performances of \FlowAlign~compare favorably with other baselines, except EGW in RECIPE dataset. \FlowAlign~performs better when the number of tree slices is increased, but its time consumption is also increased linearly. We show this trade-off on \texttt{qm7} in Fig.~\ref{fg:exp_quantum_FA_slice}. For \DepthAlign, its performances are comparative with other baselines. However, \DepthAlign~is slow in practice due to solving a large number of sub-problems, i.e., aligned-root \FlowAlign~between corresponding 2-depth-level trees. For EGW, its performances are improved when the entropic regularization small enough for the problems, but its computational time is considerably increased which makes EGW unsuitable for large-scale applications. The value of entropic regularization is important for performances of EGW, e.g., EGW performs well in \texttt{RECIPE}, and is comparative with other approaches on other datasets. For SGW, its computational time is slow down when document lengths are large, e.g., in \texttt{AMAZON} dataset, since it requires to use extra artificial zeros padding and uniform weights for probability measures with different number of supports (i.e., documents with different lengths), while other approaches work with an original number of supports (i.e., unique words in documents), and general weights (i.e., frequencies of unique words) for supports in probability measures.

 \begin{figure}
  \begin{center}
    \includegraphics[width=0.65\textwidth]{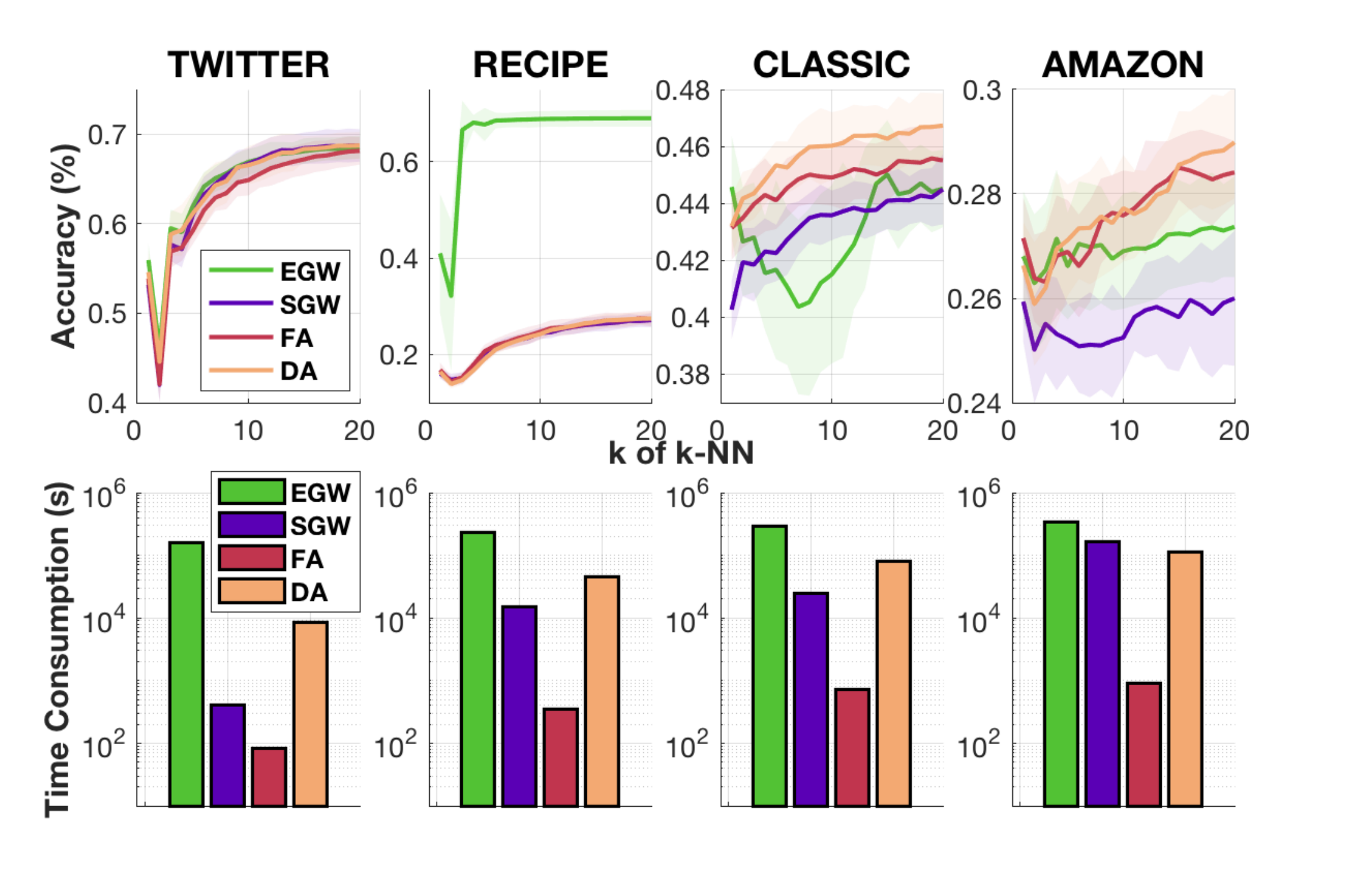}
  \end{center}
  \vspace{-10pt}
  \caption{Accuracy and time consumption of $k$-NN classification on document datasets for EGW (eps=5 in \texttt{TWITTER}, and eps=10 in others), SGW (10 slices), FA (10 tree-slices), and DA (1 tree-slice).}
  \label{fg:exp_document}
 \vspace{-15pt}
\end{figure}


\begin{wrapfigure}{r}{0.4\textwidth}
  \vspace{-18pt}
  \begin{center}
    \includegraphics[width=0.4\textwidth]{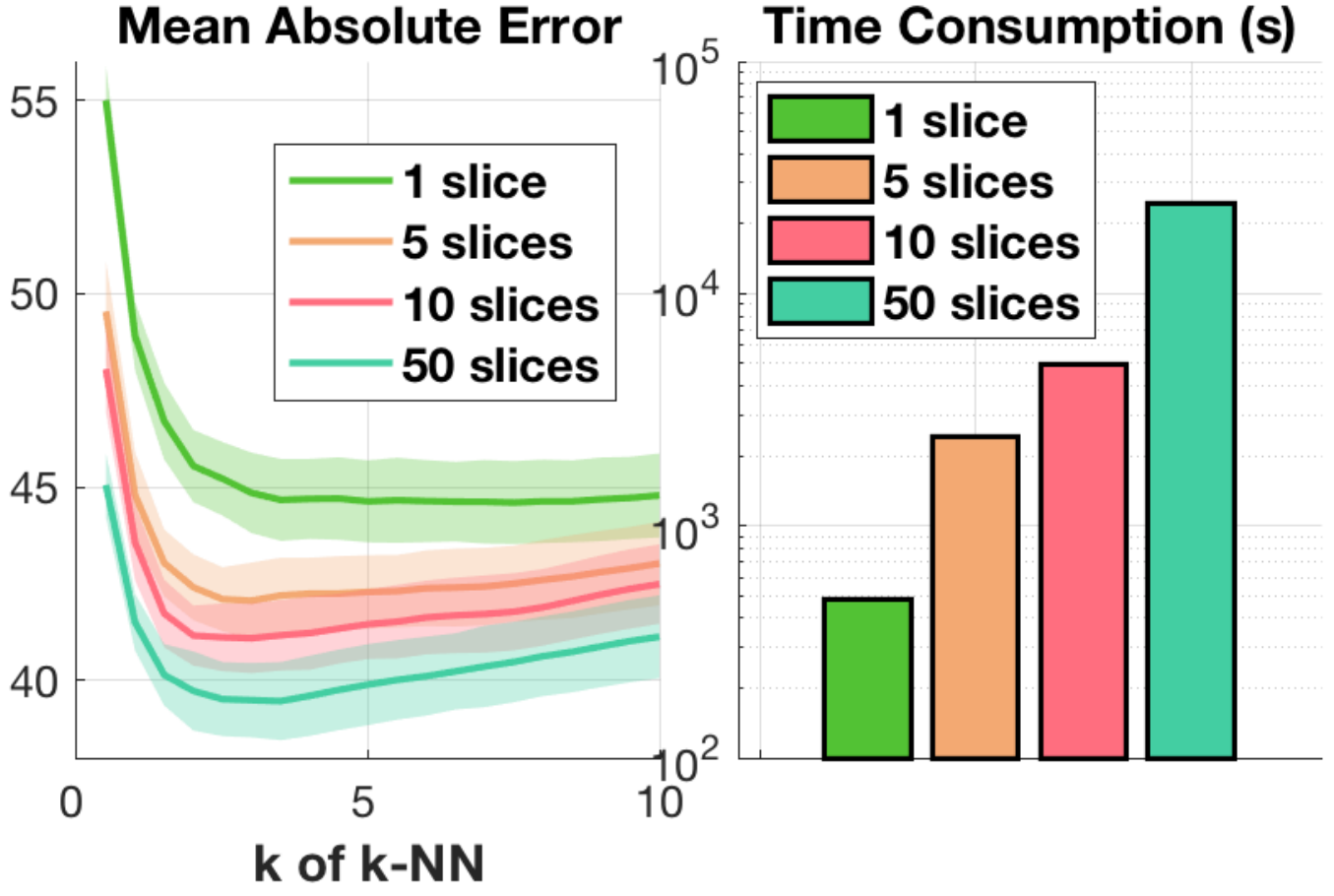}
  \end{center}
  \vspace{-10pt}
  \caption{Trade-off between performances and time consumption for \FlowAlign~w.r.t. tree slices in \texttt{qm7}.}
  \label{fg:exp_quantum_FA_slice}
 \vspace{-15pt}
\end{wrapfigure}

Similar to tree metric sampling for TSW \cite{le2019tree}, we also observe that the clustering-based tree metric sampling for \FlowAlign~and \DepthAlign~is fast and its time consumption is negligible compared to that of either \FlowAlign~or \DepthAlign. For examples, for each tree metric sampling with the suggested parameters (e.g., the predefined deepest level $H_{\Tt}=6$, and the number of clusters $\kappa=4$ for the farthest-point clustering), it only took about $0.4, 1.5, 11.0, 17.5, 20.5$ seconds for \texttt{qm7}, \texttt{TWITTER}, \texttt{RECIPE}, \texttt{CLASSIC}, and \texttt{AMAZON} datasets respectively.

Many further experimental results about performances and time consumptions of the discrepancies with different parameters (e.g., entropic regularization in EGW, and number of (tree) slices in SGW, \FlowAlign~and \DepthAlign), and clustering-based tree metric sampling with different parameters (i.e., $H_{\Tt}, \kappa$); empirical relations among the discrepancies for all mentioned datasets can be seen in the supplementary (\S D, \S G).





\subsection{Large-scale \FlowTGW~barycenter within $k$-means clustering}

We applied \FlowAlign~barycenter (\S\ref{subsec:align_flow_tree_gromov}), using Algorithm $2$ in \cite{Cuturi-2014-Fast} where we set $\texttt{k}=100$ for the maximum number of supports in barycenters, into a larger machine learning pipeline such as $k$-means clustering on \texttt{MNIST} dataset where point clouds of handwritten digits are \textit{rotated arbitrarily} in the plane as in \cite{peyre2016gromov}. For each handwritten digit, we randomly extracted $6000$ point clouds. We evaluated $k$-means with \FlowTGW~for $60000, 120000, 240000, 480000$, and $960000$ handwritten-digit point clouds where each handwritten digit is \textit{randomly rotated} $1, 2, 4, 8$, and $16$ times respectively. Furthermore, we grouped the handwritten digit $6$ and digit $9$ together due to applying random rotation. We used $k$-means++ initialization technique \cite{arthur2007k}, set $20$ for the maximum iterations of $k$-means, and repeated $10$ times with different random seeds for $k$-means++ initialization. In Fig.~\ref{fg:kmeans_flowtreeGW}, we show the averaged time consumption and $F_{\beta}$ measure~\cite{manning2008introduction} where $\beta$ is chosen as in~\cite{le2015unsupervised} for the results of $k$-means clustering with \FlowAlign. Note that, in these settings, the barycenter problem from EGW has extremely slow running time. A small experimental setup for performance comparison can be found in the supplementary (\S D).

\begin{figure}
 \vspace{-20pt}
  \begin{center}
    \includegraphics[width=0.45\textwidth]{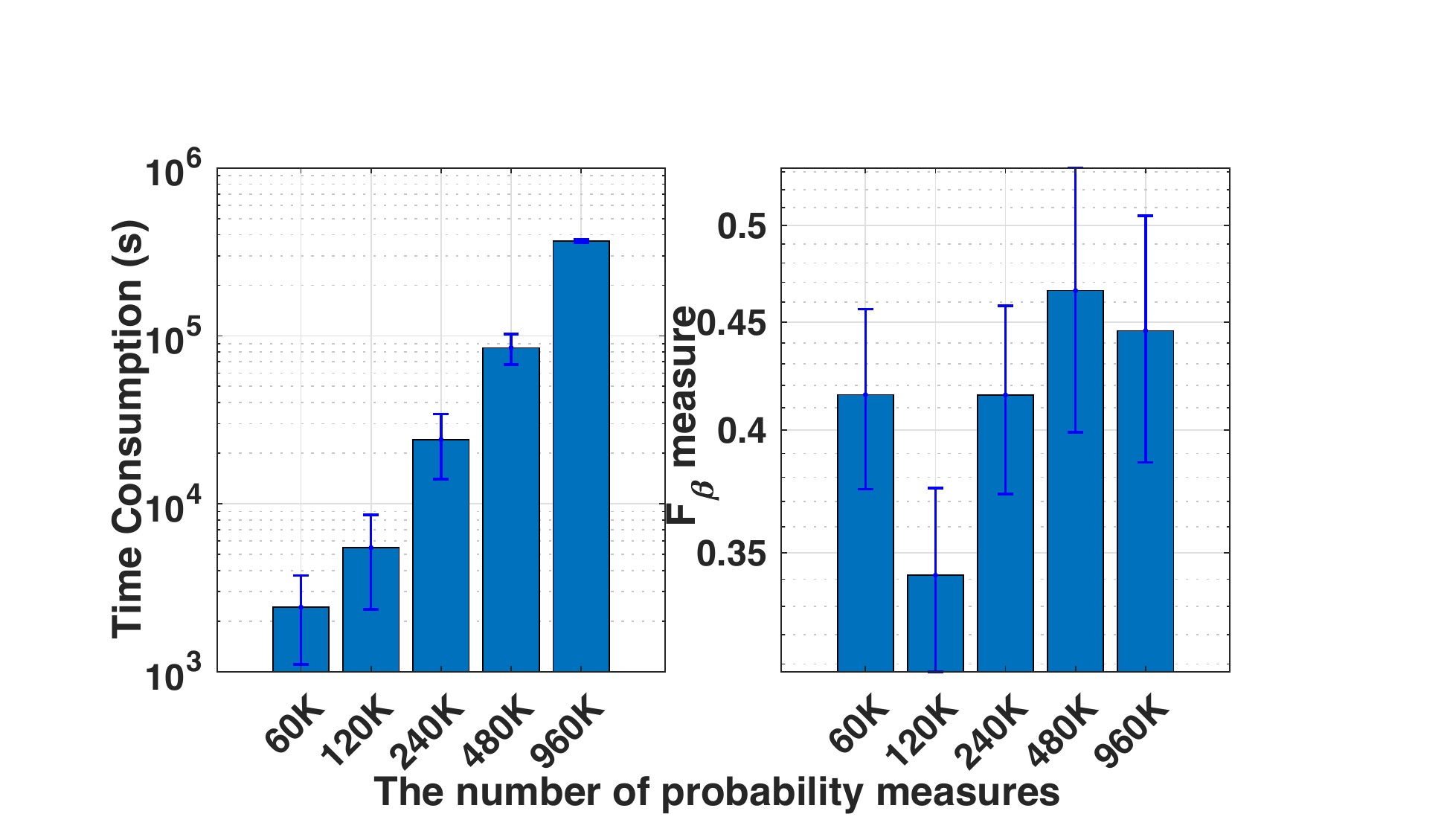}
  \end{center}
  \vspace{-12pt}
  \caption{Time consumption and $F_{\beta}$ measure for $k$-means clustering with \FlowAlign~for \textit{randomly rotated} \texttt{MNIST}.}
  \label{fg:kmeans_flowtreeGW}
  \vspace{-14pt}
\end{figure}

\section{Conclusion}
\label{sec:discuss}
We proposed in this paper two novel discrepancies \FlowAlign~and \DepthAlign~for probability measures whose supports are in different metric spaces by considering a particular family of cost metrics, namely tree metrics. By leverage a tree structure, we proposed to align flows from a root to each support instead of pair-wise tree metrics of supports in GW for probability measures. The proposed \FlowAlign~is not only fast, but its performances also compare favorably with other baseline approaches. Moreover, the \FlowAlign~can be applied for large-scale applications (e.g., a million probability measures) which are usually prohibited for (entropic) GW. The questions about sampling efficiently tree metrics from support data points for the proposed discrepancies, or using them for more involved parametric inference are left for the future work.

\balance
\bibliographystyle{plain}
\bibliography{neurips2020}

\begin{thebibliography}{10}

\bibitem{ahuja1988network}
Ravindra~K Ahuja, Thomas~L Magnanti, and James~B Orlin.
\newblock {\em Network flows}.
\newblock Cambridge, Mass.: Alfred P. Sloan School of Management,
  Massachusetts, 1988.

\bibitem{Altschuler-2017-Near}
J.~Altschuler, J.~Weed, and P.~Rigollet.
\newblock Near-linear time approximation algorithms for optimal transport via
  {S}inkhorn iteration.
\newblock In {\em Advances in neural information processing systems}, pages
  1964--1974, 2017.

\bibitem{altschuler2019massively}
Jason Altschuler, Francis Bach, Alessandro Rudi, and Jonathan Niles-Weed.
\newblock Massively scalable {S}inkhorn distances via the {N}ystr{\"o}m method.
\newblock In {\em Advances in Neural Information Processing Systems}, pages
  4429--4439, 2019.

\bibitem{Alvarez-2018-Gromov}
David Alvarez-Melis and Tommi Jaakkola.
\newblock Gromov-{W}asserstein alignment of word embedding spaces.
\newblock In {\em Proceedings of the Conference on Empirical Methods in Natural
  Language Processing (EMNLP)}, pages 1881--1890, 2018.

\bibitem{Arjovsky-2017-Wasserstein}
M.~Arjovsky, S.~Chintala, and L.~Bottou.
\newblock Wasserstein generative adversarial networks.
\newblock In {\em International conference on machine learning}, pages
  214--223, 2017.

\bibitem{arthur2007k}
David Arthur and Sergei Vassilvitskii.
\newblock k-means++: The advantages of careful seeding.
\newblock In {\em Proceedings of the eighteenth annual ACM-SIAM symposium on
  Discrete algorithms}, pages 1027--1035, 2007.

\bibitem{bartal1996probabilistic}
Yair Bartal.
\newblock Probabilistic approximation of metric spaces and its algorithmic
  applications.
\newblock In {\em Proceedings of 37th Conference on Foundations of Computer
  Science}, pages 184--193, 1996.

\bibitem{bartal1998approximating}
Yair Bartal.
\newblock On approximating arbitrary metrices by tree metrics.
\newblock In {\em ACM Symposium on Theory of Computing (STOC)}, volume~98,
  pages 161--168, 1998.

\bibitem{bhushan2018deepjdot}
Bharath Bhushan~Damodaran, Benjamin Kellenberger, R{\'e}mi Flamary, Devis Tuia,
  and Nicolas Courty.
\newblock Deepjdot: Deep joint distribution optimal transport for unsupervised
  domain adaptation.
\newblock In {\em Proceedings of the European Conference on Computer Vision
  (ECCV)}, pages 447--463, 2018.

\bibitem{bonneel2016wasserstein}
Nicolas Bonneel, Gabriel Peyr{\'e}, and Marco Cuturi.
\newblock Wasserstein barycentric coordinates: histogram regression using
  optimal transport.
\newblock {\em ACM Trans. Graph.}, 35(4):71--1, 2016.

\bibitem{bunne2019learning}
Charlotte Bunne, David Alvarez-Melis, Andreas Krause, and Stefanie Jegelka.
\newblock Learning generative models across incomparable spaces.
\newblock In {\em International Conference on Machine Learning}, 2019.

\bibitem{charikar1998approximating}
Moses Charikar, Chandra Chekuri, Ashish Goel, Sudipto Guha, and Serge Plotkin.
\newblock Approximating a finite metric by a small number of tree metrics.
\newblock In {\em Proceedings 39th Annual Symposium on Foundations of Computer
  Science (FOCS)}, pages 379--388, 1998.

\bibitem{courty2017joint}
Nicolas Courty, R{\'e}mi Flamary, Amaury Habrard, and Alain Rakotomamonjy.
\newblock Joint distribution optimal transportation for domain adaptation.
\newblock In {\em Advances in Neural Information Processing Systems}, pages
  3730--3739, 2017.

\bibitem{courty2016optimal}
Nicolas Courty, R{\'e}mi Flamary, Devis Tuia, and Alain Rakotomamonjy.
\newblock Optimal transport for domain adaptation.
\newblock {\em Pattern analysis and machine intelligence (PAMI)},
  39(9):1853--1865, 2016.

\bibitem{Cuturi-2013-Sinkhorn}
M.~Cuturi.
\newblock Sinkhorn distances: {L}ightspeed computation of optimal transport.
\newblock In {\em Advances in Neural Information Processing Systems}, pages
  2292--2300, 2013.

\bibitem{Cuturi-2014-Fast}
M.~Cuturi and A.~Doucet.
\newblock Fast computation of {W}asserstein barycenters.
\newblock In {\em International conference on machine learning}, pages
  685--693, 2014.

\bibitem{Dvurechensky-2018-Computational}
P.~Dvurechensky, A.~Gasnikov, and A.~Kroshnin.
\newblock Computational optimal transport: {C}omplexity by accelerated gradient
  descent is better than by {S}inkhorn’s algorithm.
\newblock In {\em International conference on machine learning}, pages
  1367--1376, 2018.

\bibitem{fakcharoenphol2004tight}
Jittat Fakcharoenphol, Satish Rao, and Kunal Talwar.
\newblock A tight bound on approximating arbitrary metrics by tree metrics.
\newblock {\em Journal of Computer and System Sciences}, 69(3):485--497, 2004.

\bibitem{feder1988optimal}
Tomas Feder and Daniel Greene.
\newblock Optimal algorithms for approximate clustering.
\newblock In {\em Proceedings of the twentieth annual ACM symposium on Theory
  of computing}, pages 434--444. ACM, 1988.

\bibitem{genevay2016stochastic}
Aude Genevay, Marco Cuturi, Gabriel Peyr{\'e}, and Francis Bach.
\newblock Stochastic optimization for large-scale optimal transport.
\newblock In {\em Advances in neural information processing systems}, pages
  3440--3448, 2016.

\bibitem{pmlr-v84-genevay18a}
Aude Genevay, Gabriel Peyre, and Marco Cuturi.
\newblock Learning generative models with sinkhorn divergences.
\newblock In {\em Proceedings of the Twenty-First International Conference on
  Artificial Intelligence and Statistics}, pages 1608--1617, 2018.

\bibitem{gonzalez1985clustering}
Teofilo~F Gonzalez.
\newblock Clustering to minimize the maximum intercluster distance.
\newblock {\em Theoretical Computer Science}, 38:293--306, 1985.

\bibitem{grave2019unsupervised}
Edouard Grave, Armand Joulin, and Quentin Berthet.
\newblock Unsupervised alignment of embeddings with {W}asserstein procrustes.
\newblock In {\em International Conference on Artifi- cial Intelligence and
  Statistics (AISTATS)}, pages 1880--1890, 2019.

\bibitem{Gulrajani-2017-Improved}
I.~Gulrajani, F.~Ahmed, M.~Arjovsky, V.~Dumoulin, and A.~C. Courville.
\newblock Improved training of {W}asserstein {GAN}s.
\newblock In {\em Advances in Neural Information Processing Systems}, pages
  5767--5777, 2017.

\bibitem{indyk2001algorithmic}
Piotr Indyk.
\newblock Algorithmic applications of low-distortion geometric embeddings.
\newblock In {\em Proceedings 42nd IEEE Symposium on Foundations of Computer
  Science (FOCS)}, pages 10--33, 2001.

\bibitem{kolouri2018sliced}
Soheil Kolouri, Phillip~E. Pope, Charles~E. Martin, and Gustavo~K. Rohde.
\newblock Sliced wasserstein auto-encoders.
\newblock In {\em International Conference on Learning Representations}, 2019.

\bibitem{kusner2015word}
Matt Kusner, Yu~Sun, Nicholas Kolkin, and Kilian Weinberger.
\newblock From word embeddings to document distances.
\newblock In {\em International conference on machine learning}, pages
  957--966, 2015.

\bibitem{lavenant2018dynamical}
Hugo Lavenant, Sebastian Claici, Edward Chien, and Justin Solomon.
\newblock Dynamical optimal transport on discrete surfaces.
\newblock In {\em SIGGRAPH Asia 2018 Technical Papers}, page 250. ACM, 2018.

\bibitem{le2015unsupervised}
Tam Le and Marco Cuturi.
\newblock Unsupervised {R}iemannian metric learning for histograms using
  {A}itchison transformations.
\newblock In {\em International Conference on Machine Learning}, pages
  2002--2011, 2015.

\bibitem{le2019tree}
Tam Le, Makoto Yamada, Kenji Fukumizu, and Marco Cuturi.
\newblock Tree-sliced variants of {W}asserstein distances.
\newblock In {\em Advances in neural information processing systems}, pages
  12283--12294, 2019.

\bibitem{Lin-2019-Efficient}
Tianyi Lin, Nhat Ho, and Michael Jordan.
\newblock On efficient optimal transport: An analysis of greedy and accelerated
  mirror descent algorithms.
\newblock In {\em Proceedings of the 36th International Conference on Machine
  Learning}, pages 3982--3991, 2019.

\bibitem{pmlr-v97-liutkus19a}
Antoine Liutkus, Umut Simsekli, Szymon Majewski, Alain Durmus, and
  Fabian-Robert St{\"o}ter.
\newblock Sliced-{W}asserstein flows: Nonparametric generative modeling via
  optimal transport and diffusions.
\newblock In {\em Proceedings of the 36th International Conference on Machine
  Learning}, pages 4104--4113, 2019.

\bibitem{NIPS2019_9130}
Giulia Luise, Saverio Salzo, Massimiliano Pontil, and Carlo Ciliberto.
\newblock Sinkhorn barycenters with free support via {F}rank-{W}olfe algorithm.
\newblock In {\em Advances in Neural Information Processing Systems}, pages
  9318--9329, 2019.

\bibitem{manning2008introduction}
Christopher~D Manning, Prabhakar Raghavan, and Hinrich Sch{\"u}tze.
\newblock {\em Introduction to information retrieval}.
\newblock Cambridge university press, 2008.

\bibitem{Memoli_Gromov}
F.~M\'{e}moli.
\newblock Gromov–{W}asserstein distances and the metric approach to object
  matching.
\newblock {\em Foundations of Computational Mathematics}, 11(4):417–487,
  2011.

\bibitem{NIPS2019_8703}
Gonzalo Mena and Jonathan Niles-Weed.
\newblock Statistical bounds for entropic optimal transport: sample complexity
  and the central limit theorem.
\newblock In {\em Advances in Neural Information Processing Systems}, pages
  4543--4553, 2019.

\bibitem{mikolov2013distributed}
Tomas Mikolov, Ilya Sutskever, Kai Chen, Greg~S Corrado, and Jeff Dean.
\newblock Distributed representations of words and phrases and their
  compositionality.
\newblock In {\em Advances in neural information processing systems}, pages
  3111--3119, 2013.

\bibitem{muzellec2018generalizing}
Boris Muzellec and Marco Cuturi.
\newblock Generalizing point embeddings using the wasserstein space of
  elliptical distributions.
\newblock In {\em Advances in Neural Information Processing Systems}, pages
  10237--10248, 2018.

\bibitem{NIPS2019_8318}
Kimia Nadjahi, Alain Durmus, Umut Simsekli, and Roland Badeau.
\newblock Asymptotic guarantees for learning generative models with the
  sliced-{W}asserstein distance.
\newblock In {\em Advances in Neural Information Processing Systems}, pages
  250--260, 2019.

\bibitem{pmlr-v97-paty19a}
Fran{\c{c}}ois-Pierre Paty and Marco Cuturi.
\newblock Subspace robust {W}asserstein distances.
\newblock In {\em Proceedings of the 36th International Conference on Machine
  Learning}, pages 5072--5081, 2019.

\bibitem{perrot2016mapping}
Micha{\"e}l Perrot, Nicolas Courty, R{\'e}mi Flamary, and Amaury Habrard.
\newblock Mapping estimation for discrete optimal transport.
\newblock In {\em Advances in Neural Information Processing Systems}, pages
  4197--4205, 2016.

\bibitem{Peyre_Gromov}
G.~Peyr\'{e}, M.~Cuturi, and J.~Solomon.
\newblock Gromov-{W}asserstein averaging of kernel and distance matrices.
\newblock In {\em Proceedings of the International Conference on Machine
  Learning}, 2016.

\bibitem{peyre2019computational}
Gabriel Peyr{\'e} and Marco Cuturi.
\newblock Computational optimal transport.
\newblock {\em Foundations and Trends{\textregistered} in Machine Learning},
  11(5-6):355--607, 2019.

\bibitem{peyre2016gromov}
Gabriel Peyr{\'e}, Marco Cuturi, and Justin Solomon.
\newblock Gromov-wasserstein averaging of kernel and distance matrices.
\newblock In {\em International Conference on Machine Learning}, pages
  2664--2672, 2016.

\bibitem{pmlr-v89-redko19a}
Ievgen Redko, Nicolas Courty, R\'emi Flamary, and Devis Tuia.
\newblock Optimal transport for multi-source domain adaptation under target
  shift.
\newblock In {\em International Conference on Artificial Intelligence and
  Statistics}, pages 849--858, 2019.

\bibitem{rupp2012fast}
Matthias Rupp, Alexandre Tkatchenko, Klaus-Robert M{\"u}ller, and O~Anatole
  Von~Lilienfeld.
\newblock Fast and accurate modeling of molecular atomization energies with
  machine learning.
\newblock {\em Physical review letters}, 108(5):058301, 2012.

\bibitem{salton1988term}
Gerard Salton and Christopher Buckley.
\newblock Term-weighting approaches in automatic text retrieval.
\newblock {\em Information processing \& management}, 24(5):513--523, 1988.

\bibitem{SantambrogioBook}
Filippo Santambrogio.
\newblock {\em Optimal transport for applied mathematicians}.
\newblock Birk{\"a}user, 2015.

\bibitem{schmitzer2019stabilized}
Bernhard Schmitzer.
\newblock Stabilized sparse scaling algorithms for entropy regularized
  transport problems.
\newblock {\em SIAM Journal on Scientific Computing}, 41(3):A1443--A1481, 2019.

\bibitem{semple2003phylogenetics}
Charles Semple and Mike Steel.
\newblock Phylogenetics.
\newblock {\em Oxford Lecture Series in Mathematics and its Applications},
  2003.

\bibitem{solomon2015convolutional}
Justin Solomon, Fernando De~Goes, Gabriel Peyr{\'e}, Marco Cuturi, Adrian
  Butscher, Andy Nguyen, Tao Du, and Leonidas Guibas.
\newblock Convolutional {W}asserstein distances: Efficient optimal
  transportation on geometric domains.
\newblock {\em ACM Transactions on Graphics (TOG)}, 34(4):66, 2015.

\bibitem{solomon2016entropic}
Justin Solomon, Gabriel Peyr{\'e}, Vladimir~G Kim, and Suvrit Sra.
\newblock Entropic metric alignment for correspondence problems.
\newblock {\em ACM Transactions on Graphics (TOG)}, 35(4):72, 2016.

\bibitem{solomon2019optimal}
Justin Solomon and Amir Vaxman.
\newblock Optimal transport-based polar interpolation of directional fields.
\newblock {\em ACM Transactions on Graphics (TOG)}, 38(4):1--13, 2019.

\bibitem{NIPS2019_8872}
Matteo Togninalli, Elisabetta Ghisu, Felipe Llinares-L\'{o}pez, Bastian Rieck,
  and Karsten Borgwardt.
\newblock Wasserstein {W}eisfeiler-{L}ehman graph kernels.
\newblock In {\em Advances in Neural Information Processing Systems}, pages
  6436--6446, 2019.

\bibitem{vayer2019sliced}
Titouan Vayer, R{\'e}mi Flamary, Romain Tavenard, Laetitia Chapel, and Nicolas
  Courty.
\newblock Sliced {G}romov-{W}asserstein.
\newblock {\em Advances in Neural Information Processing Systems}, 2019.

\bibitem{Villani-03}
C\'edric Villani.
\newblock {\em Topics in Optimal Transportation}.
\newblock American Mathematical Society, 2003.

\bibitem{wu2019sliced}
Jiqing Wu, Zhiwu Huang, Dinesh Acharya, Wen Li, Janine Thoma, Danda~Pani
  Paudel, and Luc~Van Gool.
\newblock Sliced {W}asserstein generative models.
\newblock In {\em Proceedings of the IEEE conference on computer vision and
  pattern recognition}, pages 3713--3722, 2019.

\bibitem{Xu-2019-Scalable}
H.~Xu, D.~Luo, and L.~Carin.
\newblock Scalable {G}romov-{W}asserstein learning for graph partitioning and
  matching.
\newblock In {\em Advances in neural information processing systems}, 2019.

\bibitem{Xu-2019-Gromov}
H.~Xu, D.~Luo, H.~Zha, and L.~Carin.
\newblock Gromov-{W}asserstein learning for graph matching and node embedding.
\newblock In {\em International conference on machine learning}, 2019.

\end{thebibliography}



\newpage
\appendix

\begin{center}
\textbf{\Large{\textit{Supplementary Material for:} Flow-based Alignment Approaches \\ for Probability Measures in Different Spaces}}
\end{center}

We organize this supplementary material as follow:
\begin{itemize}
\item In Appendix~\ref{sec:proofs}, we provide proofs for theoretical results: Theorem 1, Theorem 2, and Proposition 1 in the main text.

\item In Appendix~\ref{sec:further_illustration}, we show more illustrations and geometric properties, and discuss about the rotational and translational invariance for Flow-based Alignment (\FlowTGW) mentioned in the main text.
\item In Appendix~\ref{sec:detail_algorithm}, we describe further details for \FlowTGW~and Depth-based Alignment (\DepthTGW), e.g., algorithms and complexity for \FlowAlign~and \DepthAlign~on applications with or without priori knowledge about tree structures for probability measures.

\item In Appendix~\ref{sec:further_experimental_results}, we illustrate 
	\begin{itemize}
	\item further experimental results in quantum chemistry (\texttt{qm7} dataset), document classification (\texttt{TWITTER, RECIPE, CLASSIC, AMAZON} datasets) considered in the main text;
	\item time consumption for the clustering-based tree metric sampling; 
	\item results with different parameters for tree metric sampling;
	\item and a small experimental setup for performance comparison for $k$-means clustering on \textit{randomly rotated} \texttt{MNIST} dataset.
	\end{itemize}

\item In Appendix~\ref{sec:brief_review}, we give some brief reviews for
	\begin{itemize}
	\item the farthest-point clustering; 
	\item clustering-based tree metric sampling; 
	\item tree metric;
	\item $F_{\beta}$ measure for clustering evaluation; 
	\item and more information for datasets.
	\end{itemize}
\item In Appendix~\ref{sec:further_discussions}, we provide some further discussions.

\item In Appendix~\ref{sec:empirical_relation}, we investigate empirical relations among the considered discrepancies (e.g., \FlowTGW, \DepthTGW, sliced GW (SGW), entropic GW (EGW) and the standard entropic GW where entropic regularization is used in both transportation plan optimization and objective function computation (EGW$_0$)) for probability measures in different spaces.
\end{itemize}

\textbf{Notations.} We use same notations as in the main text.  

\section{Proofs}
\label{sec:proofs}
In this section, we provide the proofs for the pseudo-distances and properties of \FlowTGW~and \DepthTGW~discrepancies, i.e., Theorem 1, Theorem 2, and Proposition 1 in the main text.

\subsection{Proof of Theorem 1 in the main text}
\label{subsec:proof:prop:triangle_flow_based}

From the definition of \FlowTGW~$\FA$, it is symmetric, namely, $\FA(\mu, \nu) = \FA(\nu, \mu)$. In addition, it is clear that $\FA(\mu, \mu) = 0$. Finally, we show that $\FA$ also satisfies triangle inequality as in Proposition~\ref{prop:triangle_flow_based}.


\begin{proposition}
\label{prop:triangle_flow_based}
Given three probability measures $\mu, \nu, \gamma$ in three different metric spaces $(\Tt_X, d_{\TMX})$, $(\Tt_Y, d_{\TM_Y})$, and $(\Tt_Z, d_{\TMZ})$. Then, we have:
\[
    \FA(\mu, \gamma) \leq \FA(\mu, \nu) + \FA(\nu, \gamma). 
\]
\end{proposition}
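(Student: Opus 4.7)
The plan is to reduce the statement to the one-dimensional $2$-Wasserstein triangle inequality via the identification noted in Section~3.3. Specifically, $\SRFFA(\mu,\nu;r_x,r_y)$ equals the squared univariate $2$-Wasserstein distance between the flow-length pushforwards $\tilde\mu_{r_x}:=\sum_i a_i\,\delta_{d_{\TMX}(r_x,x_i)}$ and $\tilde\nu_{r_y}:=\sum_j b_j\,\delta_{d_{\TMY}(r_y,y_j)}$. Since the one-dimensional $2$-Wasserstein distance satisfies the triangle inequality, for every triple of roots $(r_x,r_y,r_z)$ one has
\begin{equation*}
\sqrt{\SRFFA(\mu,\gamma;r_x,r_z)} \;\leq\; \sqrt{\SRFFA(\mu,\nu;r_x,r_y)} \;+\; \sqrt{\SRFFA(\nu,\gamma;r_y,r_z)}.
\end{equation*}

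First I would fix optimizers $(r_x^\star,r_y^{(1)},T^{\mu\nu})$ of $\SFA(\mu,\nu)$ and $(r_y^{(2)},r_z^\star,T^{\nu\gamma})$ of $\SFA(\nu,\gamma)$, and then glue the two transport plans through $\nu$ via the standard rule $S_{ijk}:=T^{\mu\nu}_{ij}\,T^{\nu\gamma}_{jk}/b_j$ whenever $b_j>0$ (and $0$ otherwise), producing a joint measure on $\{x_i\}\times\{y_j\}\times\{z_k\}$ with the prescribed bivariate marginals. The $(X,Z)$-marginal $T^{\mu\gamma}$ of $S$ lies in $\Pi(\mu,\gamma)$ and so, paired with the roots $(r_x^\star,r_z^\star)$, is feasible for $\SFA(\mu,\gamma)$. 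Applying Minkowski's inequality to the probability measure $S$ combined with the pointwise real-line triangle inequality
\begin{equation*}
|d_{\TMX}(r_x^\star,x_i)-d_{\TMZ}(r_z^\star,z_k)| \;\leq\; |d_{\TMX}(r_x^\star,x_i)-d_{\TMY}(r_y,y_j)| + |d_{\TMY}(r_y,y_j)-d_{\TMZ}(r_z^\star,z_k)|
\end{equation*}
then yields, for any intermediate root $r_y\in\Tt_Y$,
\begin{equation*}
\sqrt{\SFA(\mu,\gamma)} \;\leq\; \sqrt{\RFFA(\mu,\nu;r_x^\star,r_y)} \;+\; \sqrt{\RFFA(\nu,\gamma;r_y,r_z^\star)}.
\end{equation*}

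The main obstacle is that the two optimal intermediate roots $r_y^{(1)}$ and $r_y^{(2)}$ need not coincide, so no single $r_y$ simultaneously drives both right-hand terms down to $\sqrt{\SFA(\mu,\nu)}$ and $\sqrt{\SFA(\nu,\gamma)}$. I expect the authors to resolve this by leveraging the outer root minimization already built into each $\FA$: after producing the Minkowski bound, one should argue that each right-hand term can be minimized over its own intermediate-root variable independently, thereby recovering $\FA(\mu,\nu)$ and $\FA(\nu,\gamma)$. Carefully justifying this decoupling -- most likely by splitting the single $r_y$ in the glued inequality into two independent variables that the outer minimizations in $\FA(\mu,\nu)$ and $\FA(\nu,\gamma)$ are free to optimize -- is the delicate step; the remaining ingredients (gluing, Minkowski, and marginalization) are routine.
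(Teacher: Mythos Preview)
Your core argument coincides with the paper's: both establish the aligned-root inequality
\[
\RFFA(\mu,\gamma;r_x,r_z)\;\le\;\RFFA(\mu,\nu;r_x,r_y)+\RFFA(\nu,\gamma;r_y,r_z)
\]
for an arbitrary triple of roots, by gluing the two optimal transport plans through $\nu$ and applying a Minkowski/H\"older step (the paper phrases it as H\"older on the cross term, you phrase it as Minkowski on the glued measure; for the $L^2$ cost these are the same computation). So at the level of the aligned-root statement, your proof is essentially the paper's proof.

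Regarding the ``delicate step'' you flag---the fact that the optimal intermediate root $r_y^{(1)}$ for $\FA(\mu,\nu)$ and the optimal $r_y^{(2)}$ for $\FA(\nu,\gamma)$ need not coincide---the paper does \emph{not} resolve it. The paper simply asserts ``it is sufficient to demonstrate [the aligned-root inequality] for any roots $r_x,r_y,r_z$'' and then proves that inequality, without ever explaining how the single shared $r_y$ on the right-hand side decouples into two independently optimized roots. In other words, the authors do not supply the careful decoupling argument you anticipate; your observation that
\[
\min_{r_y}\Bigl[\min_{r_x}\RFFA(\mu,\nu;r_x,r_y)+\min_{r_z}\RFFA(\nu,\gamma;r_y,r_z)\Bigr]
\]
need not equal $\FA(\mu,\nu)+\FA(\nu,\gamma)$ is exactly the gap, and it is left open in the paper's own proof. (Abstractly, $\FA$ is a min--min distance between the sets $\{\tilde\mu_{r_x}\}_{r_x}$ and $\{\tilde\nu_{r_y}\}_{r_y}$ in the $W_2$ space on $\RR$, and min--min distances between sets do not in general satisfy a triangle inequality; any complete proof would need to exploit additional structure of the flow-length maps $r\mapsto\tilde\mu_r$, which neither your proposal nor the paper does.)
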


\begin{proof}
It is sufficient to demonstrate that

\begin{align}
    \label{eq:flow_ine}
    \RFFA(\mu, \gamma; r_x, r_y) \leq \RFFA(\mu, \nu; r_x, r_z) + \RFFA(\nu, \gamma; r_y, r_z),
\end{align}

for any roots $r_x, r_y, r_z$ of $\Tt_X, \Tt_Y, \Tt_Z$ respectively. Our proof for the above inequality is a direct application of the gluing lemma in~\citep{Villani-03}. In particular, for any roots $r_{x}, r_{y}, r_{z}$ of of $\Tt_X, \Tt_Y, \Tt_Z$, we denote $\widehat{T}^{1} \in \Pi(\mu, \nu)$ and $\widehat{T}^{2} \in \Pi(\nu, \gamma)$ as optimal transport plans for $\RFFA(\mu, \nu; r_x, r_z)$ and $\RFFA(\nu, \gamma; r_y, r_z)$ respectively. Based on the gluing lemma, there exists $T$ with marginal of the first and the third factors as $\widehat{T}^{1}$ and marginal of the second and the third factors as $\widehat{T}^{2}$. We denote the marginal of its first and second factors as $\bar{T}$, which is a transport plan between $\mu$ and $\gamma$. Therefore, from the definition of aligned-root \FlowTGW~discrepancy, we have

\begin{align}
    \RFFA^2(\mu, \gamma; r_x, r_y) & \leq \sum_{i, j} \left|d_{\TM_X}(r_x, x_i) - d_{\TM_Y}(r_y, y_j) \right|^2 \bar{T}_{ij} \nonumber \\
    & = \sum_{i, j, k} \left|d_{\TM_X}(r_x, x_i) - d_{\TM_Y}(r_y, y_j) \right|^2 T_{ijk} \nonumber\\
    & = \sum_{i, j, k} \left|d_{\TM_X}(r_x, x_i) - d_{\TM_Z}(r_z, z_k) \right|^2 T_{ijk} + \sum_{i, j, k} \left|d_{\TM_X}(r_y, y_j) - d_{\TM_Z}(r_z, z_k) \right|^2 T_{ijk} \nonumber\\ 
    & \hspace{ 5 em} - 2 \sum_{i, j, k} (d_{\TM_X}(r_x, x_i) - d_{\TM_Z}(r_z, z_k))(d_{\TM_Y}(r_y, y_j) - d_{\TM_Z}(r_z, z_k)) T_{ijk}  \nonumber\\
    & \leq \RFFA^2(\mu, \nu; r_x, r_z) + \RFFA^2(\nu, \gamma; r_y, r_z) \nonumber \\
    & \hspace{ 5 em} + 2 \RFFA(\mu, \nu; r_x, r_z) \RFFA(\nu, \gamma; r_y, r_z) \nonumber \\
    & = (\RFFA(\mu, \nu; r_x, r_z) + \RFFA(\nu, \gamma; r_y, r_z))^2,
\end{align}

where we used H\"older's inequality for the third term for the second inequality. As a consequence, we obtain the conclusion of the inequality in Equation~\eqref{eq:flow_ine}.
\end{proof}

\paragraph{Discussion about $\SFA(\mu, \nu) = 0$.} As discussed in the main text, when $\SFA(\mu, \nu) = 0$, we can find roots $r_{x}^{*}$ and $r_{z}^{*}$ such that $\tilde{\mu}^{*} \equiv \tilde{\nu}^{*}$ where $\tilde{\mu}^{*} = \sum_{i} a_{i} \delta_{d_{\TMX}(r_{\!x}^{*}, x_i)}$ and $\tilde{\nu}^{*} = \sum_j b_j \delta_{d_{\TMZ}(r_{\!z}^{*}, z_j)}$. It demonstrates that $\mu$ and $\nu$ have the same weights on supports (i.e., flow masses) while the tree metrics of their supports to the corresponding root $r_{x}^{*}$ or $r_{z}^{*}$ (i.e., flow lengths) are identical. 


\subsection{Proof of Theorem 2 in the main text}
\label{subsec:proof:prop:triangle_depth_based}

In fact, from the definition of \DepthTGW~$\DA$, it is clear that $\DA(\mu, \nu) = \DA(\nu, \mu)$ and $\DA(\mu, \mu) = 0$. Furthermore, $\DA$ satisfies triangle inequality as in Proposition~\ref{prop:triangle_depth_based}.

\begin{proposition}
\label{prop:triangle_depth_based}
Given three probability measures $\mu, \nu, \gamma$ in three different metric spaces $(\Tt_X, d_{\TMX})$, $(\Tt_Y, d_{\TM_Y})$, and $(\Tt_Z, d_{\TMZ})$. Then, we have:
\begin{align*}
    \DA(\mu, \gamma) \leq \DA(\mu, \nu) + \DA(\nu, \gamma). 
\end{align*}
\end{proposition}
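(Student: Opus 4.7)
The plan is to mirror the strategy used for $\FA$: first reduce to showing the aligned-root triangle inequality
\begin{equation*}
\RFDA(\mu, \gamma; r_x, r_y) \leq \RFDA(\mu, \nu; r_x, r_z) + \RFDA(\nu, \gamma; r_y, r_z)
\end{equation*}
for arbitrary roots $r_x, r_y, r_z$, and then take the minimum over roots on both sides to conclude the claim for $\DA$. Because $\RFDA$ is defined as a sum over depth levels $h$ of weighted aligned-root $\RFFA$ terms between 2-depth-level subtrees, the natural approach is induction on the maximum depth of the trees. In the base case (depth level $2$, where the trees reduce to stars), $\RFDA$ coincides with the aligned-root $\RFFA$, so Proposition~\ref{prop:triangle_flow_based} (the triangle inequality for $\RFFA$, proved via the gluing lemma) gives the result immediately.

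For the inductive step, I would proceed level by level. At depth $h = 1$ the alignment is trivially $(r_x, r_y)$ with mass $1$, and $\RFFA$ between the two corresponding 2-depth-level trees $\mu_{\Tt^{2}_{r_x}}, \nu_{\Tt^{2}_{r_z}}, \gamma_{\Tt^{2}_{r_y}}$ satisfies the triangle inequality by Proposition~\ref{prop:triangle_flow_based}. The key construction is then to build a single ``joint" hierarchical matching that couples $\mu, \nu, \gamma$: starting from the optimal transport plans $T^{*,\mu\nu}_{h-1}$ and $T^{*,\nu\gamma}_{h-1}$ produced at level $h$ in the definitions of $\RFDA(\mu, \nu;\cdot)$ and $\RFDA(\nu, \gamma;\cdot)$, apply the gluing lemma to obtain a three-marginal coupling whose $(\mu, \gamma)$-marginal is an admissible (not necessarily optimal) transport plan for the $\mu$ vs.\ $\gamma$ alignment at the same level. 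Bounding $\RFDA(\mu, \gamma; r_x, r_y)$ by the cost of this suboptimal coupling at each level, and summing the per-level triangle inequalities weighted by the coupling masses, yields the desired inequality.

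The main obstacle is that the matching sets $\Mm_{h-1}$ and the optimal masses $T^{*}_{h-1}$ in $\RFDA(\mu, \gamma; r_x, r_y)$, $\RFDA(\mu, \nu; r_x, r_z)$, and $\RFDA(\nu, \gamma; r_y, r_z)$ are all defined with respect to \emph{different} optimizations, so the three recursive trees of matched pairs need not line up in any obvious way. To handle this, the inductive hypothesis must be formulated carefully at the level of \emph{any admissible} hierarchical coupling, not just the optimal one, so that when we glue a $\mu$-$\nu$ matching with a $\nu$-$\gamma$ matching at every depth we obtain a valid $\mu$-$\gamma$ hierarchical coupling whose cost upper-bounds $\RFDA(\mu, \gamma; r_x, r_y)$. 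Once this admissibility-monotone form of the inequality is set up, pushing the bound down the tree reduces at each step to (i) the gluing of couplings and (ii) the triangle inequality for $\RFFA$ already established, with an application of H\"older's inequality (as in the proof of Proposition~\ref{prop:triangle_flow_based}) to control the weighted sum across matched pairs.
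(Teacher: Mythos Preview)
Your proposal is essentially the same strategy the paper uses: reduce to the aligned-root inequality, then argue level by level using the gluing lemma on the two optimal plans together with the already-established triangle inequality for $\RFFA$. The paper does not phrase it as an induction on depth but proves the per-level inequality~\eqref{eq:key_depth} directly and then reduces further to a per-triple statement handled by gluing and the $\RFFA$ triangle inequality; your explicit identification of the mismatch between the matching sets $\Mm_{h-1}$ in the three computations, and your proposed fix via admissible (rather than optimal) hierarchical couplings, is in fact more careful than the paper's own exposition, which glosses over exactly this point.
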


\begin{proof}
Similar to the proof of Proposition~\ref{prop:triangle_flow_based}, it is sufficient to demonstrate that

\begin{align}
    \label{eq:depth_ine}
    \RFDA(\mu, \gamma; r_x, r_y) \leq \RFDA(\mu, \nu; r_x, r_z) + \RFDA(\nu, \gamma; r_y, r_z),
\end{align}

for any roots $r_x, r_y, r_z$ of $\Tt_X, \Tt_Y, \Tt_Z$ respectively. According to the definition of aligned-root \DepthTGW, the above inequality is equivalent to

\begin{align}
    \hspace{-2 em}\sum_{h} \sum_{(x, y) \in \Mm_{h-1}^{1,2}} T^{*}_{h}(x, y) \RFFA(\mu_{\Tt_{x}^{2}}, \gamma_{\Tt_{y}^{2}}; x, y) \leq \sum_{h} \sum_{(x, z) \in \Mm_{h-1}^{1,3}} \bar{T}^{*}_{h}(x, z) \RFFA(\mu_{\Tt_{x}^{2}}, \nu_{\Tt_{z}^{2}}; x, z) \nonumber \\
    & \hspace{- 22 em} + \sum_{h} \sum_{(y, z) \in \Mm_{h-1}^{2,3}} \tilde{T}^{*}_{h}(y, z) \RFFA(\mu_{\Tt_{y}^{2}}, \nu_{\Tt_{z}^{2}}; y, z),
\end{align}

where $\Mm_{h}^{1,2}, \Mm_{h}^{1,3}, \Mm_{h}^{2,3}$ are respectively sets of optimal aligned pairs at the depth level $h$ from trees $\Tt_X$ and $\Tt_Y$, from trees $\Tt_X$ and $\Tt_Z$, and from trees $\Tt_Y$ and $\Tt_Z$; $T^{*}_{h}(x, y), \bar{T}^{*}_{h}(x, z), \tilde{T}^{*}_{h}(y, z)$ are respectively optimal matching masses for the pairs $(x, y) \in \Mm_{h - 1}^{1,2}, (x, z) \in \Mm_{h-1}^{1,3}, (y, z) \in \Mm_{h-1}^{2,3}$. In order to demonstrate the above inequality, we only need to verify that

\begin{align}
    \sum_{(x, y) \in \Mm_{h}^{1,2}} T^{*}_{h}(x, y) \RFFA(\mu_{\Tt_{x}^{2}}, \gamma_{\Tt_{y}^{2}}; x, y) \leq \sum_{(x, z) \in \Mm_{h}^{1,3}} \bar{T}^{*}_{h}(x, z) \RFFA(\mu_{\Tt_{x}^{2}}, \nu_{\Tt_{z}^{2}}; x, z) \nonumber \\
    & \hspace{- 17 em} + \sum_{(y, z) \in \Mm_{h}^{2,3}} \tilde{T}^{*}_{h}(y, z) \RFFA(\mu_{\Tt_{y}^{2}}, \nu_{\Tt_{z}^{2}}; y, z), \label{eq:key_depth}
\end{align}

for any depth level $h \geq 1$. We respectively denote $\Tt_X^{h} = \{\bar{x}_{1}^{(h)}, \ldots, \bar{x}_{k_{1,h}}^{(h)}\}, \Tt_Y^{h} = \{\bar{y}_{1}^{(h)}, \ldots, \bar{y}_{k_{2,h}}^{(h)}\}, \Tt_Z^{h} = \{\bar{z}_{1}^{(h)}, \ldots, \bar{z}_{k_{3,h}}^{(h)}\}$ the set of nodes in depth level $h$ of the trees $\Tt_X, \Tt_Y, \Tt_Z$. The inequality in Equation~\eqref{eq:key_depth} can be rewritten as

\begin{align}
    & \hspace{-5 em} \sum_{i = 1}^{k_{1, h - 1}} \sum_{j = 1}^{k_{2, h - 1}} \sum_{x \in \Ss(\bar{x}_{i}^{(h - 1)}), y \in \Ss(\bar{y}_{j}^{(h - 1)})} T^{*}_{h}(x, y) \RFFA(\mu_{\Tt_{x}^{2}}, \gamma_{\Tt_{y}^{2}}; x, y) \nonumber \\
    & \leq  \sum_{i = 1}^{k_{1, h - 1}} \sum_{j = 1}^{k_{3, h - 1}} \sum_{x \in \Ss(\bar{x}_{i}^{(h - 1)}), z \in \Ss(\bar{z}_{j}^{(h - 1)})} \bar{T}^{*}_{h}(x, z) \RFFA(\mu_{\Tt_{x}^{2}}, \nu_{\Tt_{y}^{2}}; x, z) \nonumber \\
    & \qquad \qquad +  \sum_{i = 1}^{k_{2, h - 1}} \sum_{j = 1}^{k_{3, h - 1}} \sum_{y \in \Ss(\bar{y}_{i}^{(h - 1)}), z \in \Ss(\bar{z}_{j}^{(h - 1)})} \tilde{T}^{*}_{h}(y, z) \RFFA(\gamma_{\Tt_{y}^{2}}, \nu_{\Tt_{y}^{2}}; y, z). \label{eq:equiv_depth}
\end{align}

In order to obtain the conclusion of inequality in Equation~\eqref{eq:equiv_depth}, we only need to prove that 

\begin{align*}
   & \hspace{-8 em} \sum_{x \in \Ss(\bar{x}_{i}^{(h - 1)}), y \in \Ss(\bar{y}_{j}^{(h - 1)})} T^{*}_{h}(x, y) \RFFA(\mu_{\Tt_{x}^{2}}, \gamma_{\Tt_{y}^{2}}; x, y) \\
   & \leq \sum_{x \in \Ss(\bar{x}_{i}^{(h - 1)}), z \in \Ss(\bar{z}_{l}^{(h - 1)})} \bar{T}^{*}_{h}(x, z) \RFFA(\mu_{\Tt_{x}^{2}}, \nu_{\Tt_{y}^{2}}; x, z) \\
   & \qquad + \sum_{y \in \Ss(\bar{y}_{i}^{(h - 1)}), z \in \Ss(\bar{z}_{l}^{(h - 1)})} \tilde{T}^{*}_{h}(y, z) \RFFA(\gamma_{\Tt_{y}^{2}}, \nu_{\Tt_{y}^{2}}; y, z),
\end{align*}

for any $i \in \{1, \ldots, k_{1, h - 1}\}$, $j \in \{1, \ldots, k_{2, h - 1} \}$, and $l \in \{1, \ldots, k_{3, h - 1}\}$. We can make use of the gluing lemma~\citep{Villani-03} to prove that inequality. In fact, there exists $T$ with marginal of its first and third factors as $\bar{T}^{*}_{h}(x, z)$ and marginal of its second and third factors as $\bar{T}^{*}_{h}(y, z)$. We denote the marginal of its first and second factors as $\hat{T}$, which is the transport plan for probability measures with supports at $x \in \Ss(\bar{x}_{i}^{(h - 1)})$ and $y \in \Ss(\bar{y}_{j}^{(h - 1)})$. Now, we have the following inequalities

\begin{align*}
   &\hspace{-6 em} \sum_{x \in \Ss(\bar{x}_{i}^{(h - 1)}), y \in \Ss(\bar{y}_{j}^{(h - 1)})} T^{*}_{h}(x, y) \RFFA(\mu_{\Tt_{x}^{2}}, \gamma_{\Tt_{y}^{2}}; x, y) \\
   & = \sum_{x \in \Ss(\bar{x}_{i}^{(h - 1)}), y \in \Ss(\bar{y}_{j}^{(h - 1)}), z \in \Ss(\bar{z}_{l}^{(h - 1)})} T_{xyz} \RFFA(\mu_{\Tt_{x}^{2}}, \gamma_{\Tt_{y}^{2}}; x, y) \\
    & \leq \sum_{x \in \Ss(\bar{x}_{i}^{(h - 1)}), y \in \Ss(\bar{y}_{j}^{(h - 1)}), z \in \Ss(\bar{z}_{l}^{(h - 1)})} T_{xyz} \RFFA(\mu_{\Tt_{x}^{2}}, \nu_{\Tt_{z}^{2}}; x, z) \\
    & \qquad \qquad + \sum_{x \in \Ss(\bar{x}_{i}^{(h - 1)}), y \in \Ss(\bar{y}_{j}^{(h - 1)}), z \in \Ss(\bar{z}_{l}^{(h - 1)})} T_{xyz} \RFFA(\gamma_{\Tt_{y}^{2}}, \nu_{\Tt_{z}^{2}}; y, z) \\
    & = \sum_{x \in \Ss(\bar{x}_{i}^{(h - 1)}), z \in \Ss(\bar{z}_{l}^{(h - 1)})} \bar{T}^{*}_{h}(x, z) \RFFA(\mu_{\Tt_{x}^{2}}, \nu_{\Tt_{y}^{2}}; x, z) \\
    & \qquad  \qquad + \sum_{y \in \Ss(\bar{y}_{i}^{(h - 1)}), z \in \Ss(\bar{z}_{l}^{(h - 1)})} \tilde{T}^{*}_{h}(y, z) \RFFA(\gamma_{\Tt_{y}^{2}}, \nu_{\Tt_{y}^{2}}; y, z).
\end{align*}

As a consequence, we obtain the conclusion of the proposition.
\end{proof}

\paragraph{Discussion about $\DA(\mu, \nu)=0$.} As discussed in the main text, when $\DA(\mu, \nu)=0$, we can find roots $r_x^{*}$ and $r_z^{*}$ such that all the hierarchical corresponding $\RFFA(\mu_{\Tt_{\cdot}^{2}}, \nu_{\Tt_{\cdot}^{2}}; \cdot, \cdot)$ for each depth level along the trees are equal to $0$. It demonstrates that $\mu$ and $\nu$ have the same weights on supports while their supports have the same depth levels. For each depth level, the tree metrics of supports in the corresponding $\mu_{\Tt_{\cdot}^{2}}, \nu_{\Tt_{\cdot}^{2}}$ to the 2-depth-level-tree roots are identical, i.e., corresponding weight edges are identical.

\subsection{Proof of Proposition 1 in the main text}
\label{subsec:proof:prop:flow_to_Gromov}


The proof of Proposition 1 in the main text is a direct application of Cauchy-Schwarz. In particular, since the deepest levels of trees $\Tt_{X}$ and $\Tt_{Z}$ are equal to two for some roots $r_{x}$ and $r_{z}$ respectively, we obtain that
\begin{align}\label{eq:GW_FlowAlign}
    \GW^2(\mu, \nu) & =  \min_{T \in \Pi(\mu, \nu)} \sum_{i,j,i', j'} \left| d_{\TMX}(x_i, x_{i'}) - d_{\TMZ}(z_j, z_{j'} )\right|^2 T_{ij} T_{i'j'} \nonumber \\
    & = \min_{T \in \Pi(\mu, \nu)} \sum_{i,j,i', j'} \left| d_{\TMX}(x_i, r_{x}) + d_{\TMX}(r_{x}, x_{i'}) - d_{\TMZ}(z_j,r_{z}) - d_{\TMZ}(r_{z}, z_{j'} )\right|^2 T_{ij} T_{i'j'} \nonumber \\
    & \leq \min_{T \in \Pi(\mu, \nu)} \sum_{i,j,i', j'} 2 \bigr[(d_{\TMX}(x_i, r_{x}) - d_{\TMZ}(z_j,r_{z}))^2 + (d_{\TMX}(r_{x}, x_{i'}) - d_{\TMZ}(r_{z}, z_{j'} ))^2 \bigr] T_{ij} T_{i'j'} \nonumber \\
    & = 4\SRFFA(\mu, \nu; r_x, r_z)
\end{align}
where the inequality is due to the standard Cauchy-Schwarz inequality $(a + b)^2 \leq 2(a^2 + b^2)$ for any $a, b \in \mathbb{R}$. By taking the infimum over $r_{x}, r_{z}$, then square root on both sides of the above inequality \eqref{eq:GW_FlowAlign}, we obtain the conclusion of the proposition that 
\[
\GW(\mu, \nu) \le 2\FA(\mu, \nu).
\]

\section{Further illustrations, geometric properties, and discussion about rotational and translational invariance for \FlowTGW}
\label{sec:further_illustration}

In this section, we provide further illustrations for \FlowAlign~mentioned in the main text. 
\begin{itemize}
\item In Figure~\ref{fg:FlowBased_GTW}, we illustrate the aligned-root \FlowAlign~$\RFFA$.
\item In Figure~\ref{fg:DP_FlowTGW_Case2}, we illustrate for supports in $\Omega_{\nu}$ in the \textbf{Case 2} in the efficient computation approach for \FlowAlign~$\FA$.

\end{itemize}

\paragraph{Rotational and translational invariance for \FlowAlign.} In practice, we usually do not have priori knowledge about tree structures for probability measures\footnote{For a priori tree metric space, recall that tree metric space is finite. So, rotation/translation may not be directly well-defined in tree metric space.}. Therefore, we need to choose or sample trees $\Tt_X$ and $\Tt_Z$ from support data points, e.g. by clustering-based tree metric sampling \cite{le2019tree}. 

For the clustering-based tree metric sampling\footnote{see Appendix~\ref{sec:clustering_based_TM} for a review}, the farthest-point clustering\footnote{see Appendix~\ref{sec:farthest_point_clustering} for a review} within the clustering-based tree metric sampling gives the same results for rotational and/or translational support data points and for the original ones, when one uses the same corresponding point in the given finite set of support data points as its initialization). Therefore, tree metric sampled from the clustering-based tree metric sampling is rotational and translational invariance (i.e., tree structure and lengths of edges are the same, only nodes are represented for the corresponding rotational and/or translational support data points instead of the original ones.). Consequently, the \FlowAlign~has rotational and translational invariance. As showed in the main text (\S6.1), the \FlowAlign~works well with the quantum chemistry (in a real dataset: \texttt{qm7}) where one needs translational and rotational invariance for the relative positions of atoms in $\RR^3$ for each molecule.

As a trivial extension, due to rotational and translation invariance for tree metrics sampled from the clustering-based tree metric sampling, \DepthAlign~also has rotational and translational invariance.
 
 \begin{figure}
  \begin{center}
    \includegraphics[width=0.7\textwidth]{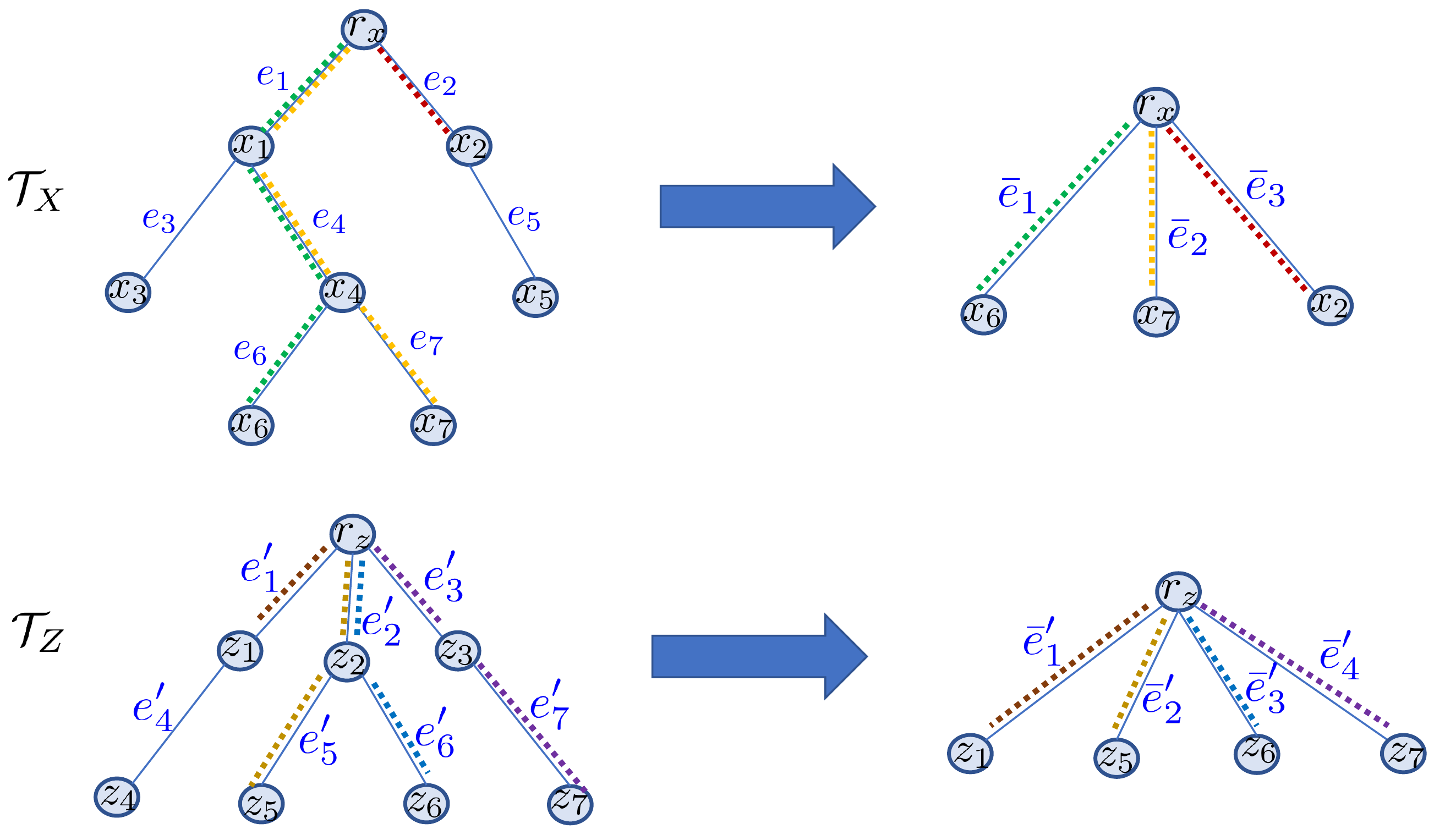}
  \end{center}
  \caption{An illustration for the aligned-root \FlowTGW~$\RFFA$ between $\mu = a_1 \delta_{x_6} + a_2 \delta_{x_7} + a_3 \delta_{x_2}$ on $\Tt_X$ and $\nu = b_1 \delta_{z_1} + b_2 \delta_{z_5} + b_3 \delta_{z_6} + b_4 \delta_{z_7}$ on $\Tt_Z$. In $\RFFA$, we consider the flows from the root to each support of a measure as illustrated in the corresponding right figures where the length of each edge in the right figures is equal to the length of the path from the root to that node on the corresponding tree $\Tt_{\cdot}$ in the left figures respectively.}
  \label{fg:FlowBased_GTW}
\end{figure}

 \begin{figure}
  \begin{center}
    \includegraphics[width=0.35\textwidth]{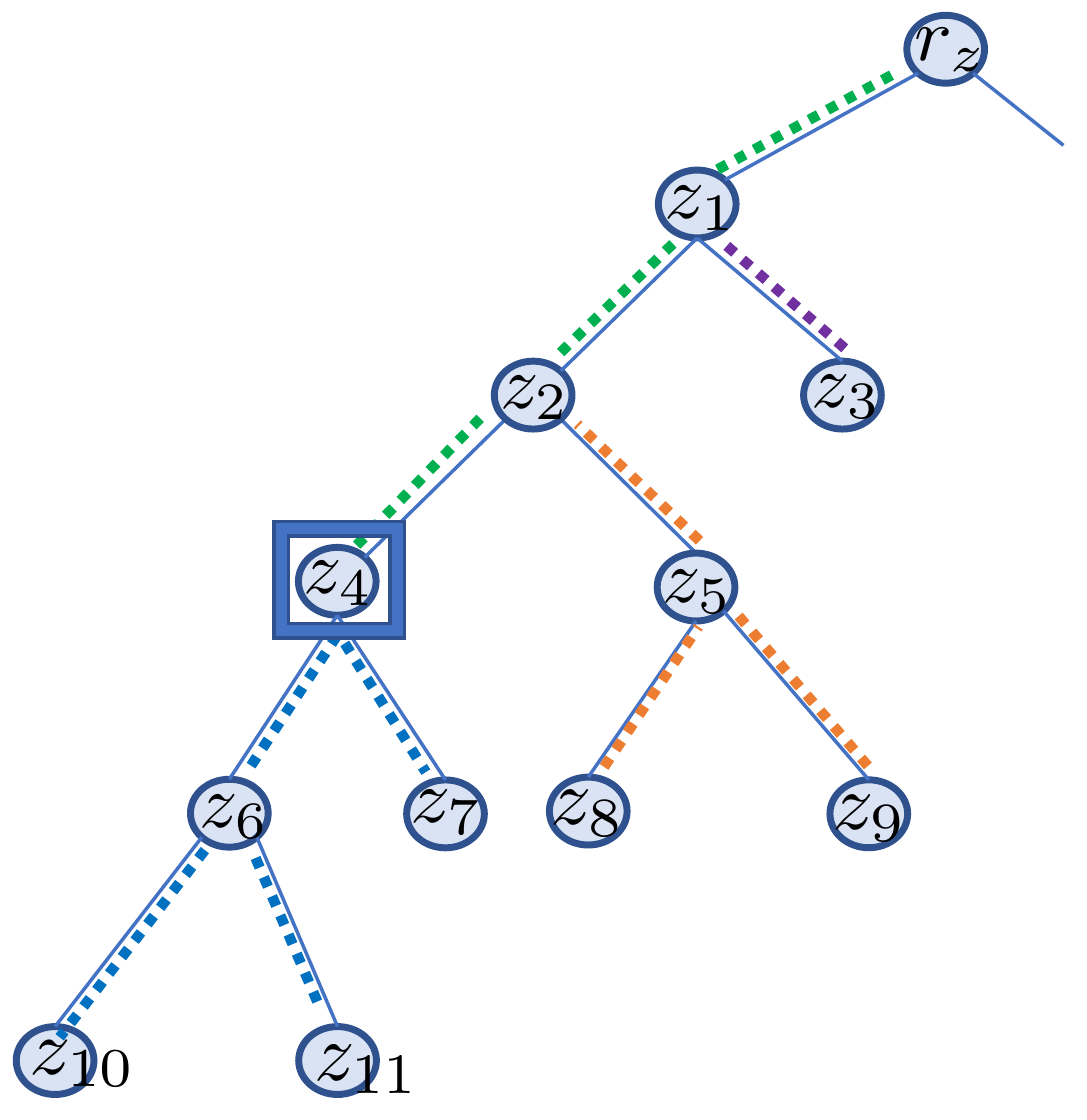}
  \end{center}
  \caption{An illustration for supports in $\Omega_{\nu}$ in the \textbf{Case 2} in the efficient computation approach for $\FA$. Assume that $\Omega_{\nu} = \left\{z_1, z_2, z_3, z_7, z_8, z_9,  z_{10}, z_{11} \right\}$ and the new root $\bar{r}_z = z_4$ (emphasized by the square border). We have supports $z_7, z_{10}, z_{11}$ for \textbf{Case 2a} (blue dots), supports $z_1, z_2$ for \textbf{Case 2b} (green dots), and supports $z_3, z_8, z_9$ for \textbf{Case 2c} (purple and orange dots) where the corresponding closest common ancestors $\zeta_3 = z_1$, $\zeta_8 = z_2$, and $\zeta_9 = z_2$ respectively. Note that, $\zeta_8 = \zeta_9$ (orange dots), therefore the order of supports $z_8, z_9$ is preserved when one changes into the new root.}
  \label{fg:DP_FlowTGW_Case2}
\end{figure}

\section{Further details for \FlowTGW~and \DepthTGW}\label{sec:detail_algorithm}

In this section, we first derive a computation for a univariate optimal transport for empirical measures. Then, we give some further details about \FlowTGW~and \DepthTGW~proposed in the main text.

\subsection{Univariate optimal transport (OT) for empirical measures}\label{sec:detail_OT1D}

Recall that the univariate OT, i.e., univariate Wasserstein, is equal to the integral of the absolute difference between the generalized quantile functions of two univariate probability distributions \citep{SantambrogioBook} (\S2). Therefore, one only needs to sort their supports for the computation with linearithmic complexity.

\begin{algorithm}[H] 
\caption{Univariate optimal transport for empirical measures} 
\label{alg:OT_1D_EM} 
\begin{algorithmic}[1] 
    \REQUIRE Input empirical measures with sorted supports $\mu = \sum_{i \in [n]} a_i \delta_{x_i}$, and $\nu = \sum_{j \in [m]} b_j \delta_{z_j}$ (i.e., $x_1 \le x_2 \le \dots \le x_n$, and $z_1 \le z_2 \le \dots \le z_m$), and a ground distance $\ell$ (e.g., $\ell(x, z) = \ell_1(x, z) = \left|x - z\right|$).
    \ENSURE OT distance $d$ and optimal transport plan $T$.
	\STATE Initialize $d \leftarrow 0$, $T \leftarrow 0_{n \times m}$,	$i \leftarrow 1$, $j \leftarrow 1$.
     	\WHILE{$i \le n$ and $j \le m$}
	\IF{$a_i \le b_j$}	
		\STATE $T_{ij} \leftarrow a_i$.
		\STATE $d \leftarrow d + a_i \ell(x_i, z_j)$.
		\STATE Update $b_j \leftarrow b_j - a_i$, $i \leftarrow i + 1$.
		\IF{$b_j == 0$}
			\STATE $j \leftarrow j+1$.
		\ENDIF
	\ELSE
		\STATE $T_{ij} \leftarrow b_j$.
		\STATE $d \leftarrow d + b_j \ell(x_i, z_j)$.
		\STATE Update $a_i \leftarrow a_i - b_j$, $j \leftarrow j + 1$.
		\IF{$a_i == 0$}
			\STATE $i \leftarrow i+1$.
		\ENDIF
	\ENDIF
	\ENDWHILE
\end{algorithmic}
\end{algorithm}

In particular, given two empirical measures $\mu = \sum_{i \in [n]} \bar{a}_i \delta_{\bar{x}_i}$ and $\nu = \sum_{j \in [m]} \bar{b}_j \delta_{\bar{z}_j}$ whose supports are in one-dimensional space, i.e., $\bar{x}_i, \bar{z}_j \in \RR, \forall i \in [n], j \in [m]$. Firstly, we sort supports of $\mu, \nu$ in an increasing order, denoted as $\mu = \sum_{i \in [n]} a_i \delta_{x_i}$ and $\nu = \sum_{j \in [m]} b_j \delta_{z_j}$ (i.e., $x_1 \le x_2 \le \dots \le x_n$, and $z_1 \le z_2 \le \dots \le z_m$). Without loss of generality, assume that $n \ge m$, the complexity of this sorting is $\O(n \log n)$. We summarize the algorithm for the univariate OT between $\mu$ and $\nu$ (whose supports are already sorted) in Algorithm~\ref{alg:OT_1D_EM}.

The complexity of Algorithm~\ref{alg:OT_1D_EM} is $\O(n)$. Therefore, the complexity of the univariate OT for empirical measures is $\O(n \log n)$, or its main complexity is to sort supports of empirical measures.

\subsection{\FlowTGW}\label{sec:detail_FlowTGW}
There are two types of applications: \textit{without} or \textit{with} priori knowledge about tree metrics for supports in probability measures.

\subsubsection{Applications \textit{without} priori knowledge about tree metrics for supports in probability measures}\label{sec:detail_FTGW_without} 

In general applications, one usually does not have priori knowledge about tree metrics for supports in probability measures. However, one can sample tree metrics for the space of supports of probability measures, e.g., using clustering-based tree metric sampling~\citep{le2019tree} (\S4), for \FlowTGW.

One can compute \FlowTGW~between $\mu=\sum_{i \in [n]} a_i \delta_{x_i}$ and $\nu=\sum_{j \in [m]} b_j \delta_{z_j}$ as follow:
\begin{itemize}
\item Step 1: Sample aligned-root tree metrics $\Tt_X$ and $\Tt_Z$ for supports $x_i \mid_{i \in [n]}$, and $z_j \mid_{j \in [m]}$ of probability measures $\mu$ and $\nu$ respectively, e.g., by choosing means of support data distributions as roots when using the clustering-based tree metric sampling~\citep{le2019tree} (\S4) (See Section~\ref{sec:clustering_based_TM} for a review about clustering-based tree metric sampling).

\item Step 2: Based on the sampled aligned-root tree metrics, \FlowTGW~(Equation (2) in the main text) is equivalent to aligned-root \FlowTGW~(Equation (4) in the main text). Consequently, \FlowTGW~between $\mu$ and $\nu$ is equivalent to the univariate OT distance between $\tilde{\mu} := \sum_{i \in [n]} a_i \delta_{d_{\TMX}(r_x, x_i)}$ and $\tilde{\nu} := \sum_{j \in [m]} b_j \delta_{d_{\TMZ}(r_z, z_j)}$ where $r_x, r_z$ are roots of $\Tt_X, \Tt_Z$ respectively.

\item Step 3: Sort supports of $\tilde{\mu}$ and $\tilde{\nu}$, and then apply Algorithm~\ref{alg:OT_1D_EM} to compute the univariate OT between $\tilde{\mu}$ and $\tilde{\nu}$.
\end{itemize}

We next show a complexity analysis for \FlowTGW: 
\begin{itemize}
\item The complexity of Step 1 is $\O(\bar{N} H_{\Tt} \log \kappa)$ where $H_{\Tt}$ is a predefined deepest level of tree $\Tt$ and $\kappa$ is the number of clusters in the farthest-point clustering used in the clustering-based tree metric sampling~\citep{le2019tree}; $\bar{N}$ is the input number of supports\footnote{One can use supports of the input probability measures, or a (sub)set of supports from several input probability measures, e.g., in case, supports are in non-registered, but same-dimensional spaces, to sample tree metrics having the same tree structure. Therefore, we have $\bar{N} \approx tn$, where $t$ is the number of probability measures whose supports are used to sample tree metrics.}. Let $N$ be the number of nodes in the sampled tree $\Tt$, we have $N \le \left(\kappa^{H_{\Tt}} -1\right)/\left(\kappa -1\right)$. 

\item The complexity of Step 2 is $\O(n H_{\Tt})$ for computing supports of $\tilde{\mu}, \tilde{\nu}$. 

\item The complexity of Step 3 is $\O(n \log n)$ as in Section~\ref{sec:detail_OT1D}. 
\end{itemize}
In general, one usually chooses small values for $H_{\Tt}$ and $\kappa$ (e.g., $(H_{\Tt}=6, \kappa=4)$ are suggested parameters for the clustering-based tree metric sampling~\citep{le2019tree}); and has $n \le N$ (each support is corresponding to a node in a tree). Therefore, the overall complexity of \FlowTGW~is $\O(\bar{N}H_{\Tt}\log \kappa + n H_{\Tt} + n \log n)$, or approximately $\O(N \log N)$.

\subsubsection{Applications \textit{with} priori knowledge about tree metrics for supports in probability measures}\label{sec:detail_FTGW_with} 

In some specific applications where one has a priori knowledge about tree metric for supports in each probability measure. One can compute \FlowTGW~as in Equation (2) in the main text, where one can exhaustedly search the optimal aligned roots, or apply the efficient computation in Section 3.2 in the main text to reduce this complexity.

Assume that one have priori knowledge about tree metrics\footnote{Assume that for each tree metric, each node has at most $\kappa$ child nodes, and the deepest level is $H_{\Tt}$.} $\Tt_X, \Tt_Z$ for supports of probability measure $\mu, \nu$ respectively. In general, one needs to search the optimal aligned roots $r_x, r_z$ for tree $\Tt_X, \Tt_Z$. The complexity of exhausted search is $\O(N^2)$ where $N$ is the number of nodes in trees. Additionally, the complexity of aligned-root \FlowTGW~is $\O(\bar{N}H_{\Tt}\log \kappa + n H_{\Tt} + n \log n)$, or approximately $\O(N \log N)$ (See Section~\ref{sec:detail_FTGW_without}). Therefore, the overall complexity of \FlowTGW~with exhausted search for optimal aligned-roots is $\O(\bar{N}H_{\Tt}\log \kappa + N^2n H_{\Tt} + N^2n \log n + N^2)$\footnote{Naively computing $N^2$ aligned-roots \FlowTGW, and comparing those $N^2$ values to obtain the optimal.}, or approximately $\O(N^3 \log N)$.

As described in Section 3.2 in the main text, those computational steps, e.g., tree metrics between a root to each support, and sorting for those tree metrics between a root to each support or its efficient computation, can be done separately for each tree before one applies Algorithm~\ref{alg:OT_1D_EM} for those sorted tree metrics between a root and each support, then compares those $N^2$ values to find the optimal pair of roots, one can reduce the complexity of \FlowTGW
\begin{itemize}

\item into $\O(\bar{N}H_{\Tt}\log \kappa + Nn H_{\Tt} + n \log n + N^2)$, or $\O(N^2)$ for Case 1 in the main text, since one needs to compute tree metrics from a root to each support with complexity $\O(n H_{\Tt})$ for $N$ times due to changing a root in a tree; sort tree metrics between a root to each support for only $1$ time; and compare aligned-root \FlowTGW~results for $N^2$ cases of pairs of roots. 

\item or nearly into $\O(\bar{N}H_{\Tt}\log \kappa + Nn H_{\Tt} + n \log n + Nn + N^2)$, or nearly $\O(N^2)$ for Case 2 in the main text, since one needs to compute tree metrics from a root to each support with complexity $\O(n H_{\Tt})$ for $N$ times due to changing a root in a tree; sort tree metrics between a root to each support for only $1$ time; merge some ordered arrays with complexity nearly $\O(n)$ for  $N$ times due to changing a root in a tree; and compare aligned-root \FlowTGW~results for $N^2$ cases of pairs of roots. 

When the degenerated case happens, one needs to merge $n$ ordered arrays, where each array only has $1$ node. Therefore, the complexity is $\O(n \log n)$, or one simply needs to resort for those $n$ tree metrics from a root to each support when changing a root of a tree. Hence, the overall complexity for the degenerated case is $\O(\bar{N}H_{\Tt}\log \kappa + Nn H_{\Tt} + Nn\log n + N^2)$, or approximately $\O(N^2\log N)$. 

\end{itemize}

Thus, for \FlowTGW, one can reduce its complexity $\O(N^3 \log N)$ for a naive implementation into nearly $\O(N^2)$ (or into $\O(N^2\log N)$ for the degenerate case) with the proposed efficient computation in Section 3.2 in the main text. 

Note that when one can not screen out any aligned-root \FlowTGW s, the computation of \FlowTGW~requires at least $N^2$ comparisons among aligned-root \FlowTGW~(choosing the optimal pair of roots from the values of $N^2$ aligned-root \FlowTGW). Therefore, for this case, $\O(N^2)$ is also the optimal complexity for \FlowTGW.

\subsection{\DepthTGW}\label{sec:detail_DepthTGW}

Similar to \FlowTGW~in Section~\ref{sec:detail_FlowTGW}, there are two types of applications: \textit{without} or \textit{with} priori knowledge about tree metrics for supports in probability measures. For general applications where one usually does not have priori knowledge about tree metrics for probability measures, one can apply clustering-based tree metric sampling~\citep{le2019tree} to sample tree metrics for supports of probability measures. For some specific applications where one knows tree metric for each probability measure, one needs to search the optimal aligned roots, e.g., by exhausted search.

\subsubsection{Applications \textit{without} priori knowledge about tree metrics for supports in probability measures}\label{sec:detail_DTGW_without} 
For those general applications without priori knowledge about tree metrics for supports in probability measures, one can use clustering-based tree metric approach to sample tree metrics for supports of the probability measures.

One can compute \DepthTGW~between $\mu=\sum_{i \in [n]} a_i \delta_{x_i}$ and $\nu=\sum_{j \in [m]} b_j \delta_{z_j}$ as follow:
\begin{itemize}
\item Step 1: Sample aligned-root tree metrics $\Tt_X$ and $\Tt_Z$ for supports $x_i \mid_{i \in [n]}$ and $z_j \mid_{j \in [m]}$ in probability measures $\mu$ and $\nu$ respectively (similar to Step 1 for \FlowTGW).
\item Step 2: Based on the sampled aligned-root tree metrics, \DepthTGW~(Equation (7) in the main text) is equivalent to aligned-root \DepthTGW~(Equation (6) in the main text). For each probability measure, we construct 2-depth-level tree for all nodes from the tree root to each support of the probability measure as in Algorithm~\ref{alg:Construct_2DepthLevelTree}\footnote{Constructing 2-depth-level tree for all nodes from the tree root to each support of probability measures as in Algorithm~\ref{alg:Construct_2DepthLevelTree} can be considered as a preprocessing step since those 2-depth-level trees are needed during the hierarchical alignment along each deep level in trees for a computation of \DepthTGW.}.

\begin{algorithm}[H] 
\caption{Construct 2-depth-level tree} 
\label{alg:Construct_2DepthLevelTree} 
\begin{algorithmic}[1] 
    \REQUIRE Input empirical measure $\mu = \sum_{i \in [n]} a_i \delta_{x_i}$, tree metric $\Tt_X$.
    \ENSURE Set of 2-depth-level trees for $\mu$.
	\STATE Construct a set of paths $S_{\tilde{p}}$ for supports $x_i \mid_{i \in [n]}$ where each element is a path from a root to each support.
	\STATE From the set of paths $S_{\tilde{n}}$, construct a set of nodes where each node belongs to at least one path of the set of paths $S_{\tilde{p}}$.
	\STATE For each node in $S_{\tilde{n}}$, construct a 2-depth-level tree for that node in tree $\Tt$ for $\mu$, as in Section 4 in the main text. 
	\STATE Gather all those 2-depth-level trees to form the set of 2-depth-level trees for $\mu$.
\end{algorithmic}
\end{algorithm}

\item Step 3: Compute the aligned-root \DepthTGW~(Equation (6) in the main text). It starts from a comparison between 2-depth-level tree constructed from each root of $\Tt_X$, and $\Tt_Z$ for $\mu$ and $\nu$ respectively, with optimal matching mass $1$. 
\begin{itemize}
\item If it is \textit{not} a simple case\footnote{A simple case for a pair of considered nodes is defined as: at least one node of the considered pair does not have child nodes, or sum of its child-node weights is equal to $0$.}, then we compute the disrepancy between 2-depth-level tree as aligned-root \FlowTGW~by simply sorting supports and using Algorithm~\ref{alg:OT_1D_EM}. Then, we push all the matching pairs between child nodes and their optimal matching mass into the queue.

\item If it is a simple case where both two nodes of the considered pair do not have child nodes, or sum of their child-node weights is equal to $0$, then their discrepancy is equal to $0$.

\item If it is a simple case where one of two considered nodes, but not both of them, does not have child nodes, or sum of its child-node weights is equal to $0$, then their discrepancy is equal to sum of normalized-weighted lengths of paths from that node to supports of a corresponding measure which are in the subtree rooted at that node. 
\end{itemize}
We stop the computation when the queue is empty. The aligned-root \DepthTGW~is equal to sum of all weighted discrepancies between 2-depth-level trees.
\end{itemize}

We summarize the computation for \DepthTGW~by sampling aligned-root tree metrics in Algorithm~\ref{alg:DepthTGW_SampledTM}.

We next give a complexity analysis for \DepthTGW:
\begin{itemize}
\item Recall that the complexity of sampling tree metric $\O(\bar{N} H_{\Tt} \log \kappa)$ where $H_{\Tt}$ is a predefined deepest level of tree $\Tt$ and $\kappa$ is the number of clusters in the farthest-point clustering for the clustering-based tree metric sampling~\citep{le2019tree}; $\bar{N}$ is the input number of supports. 
\item The complexity of constructing 2-depth-level trees is $\O(n H_{\Tt} \kappa)$ (we have $n$ supports, each path from a root to a support has less than $H_{\Tt}$ nodes, and each 2-depth-level tree has less than or equal $(\kappa + 1)$ nodes).
\item The complexity to compute the univariate OT between 2-depth-level trees is $\O(\kappa \log \kappa)$.
\item At deep level $(h + 1)$, the number of nodes is not more than $\kappa^h$. So, the number of pairs of nodes at deep level $(h + 1)$ is not more than $\kappa^{2h}$. Let $\Delta$ be the number of comparisons $\Delta$ for 2-depth-level trees, we have $\Delta \le \left(\kappa^{2H_{\Tt}} - 1 \right)/\left(\kappa^2 - 1\right)$.
\end{itemize}
Therefore, one can implement the computation of \DepthTGW~with a complexity $\O(\bar{N} H_{\Tt} \log \kappa + n H_{\Tt} \kappa + \Delta \kappa \log \kappa)$.

\begin{algorithm}[H] 
\caption{\DepthTGW~for probability measures by sampling aligned-root tree metrics} 
\label{alg:DepthTGW_SampledTM} 
\begin{algorithmic}[1] 
    \REQUIRE Input probability measures $\mu = \sum_{i \in [n]} a_i \delta_{x_i}$, and $\nu = \sum_{j \in [m]} b_j \delta_{z_j}$.
    \ENSURE \DepthTGW~discrepancy $d$.
	\STATE Sample aligned-root tree metrics $\Tt_X, \Tt_Z$ for $\mu, \nu$ respectively, e.g., by choosing a mean of support data as its root for the clustering-based tree metric sampling~\citep{le2019tree}.
	
	\STATE Construct $S_{\Tt_X}, S_{\Tt_Z}$: sets of 2-depth-level trees for $\mu, \nu$ in tree $\Tt_X, \Tt_Z$ respectively by using Algorithm~\ref{alg:Construct_2DepthLevelTree}.
	
	\STATE Initialization with a pair of roots $(r_x, r_z)$ of tree $\Tt_X, \Tt_Z$ respectively, and the optimal matching mass $T^{*}(r_x, r_z) = 1$, and $d \leftarrow 0$.
	\STATE Push $\left\{ (r_x, r_z), T^{*}(r_x, r_z)  \right\}$ into a queue $\texttt{Q}$.
	
     	\WHILE{$\texttt{Q}$ is not empty}
	\STATE Pull $(x, z), T^{*}(x, z)$ from queue $\texttt{Q}$.
	\STATE Get corresponding 2-depth-level trees: $\Tt^2_{x}, \Tt^2_{z}$ for $\mu, \nu$ from $S_{\Tt_X}, S_{\Tt_Z}$ respectively.
	\STATE \% For two simple 2-depth-level trees, the discrepancy between $\mu_{\Tt^2_x}, \nu_{\Tt^2_z}$ is equal to $0$.
	\IF{One of two 2-depth-level trees $\Tt^2_{x}, \Tt^2_{z}$ is simple, but not both of them}	
		\IF{$\Tt^2_{x}$ is simple}
			\STATE Get all paths from $x$ to each support of $\mu$ in the subtree of $\Tt_X$ rooted at $x$.
			\STATE Normalize for weights of those supports of corresponding paths.
			\STATE Compute $\tilde{d}$ as a sum of weighted lengths for those paths.
			\STATE $d \leftarrow d + T^{*}(x, z)\tilde{d}$.
		\ELSE
			\STATE Get all paths from $z$ to each support of $\nu$ in the subtree of $\Tt_Z$ rooted at $z$.
			\STATE Normalize for weights of those supports of corresponding paths.
			\STATE Compute $\tilde{d}$ as a sum of weighted lengths for those paths.
			\STATE $d \leftarrow d + T^{*}(x, z)\tilde{d}$.
		\ENDIF
	\ELSIF{Two 2-depth-level trees $\Tt^2_{x}, \Tt^2_{z}$ are not simple}
		\STATE Sort the distances from a root to each node in tree $\Tt^2_{x}, \Tt^2_{z}$.
		\STATE Compute univariate OT $\tilde{d}$ and the optimal transportation plan $\tilde{T}^{*}$ for empirical measures with sorted supports $\mu_{\Tt^2_x}, \nu_{\Tt^2_z}$ by using Algorithm~\ref{alg:OT_1D_EM}.
		\STATE $d \leftarrow d + T^{*}(x, z)\tilde{d}$.
		\STATE Compute weighted optimal transport plan $\tilde{T}^{*} \leftarrow T^{*}(x, z)\tilde{T}^{*}$.
		\STATE For all child nodes $u, v$ of $\Tt^2_{x}, \Tt^2_{z}$ respectively, if their optimal matching mass $\tilde{T}^{*}(u, v) > 0$, push $\{(u, v), \tilde{T}^{*}(u, v)\}$ into queue $\texttt{Q}$.
	\ENDIF
	\ENDWHILE
\end{algorithmic}
\end{algorithm}

\subsubsection{Applications \textit{with} priori knowledge about tree metrics for supports in probability measures}\label{sec:detail_DTGW_with} 

For some specific applications where one has a priori knowledge about tree metric for supports in each probability measures, one can easily tailor Algorithm~\ref{alg:DepthTGW_SampledTM} with existing tree metrics to compute aligned-root \DepthTGW. Thus, for the \DepthTGW, one needs to search the optimal pair of roots for the given tree metrics\footnote{Assume that for each tree metric, each node has at most $\kappa$ child nodes, the tree deep is $H_{\Tt}$, and the number of nodes in tree is about $N$.}, where one uses the aligned-root \DepthTGW~for each pair of roots. Overall, the complexity of \DepthTGW~with exhausted search for the optimal pair of roots is approximately $\O(N^2 n H_{\Tt} \kappa + N^2 \Delta \kappa \log \kappa)$.

\section{Further experimental results}\label{sec:further_experimental_results}

We denote EGW$_0$ for the standard entropic Gromov-Wasserstein where we use entropic regularization for both optimizing the transport plan and computing entropic GW. 

\subsection{Further experimental results on quantum chemistry and document classification} 


We illustrate the trade-off between performances and time consumption for the discrepancies for probability measures in different spaces when their parameters are changed, e.g., entropic regularization in EGW and EGW$_0$, and the number of (tree) slices in SGW, \FlowTGW~(FA), and \DepthTGW~(DA)
\begin{itemize}
\item for quantum chemistry (\texttt{qm7} dataset) in Figure~\ref{fg:qm7_para_sub},
\item for document classification
	\begin{itemize}
		\item in \texttt{TWITTER} dataset in Figure~\ref{fg:twitter_para_sub},
		\item in \texttt{RECIPE} dataset in Figure~\ref{fg:recipe_para_sub},
		\item in \texttt{CLASSIC} dataset in Figure~\ref{fg:classic_para_sub},
		\item in \texttt{AMAZON} dataset in Figure~\ref{fg:amazon_para_sub}.
	\end{itemize}
\end{itemize}

The entropic term in standard entropic GW (EGW$_0$) may harm its performances (comparing with EGW) (e.g., in \texttt{qm7, RECIPE, CLASSIC} datasets illustrated in Figure~\ref{fg:qm7_para_sub}, Figure~\ref{fg:recipe_para_sub}, Figure~\ref{fg:classic_para_sub} respectively). Performances of EGW and the standard EGW$_{0}$ are improved when entropic regularization (eps) is smaller, but their computational time is considerably increased.


\begin{figure*}
  \begin{center}
       \includegraphics[width=\textwidth]{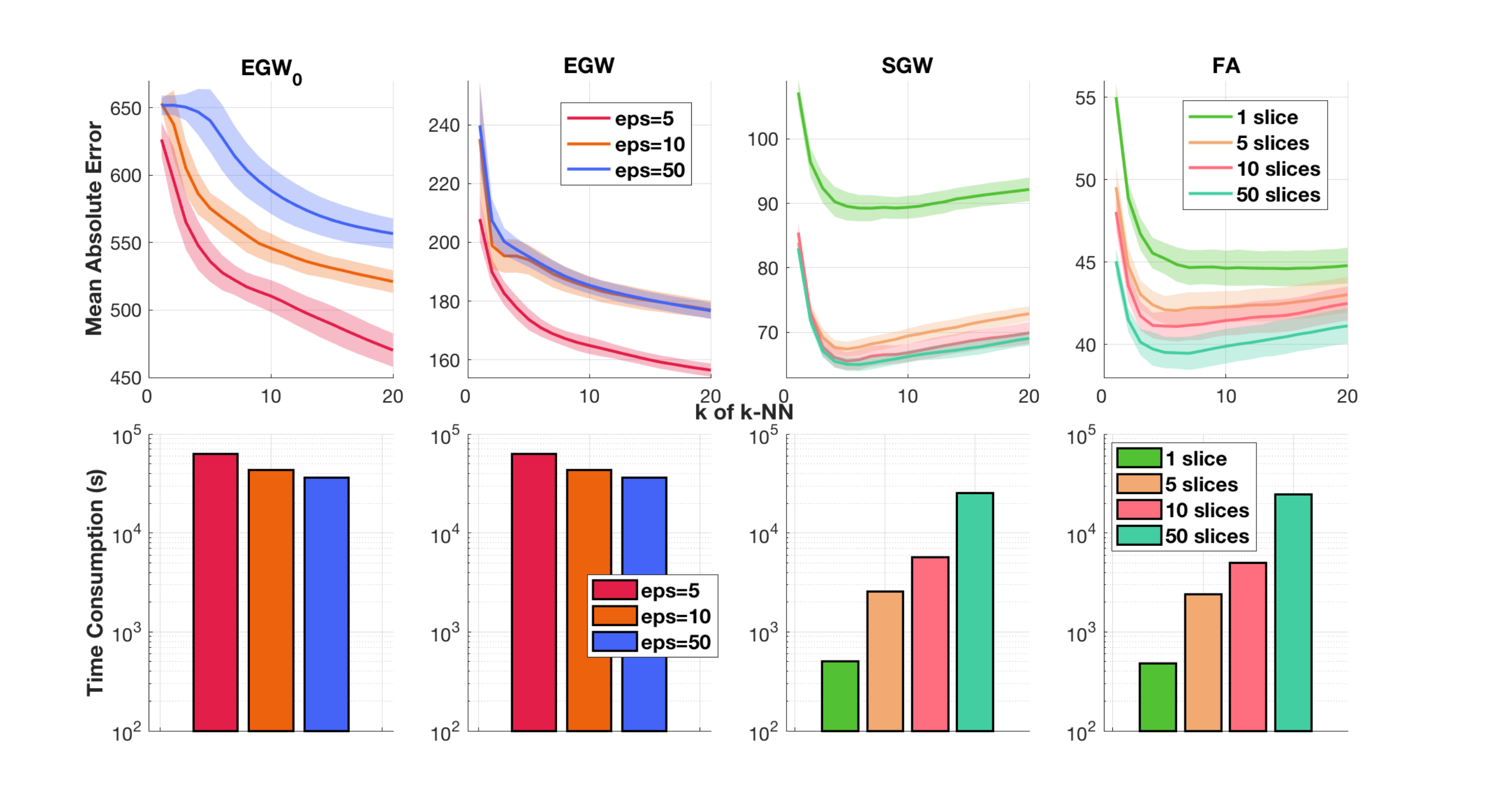}
  \end{center}
  \caption{Results of MAE and time consumption for the discrepancies with different parameters (e.g. entropic regularization in EGW$_0$/EGW, and number of (tree) slices in SGW/FA) in $k$-NN regression in \texttt{qm7} dataset. For clustering-based tree metric approach, we used its suggested parameters ($\kappa=4$, $H_{\Tt}=6$).}
  \label{fg:qm7_para_sub}
\end{figure*}

\begin{figure*}
  \begin{center}
    \includegraphics[width=\textwidth]{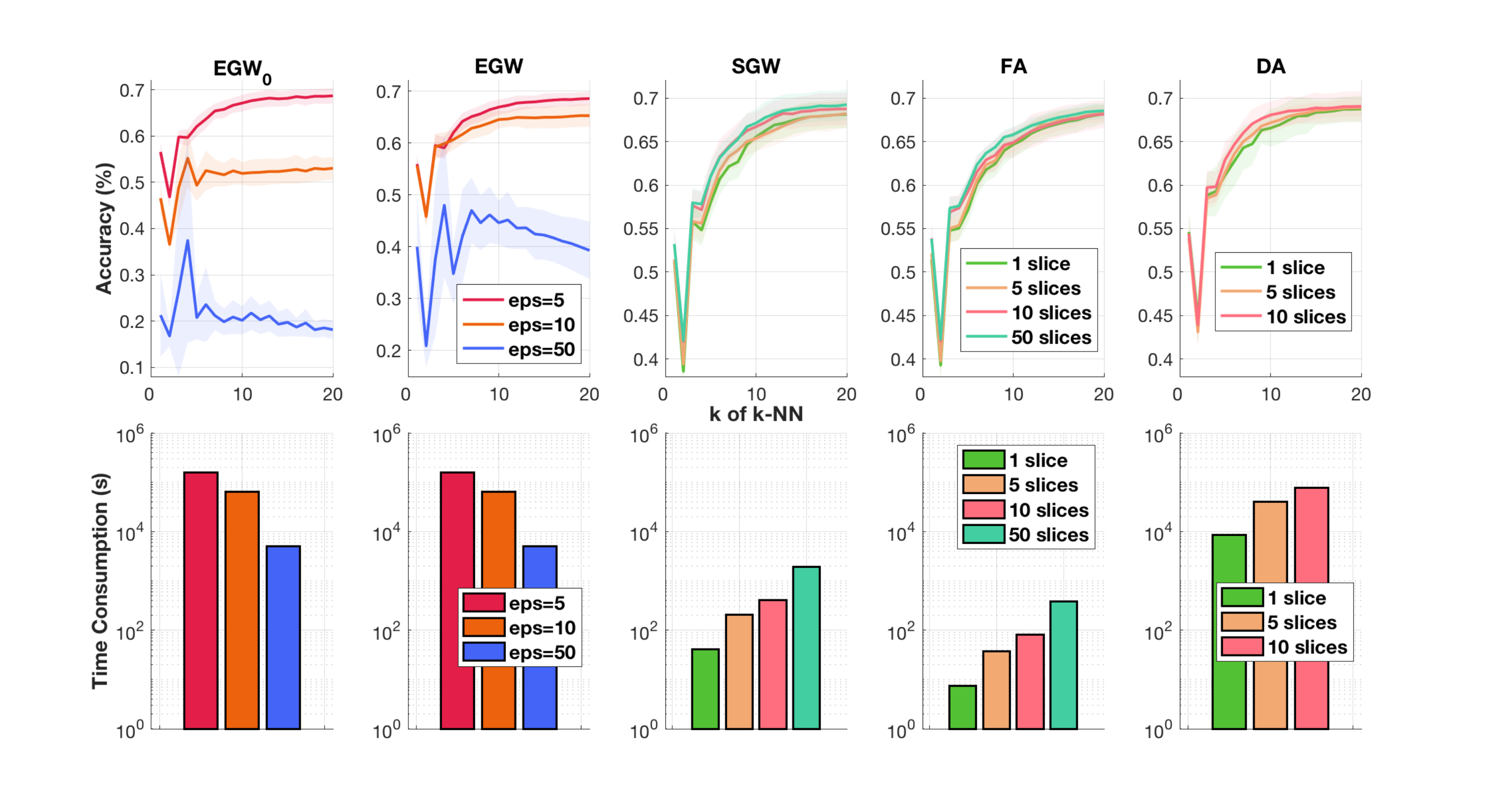}
  \end{center}
  \caption{Results of averaged accuracy and time consumption for the discrepancies with different parameters (e.g., entropic regularization in EGW$_0$/EGW, and the number of (tree) slices in SGW/FA/DA in $k$-NN in \texttt{TWITTER} dataset. For clustering-based tree metric approach, we used its suggested parameters ($\kappa=4$, $H_{\Tt}=6$).}
  \label{fg:twitter_para_sub}
\end{figure*}

\begin{figure*}
  \begin{center}
    \includegraphics[width=\textwidth]{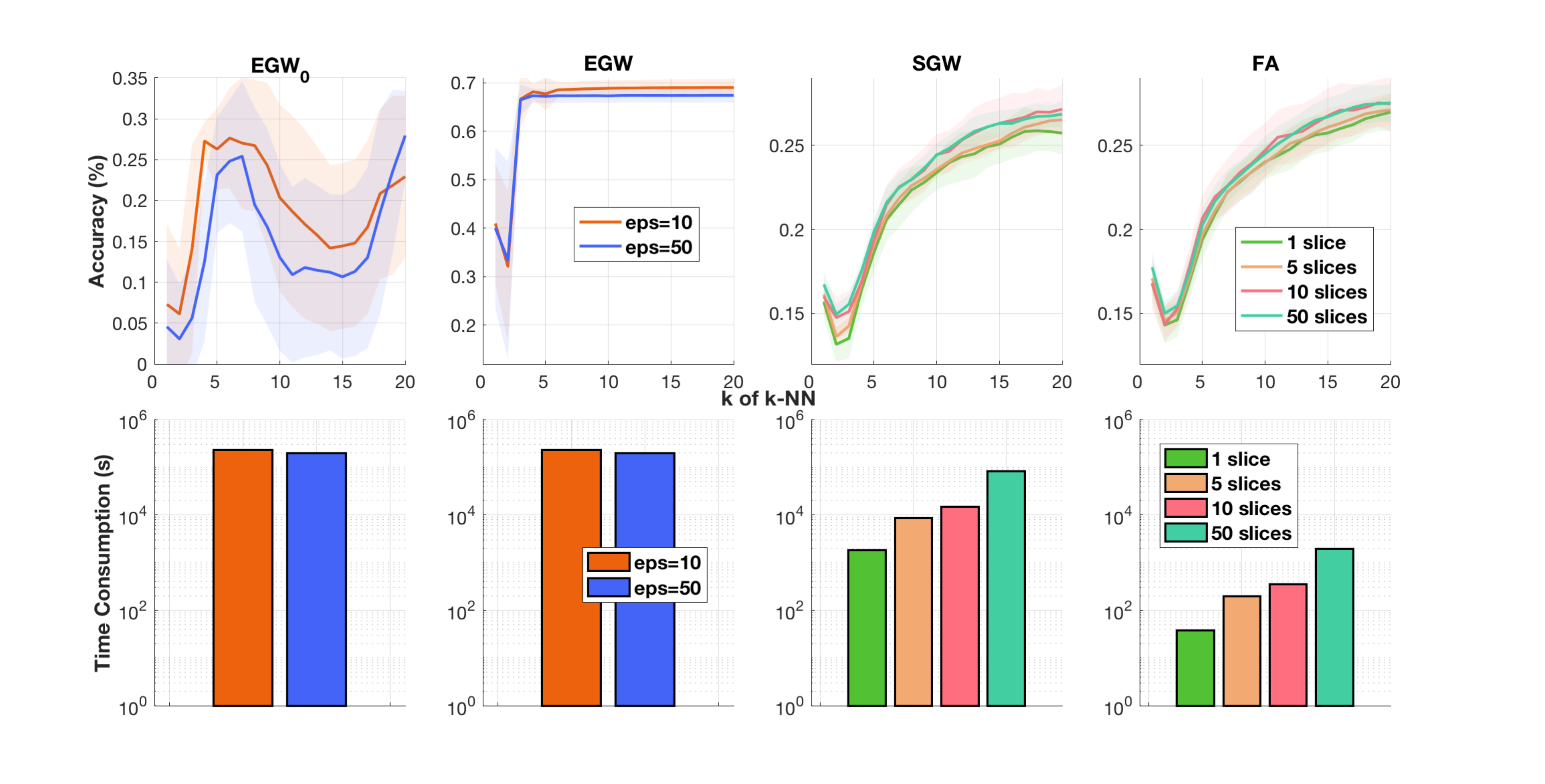}
  \end{center}
  \caption{Results of averaged accuracy and time consumption for the discrepancies with different parameters, e.g., entropic regularization in EGW$_0$/EGW, and the number of (tree) slices in SGW/FA in $k$-NN in \texttt{RECIPE} dataset. For clustering-based tree metric approach, we used its suggested parameters ($\kappa=4$, $H_{\Tt}=6$).}
  \label{fg:recipe_para_sub}
\end{figure*}

\begin{figure*}
  \begin{center}
    \includegraphics[width=\textwidth]{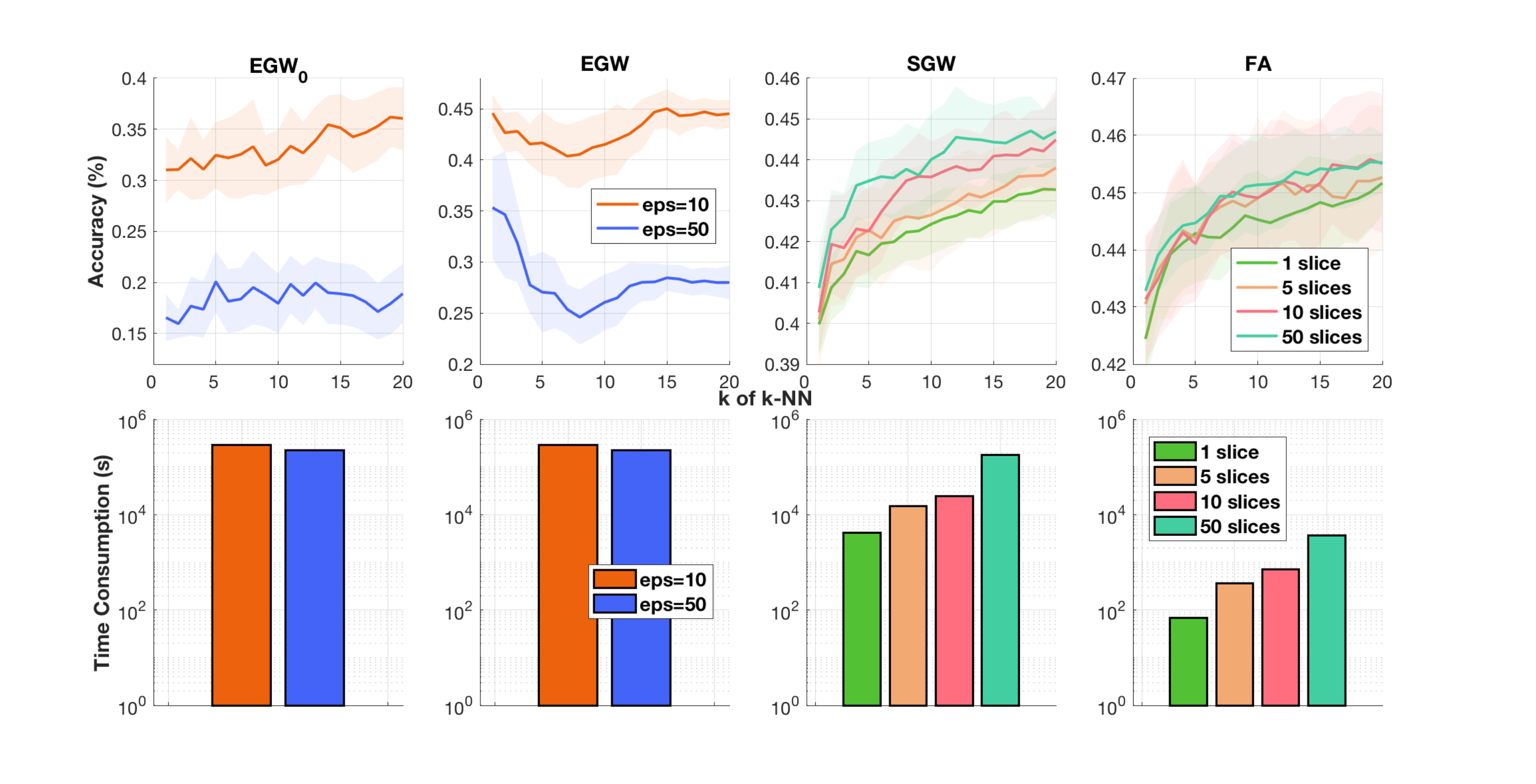}
  \end{center}
  \caption{Results of averaged accuracy and time consumption for the discrepancies with different parameters, e.g., entropic regularization in EGW$_0$/EGW, and the number of (tree) slices in SGW/FA in $k$-NN in \texttt{CLASSIC} dataset. For clustering-based tree metric approach, we used its suggested parameters ($\kappa=4$, $H_{\Tt}=6$).}
  \label{fg:classic_para_sub}
\end{figure*}

\begin{figure*}
  \begin{center}
    \includegraphics[width=\textwidth]{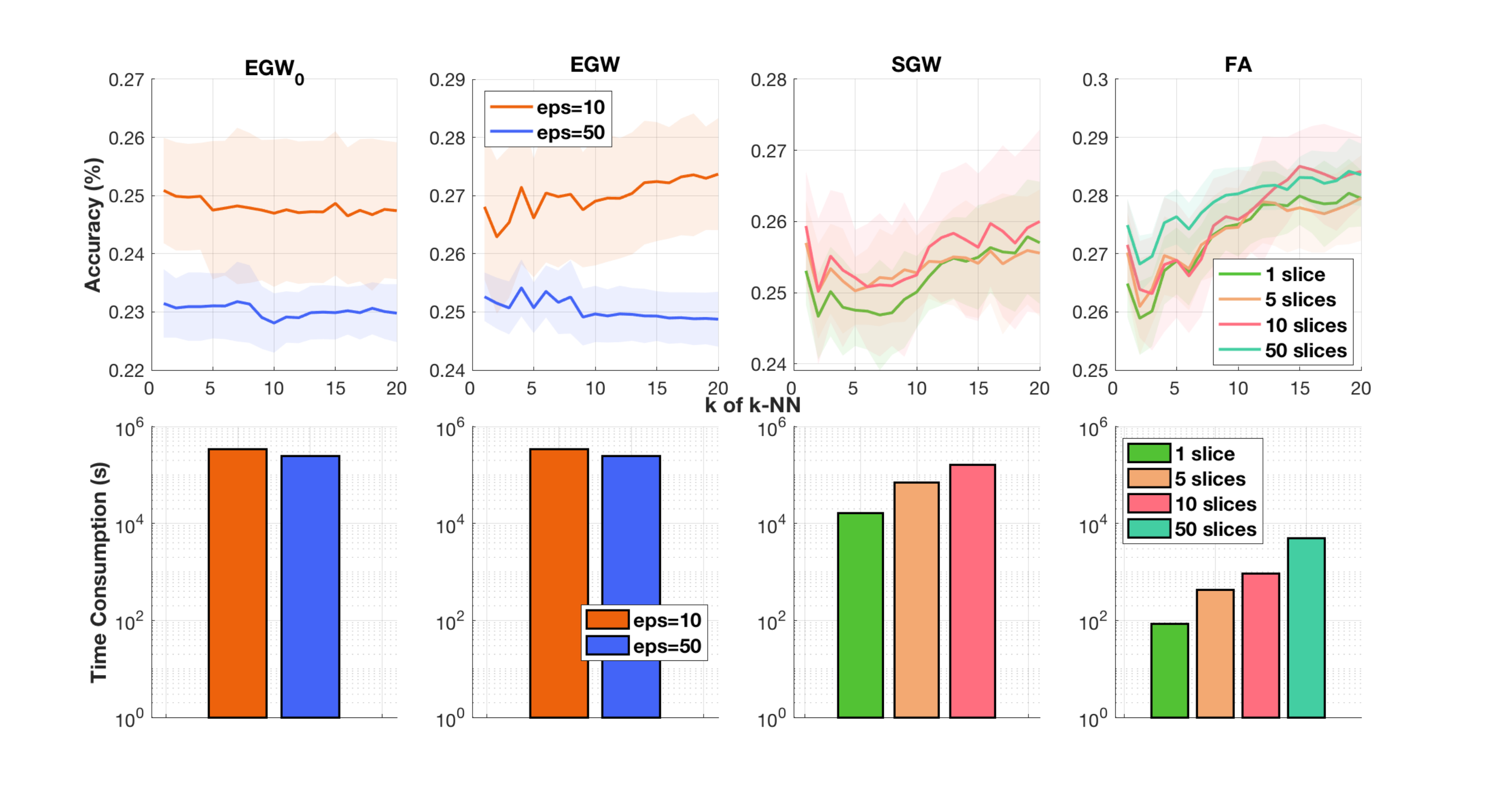}
  \end{center}
  \caption{Results of averaged accuracy and time consumption for the discrepancies with different parameters, e.g., entropic regularization in EGW$_0$/EGW, and the number of (tree) slices in SGW/FA in $k$-NN in \texttt{AMAZON} dataset. For clustering-based tree metric approach, we used its suggested parameters ($\kappa=4$, $H_{\Tt}=6$).}
  \label{fg:amazon_para_sub}
\end{figure*}

\subsection{Time consumption for the clustering-based tree metric sampling}

Time consumption for tree metric sampling by the clustering-based tree metric method \citep{le2019tree} is negligible in computation for both \FlowAlign~and \DepthAlign. Indeed, we illustrate time consumption for tree metric sampling with different parameters, e.g., the predefined deepest level $H_{\Tt}$, the number of clusters $\kappa$, for the clustering-based tree metric sampling~\citep{le2019tree} in \texttt{qm7, TWITTER, RECIPE, CLASSIC, AMAZON} datasets in Figure~\ref{fg:treemetric_time}. For examples, for each tree metric sampling with the suggested parameters $(H_{\Tt}=6, \kappa=4)$, it only took about $0.4$ seconds for \texttt{qm7} dataset, $1.5$ seconds for \texttt{TWITTER} dataset, $11.0$ seconds for \texttt{RECIPE} dataset, $17.5$ seconds for \texttt{CLASSIC} dataset, and $20.5$ seconds for \texttt{AMAZON} dataset. Furthermore, we give a brief review for the clustering-based tree metric sampling in Section~\ref{sec:clustering_based_TM}.

\begin{figure*}
  \begin{center}
    \includegraphics[width=\textwidth]{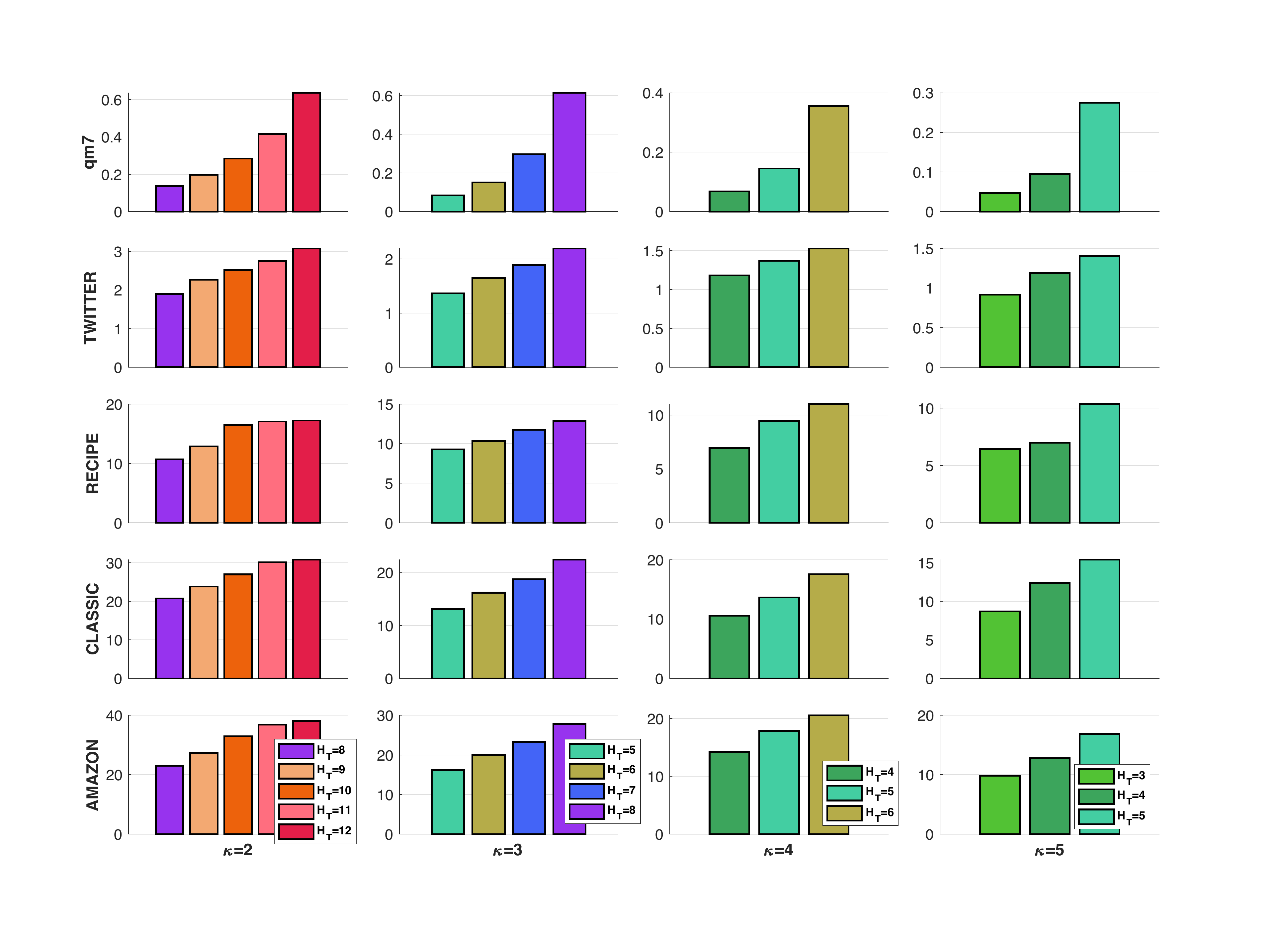}
  \end{center}
  \caption{Time consumption (seconds) for a tree metric sampling in \FlowAlign~and \DepthAlign~by the clustering-based tree metric method~\citep{le2019tree} with different parameters (e.g. the predefined deepest level $H_{\Tt}$, the number of clusters $\kappa$) in quantum chemistry (\texttt{qm7} dataset), and document classification (\texttt{TWITTER, RECIPE, CLASSIC, AMAZON} datasets).}
  \label{fg:treemetric_time}
\end{figure*}

\subsection{Experiment results with different parameters for tree metric sampling}

We illustrate results of mean absolute error (MAE) and time consumption for \FlowAlign~(10 tree slices) with different parameters, e.g., the predefined deepest level $H_{\Tt}$, the number of clusters $\kappa$, in the clustering-based tree metric sampling:

\begin{itemize}
\item in \texttt{qm7} dataset in Figure~\ref{fg:qm7_para_TM10_appendix},

\item in \texttt{TWITTER} dataset in Figure~\ref{fg:twitter_para_TM10_appendix},

\item in \texttt{RECIPE} dataset in Figure~\ref{fg:recipe_para_TM10_appendix},

\item in \texttt{CLASSIC} dataset in Figure~\ref{fg:classic_para_TM10_appendix},

\item in \texttt{AMAZON} dataset in Figure~\ref{fg:amazon_para_TM10_appendix}.
\end{itemize}

\begin{figure}
  \begin{center}
    \includegraphics[width=\textwidth]{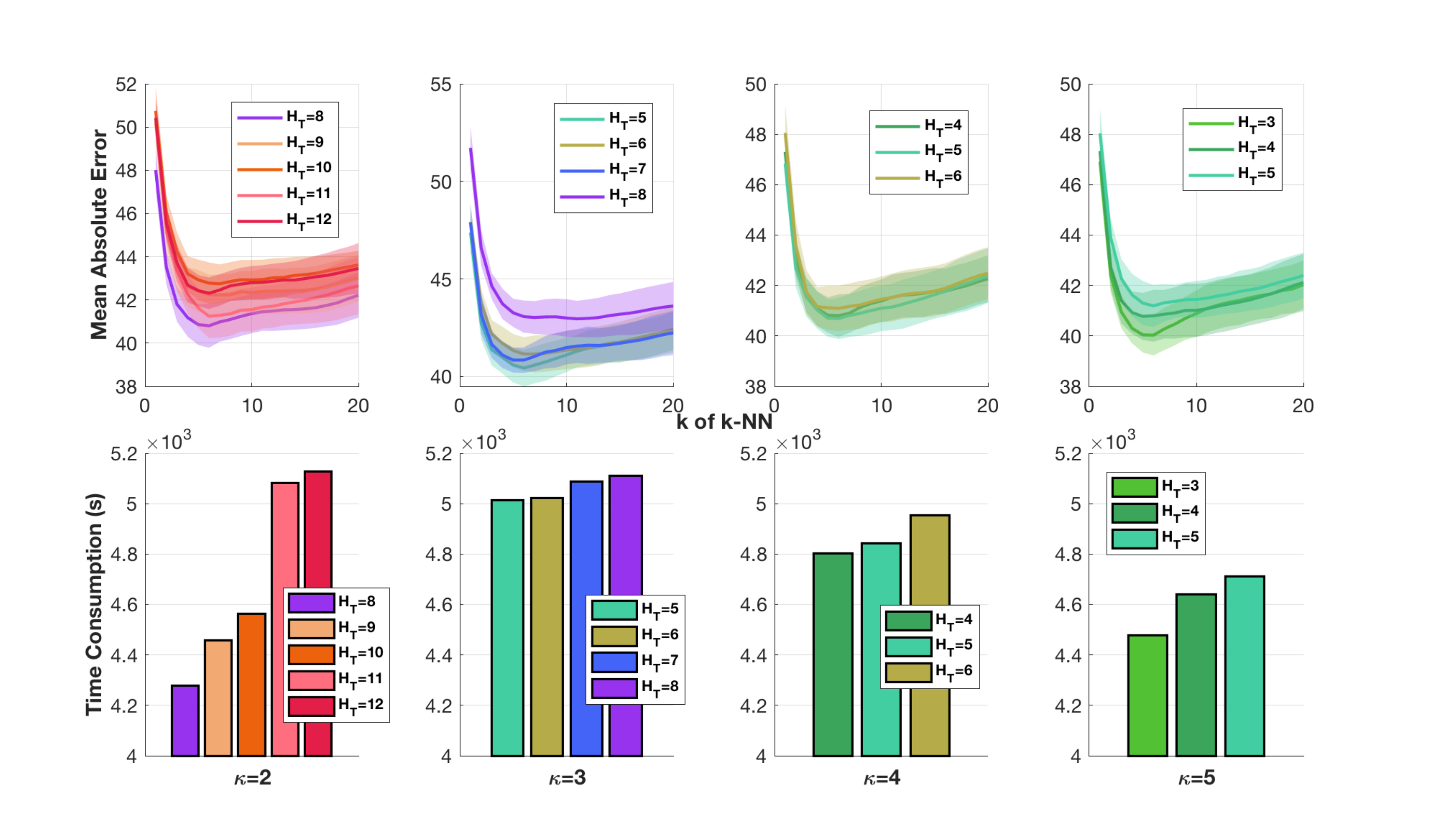}
  \end{center}
  \caption{Results of mean absolute error and time consumption for \FlowAlign~(10 tree slices) with different parameters (e.g. the predefined deepest level $H_{\Tt}$, the number of clusters $\kappa$) in the clustering-based tree metric sampling in \texttt{qm7} dataset.}
  \label{fg:qm7_para_TM10_appendix}
\end{figure}


\begin{figure}
  \begin{center}
    \includegraphics[width=\textwidth]{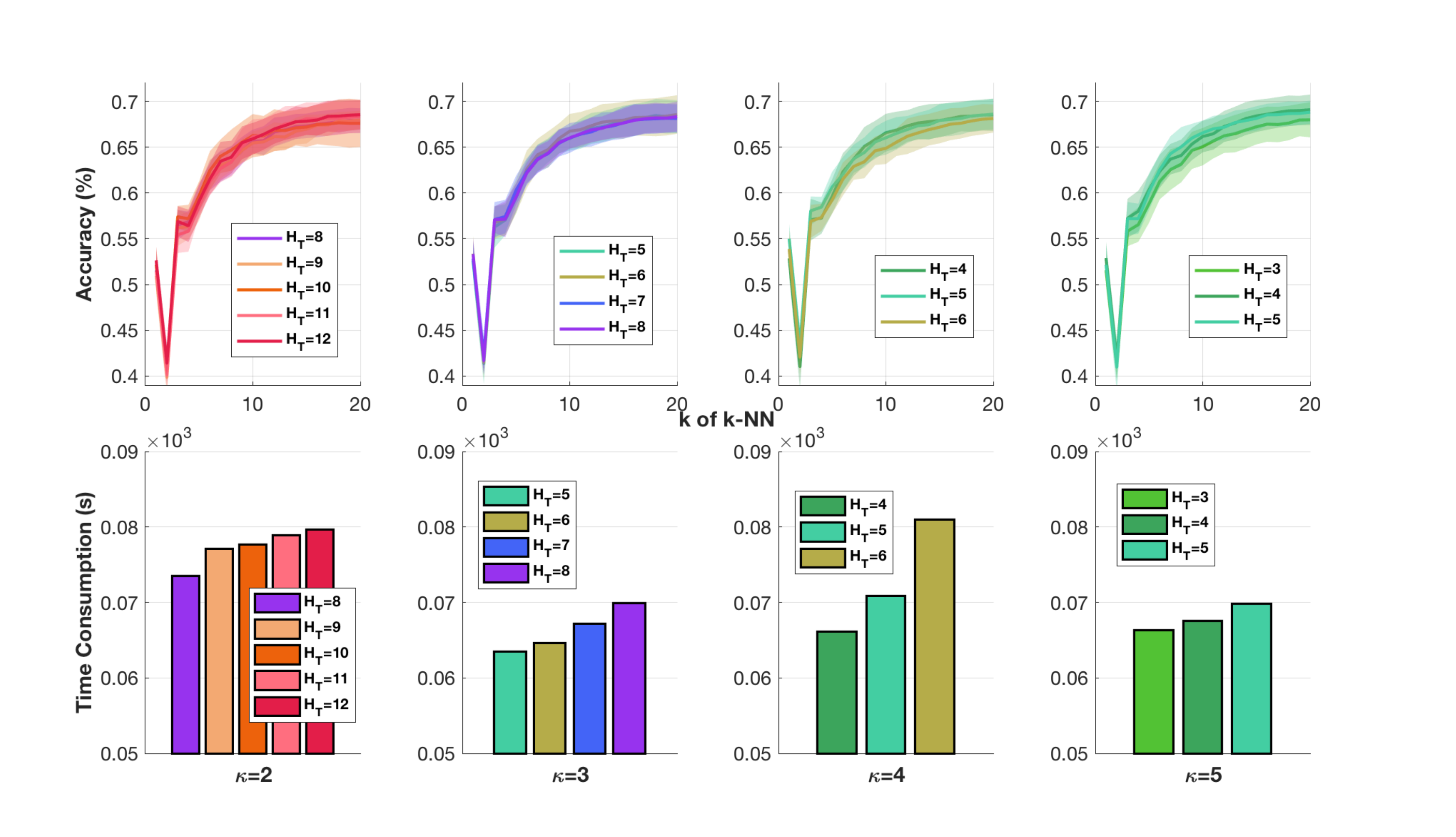}
  \end{center}
  \caption{Results of averaged accuracy and time consumption for \FlowAlign~(10 tree slices) with different parameters (e.g. the predefined deepest level $H_{\Tt}$, the number of clusters $\kappa$) in the clustering-based tree metric sampling in \texttt{TWITTER} dataset.}
  \label{fg:twitter_para_TM10_appendix}
\end{figure}


\begin{figure}
  \begin{center}
    \includegraphics[width=\textwidth]{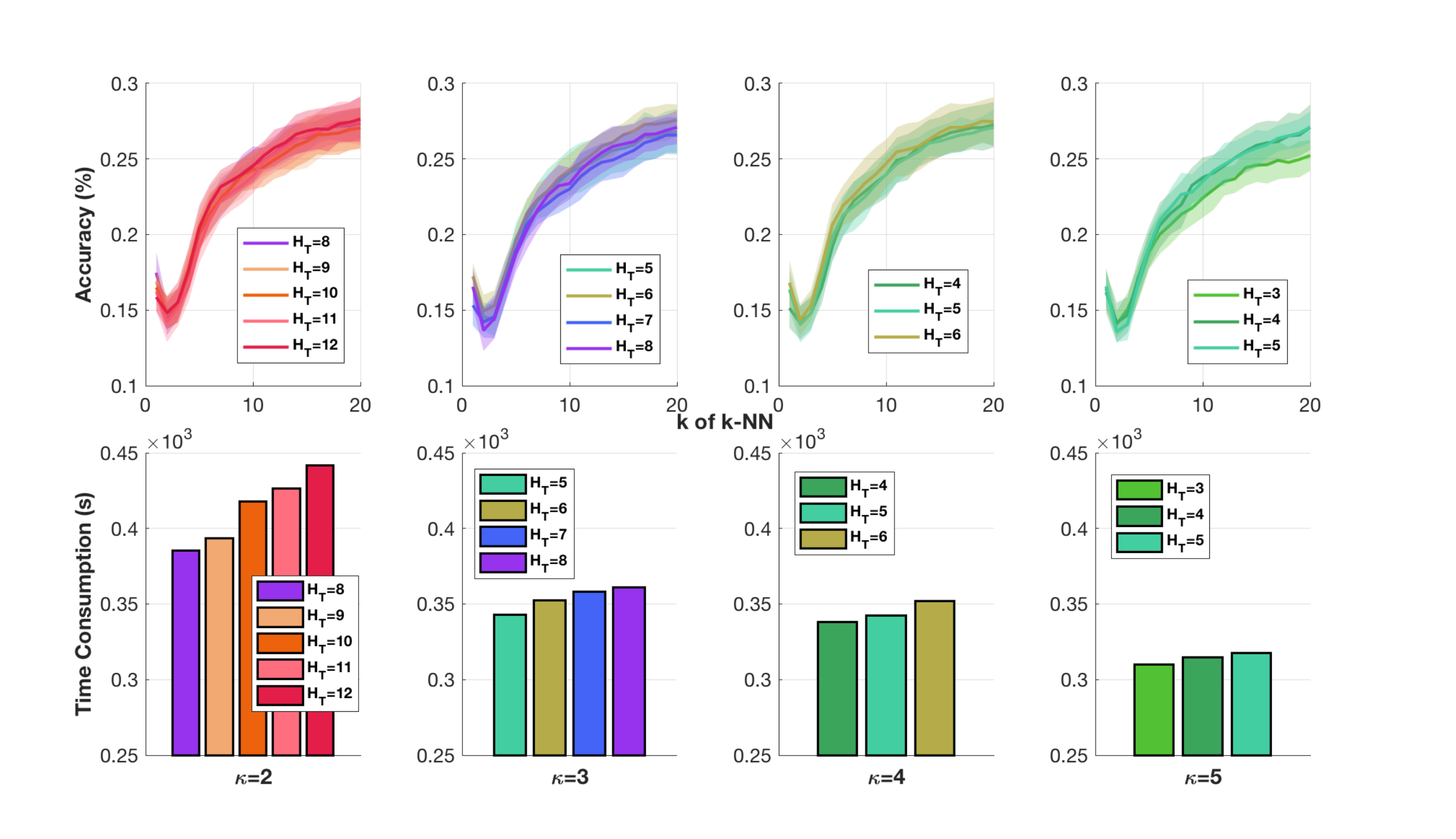}
  \end{center}
  \caption{Results of averaged accuracy and time consumption for \FlowAlign~(10 tree slices) with different parameters (e.g. the predefined deepest level $H_{\Tt}$, the number of clusters $\kappa$) in the clustering-based tree metric sampling in \texttt{RECIPE} dataset.}
  \label{fg:recipe_para_TM10_appendix}
\end{figure}


\begin{figure}
  \begin{center}
    \includegraphics[width=\textwidth]{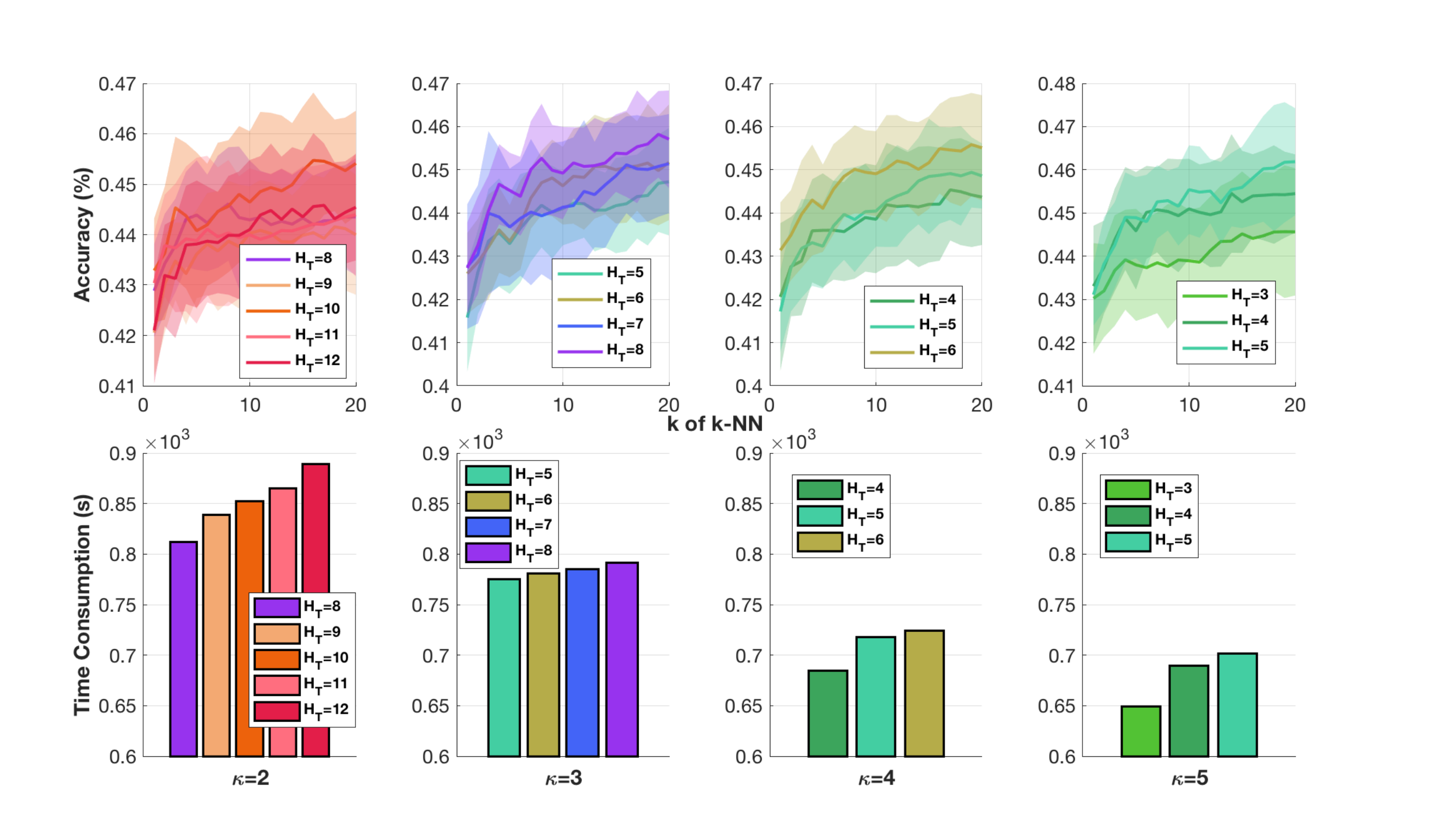}
  \end{center}
  \caption{Results of averaged accuracy and time consumption for \FlowAlign~(10 tree slices) with different parameters (e.g. the predefined deepest level $H_{\Tt}$, the number of clusters $\kappa$) in the clustering-based tree metric sampling in \texttt{CLASSIC} dataset.}
  \label{fg:classic_para_TM10_appendix}
\end{figure}


\begin{figure}
  \begin{center}
    \includegraphics[width=\textwidth]{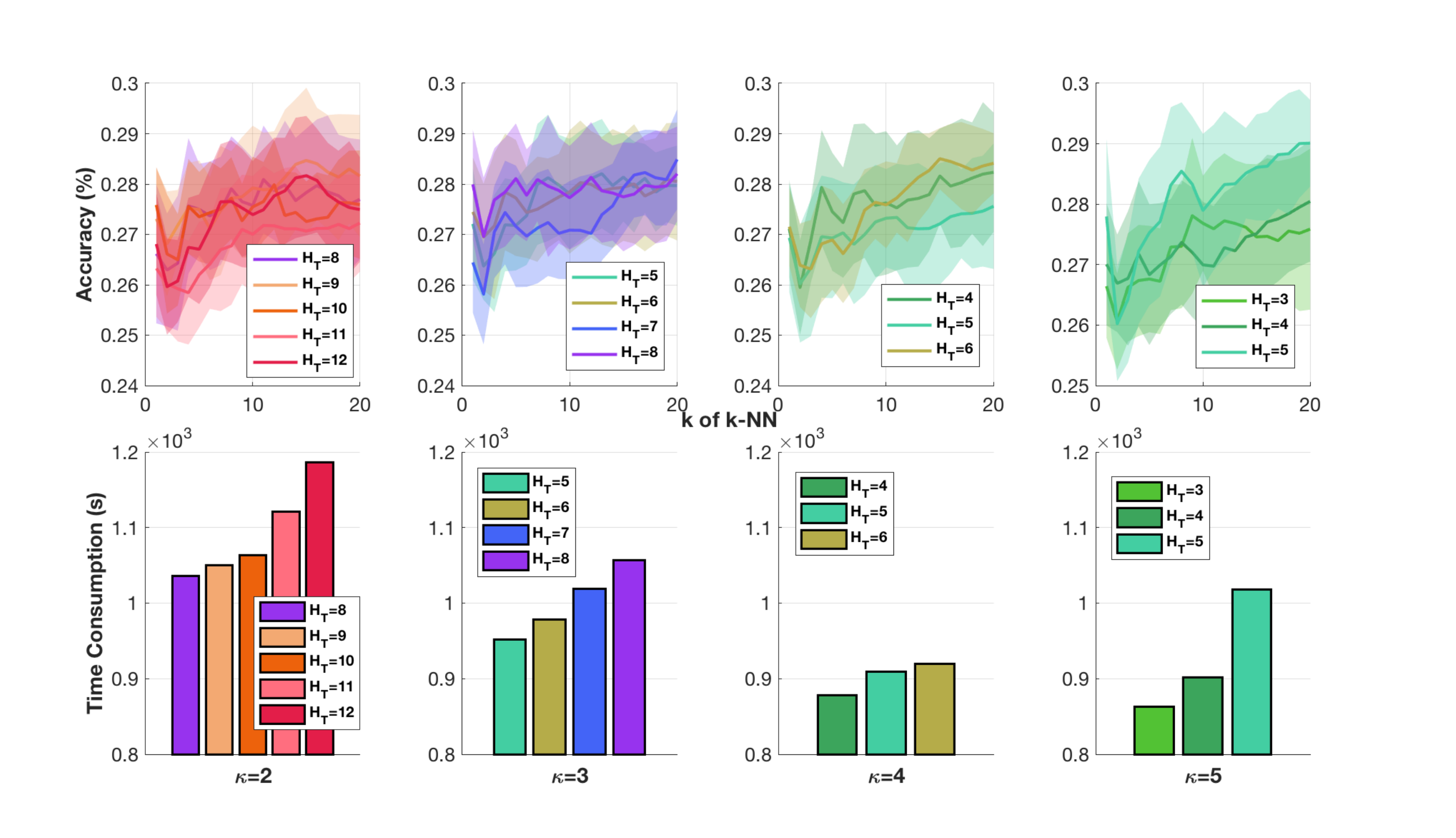}
  \end{center}
  \caption{Results of averaged accuracy and time consumption for \FlowAlign~(10 tree slices) with different parameters (e.g. the predefined deepest level $H_{\Tt}$, the number of clusters $\kappa$) in the clustering-based tree metric sampling in \texttt{AMAZON} dataset.}
  \label{fg:amazon_para_TM10_appendix}
\end{figure}

\subsection{Further experimental results: $k$-means clustering on a small experimental setup for performance comparison on random rotated \texttt{MNIST} dataset} 

We follow the small experimental setup for performance comparison on \textit{random rotated} \texttt{MNIST} dataset as in \citep{peyre2016gromov}. We randomly select $50$ point clouds from each digits $0$ to $4$, apply $k$-means clustering with $k=5$, and $k$-means++ initialization. We show the performance comparison for $k$-means clustering in Figure~\ref{fg:kmeans_small_MNIST}\footnote{The barycenter for SGW is not published yet.}. The performances of \FlowAlign~are comparative with EGW. Moreover, \FlowAlign~is several order faster than EGW. The performances of EGW are better when entropic regularization (eps) is smaller, but the time consumption is also higher. 

\begin{figure}
  \begin{center}
    \includegraphics[width=0.45\textwidth]{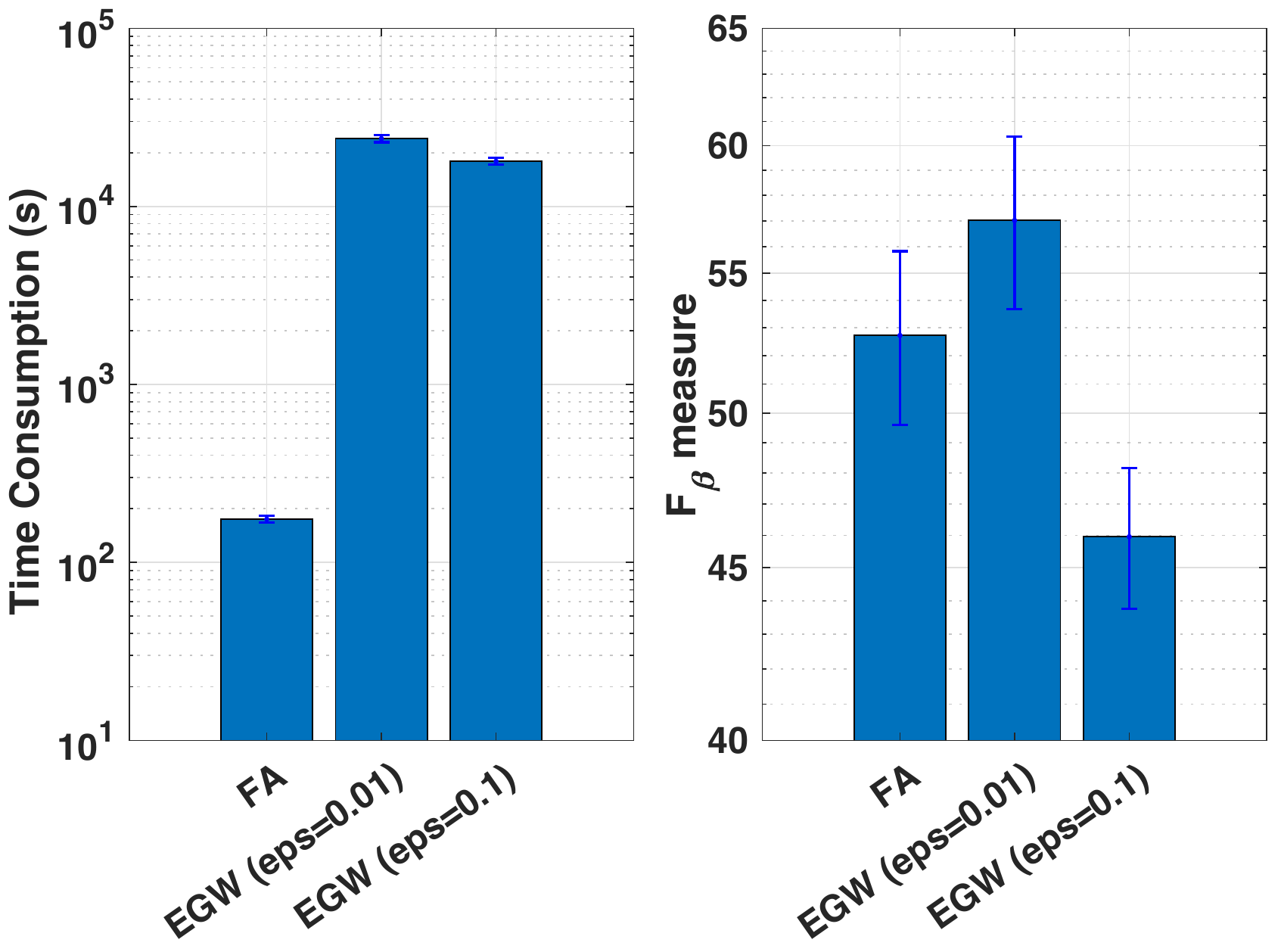}
  \end{center}
  \caption{Results of $k$-means clustering on a small experiment setup for performance comparison on \textit{random rotated} \texttt{MNIST} dataset.}
  \label{fg:kmeans_small_MNIST}
\end{figure}

\section{Some brief reviews}\label{sec:brief_review}

We give brief reviews for the farthest-point clustering \citep{gonzalez1985clustering} (a more detail summarization and discussion can be seen in~\citep{le2019tree}), the clustering-based tree metric sampling~\citep{le2019tree}, tree metric in \citep{semple2003phylogenetics}, and $F_{\beta}$ measure for clustering evaluation \citep{manning2008introduction} where $\beta$ is chosen as in \citep{le2015unsupervised}.

\subsection{The farthest-point clustering}\label{sec:farthest_point_clustering}

The farthest-point clustering \citep{gonzalez1985clustering} is a simple fast greedy approach for a $\kappa$-center problem. The $\kappa$-center problem is defined as finding a partition of $n$ points into $\kappa$ clusters to minimize the maximum radius of clusters. The complexity of a direct implementation, e.g., Algorithm~\ref{alg:FarthestPointClustering}, is $\O(n\kappa)$. Moreover, by using the algorithm in \citep{feder1988optimal}\footnote{Code is available at https://github.com/vmorariu/figtree/blob/master/matlab/figtreeKCenterClustering.m}, the complexity for the farthest-point clustering can be reduced into $\O(n\log\kappa)$.

\begin{algorithm} 
\caption{The farthest point clustering} 
\label{alg:FarthestPointClustering} 
\begin{algorithmic}[1] 
    \REQUIRE $X = \left\{ x_i \mid_{i \in [n]} \right\}$ is a set of $n$ input data points, and $\kappa$ is the predefined number of clusters for the farthest-point clustering.
    \ENSURE A set of clustering centers $C = \left\{ c_i \mid_{i \in [\kappa]} \right\}$, and cluster indices for $x_i \mid_{i \in [n]}$.
    \STATE Initialize $C \leftarrow \varnothing$.
     \STATE $c_1 \leftarrow$ a random data point $x \in X$.
     \STATE $C \leftarrow c_1$.
     \STATE $i \leftarrow 1$.
     \WHILE{$i < \kappa$ and $n-i > 0$}
     	\STATE $i \leftarrow i + 1$.
     	\STATE $c_i \leftarrow \max_{x \in X} {\min_{c \in C} \norm{x - c}}$. $\hspace{6.3 em}$ \% (find the farthest point $x \in X$ to $C$).
	\STATE $C \leftarrow C \cup c_i$. $\hspace{14 em}$ \% (add the new cluster center into $C$).
     \ENDWHILE   	
     \STATE Each data point $x \in X$ is assigned to its nearest cluster center $c \in C$.
     
\end{algorithmic}
\end{algorithm}

\subsection{Clustering-based tree metric sampling}\label{sec:clustering_based_TM}

The clustering-based tree metric sampling \citep{le2019tree} is a practical fast approach to sample tree metric from input data points. Its main idea is to use a (fast) clustering method, e.g., the farthest-point clustering, to cluster input data points hierarchically to build a tree structure, as summarized in Algorithm~\ref{alg:clustering_based_TM}\footnote{Code is available at https://github.com/lttam/TreeWasserstein/blob/master/BuildTreeMetric\_HighDim\_V2.m}. As discussed in \citep{le2019tree}, one can use any clustering method for the clustering-based tree metric sampling. The farthest-point clustering is suggested due to its fast computation (see Section~\ref{sec:farthest_point_clustering}). The complexity of the clustering-based tree metric sampling for $n$ input data points where one uses the same number of clusters $\kappa$ for the farthest-point clustering and $H_{\Tt}$ for the predefined deepest level of tree $\Tt$, is $\O(n H_{\Tt} \log \kappa)$. Therefore, the clustering-based tree metric sampling is very fast for applications.

\paragraph{A cluster sensitivity problem.} As discussed in \citep{le2019tree}, for data points near a border of adjacent, but different clusters, they are close to each other but in different clusters. The fact that whether those data points are clustered in the same cluster or not, depends on an initialization of the farthest-point clustering. Therefore, by leveraging various clustering results, obtained with different initializations for the farthest-point clustering, e.g. as in our proposed flow-based alignment approaches: \FlowAlign~and \DepthAlign, one can reduce an affect of the cluster sensitivity problem.

\begin{algorithm}[H] 
\caption{Clustering-based tree metric (with the farthest-point clustering)} 
\label{alg:clustering_based_TM} 
\begin{algorithmic}[1] 
    \REQUIRE $X$ is a set of $m$ input data points, $\tilde{x}_p$ is a parent node for those input data points in $X$, $h$ is a current depth level, $H_{\Tt}$ is the predefined deepest level of tree $\Tt$, $\kappa$ is the predefined number of clusters for the farthest-point clustering.
    \ENSURE tree metric $\Tt$
     \IF{$m > 0$}
     	\IF{$h > 0$}	
		\STATE Node $\tilde{x}_{c}$ $\leftarrow$ a center of $X$, e.g., the mean data point of $X$.
		\STATE Length of edge $(\tilde{x}_p, \tilde{x}_{c})$ $\leftarrow$ distance $(\tilde{x}_p, \tilde{x}_{c})$.
	\ELSE
		\STATE Node $\tilde{x}_{c} \leftarrow \tilde{x}_p$.
	\ENDIF
	\IF{$m > 1$ and $h < H_{\Tt}$}
     		\STATE Run the farthest-point clustering for $X$ into $\kappa$ clusters $X_i \mid_{i \in [\kappa]}$.
		\FOR{\textbf{each} cluster $X_i \mid_{i \in [\kappa]}$} 
			\STATE Recursive the clustering-based tree metric for input data points in set $X_i$, a parent node $\tilde{x}_{c}$, a current depth level $(h + 1)$ with the predefined deepest level $H_{\Tt}$ for tree $\Tt$, and the predefined number of clusters $\kappa$ for the farthest-point clustering).
		\ENDFOR
	\ENDIF
     \ENDIF	
\end{algorithmic}
\end{algorithm}

\subsection{Tree metric}
We recall the definition of tree metric in~\citep{semple2003phylogenetics} (\S7, p.145--182).
\begin{definition}
A metric $d:\Omega\times\Omega\rightarrow \RR_{+}$ is a tree metric on a finite set $\Omega$ if there exists a tree $\Tt$ with non-negative edge lengths such that all elements of $\Omega$ are nodes in $\Tt$, and for $x, z \in \Omega$, $d(x, z)$ equals to the length of the (unique) path in $\Tt$ between $x$ and $z$.
\end{definition}

\subsection{$F_{\beta}$ measure for clustering evaluation}
We summarize the $F_{\beta}$ measure for clustering evaluation as in \citep{manning2008introduction} where $\beta$ is chosen as in ~\citep{le2015unsupervised}. The main idea is that a pair of data points is assigned to the same cluster if and only if they are in the same class and otherwise. We have some following quantities:
\begin{itemize} 
\item TP: the number of a true positive decisions which assign a pair of data points in the same class to the same cluster.
\item TN: the number of a true negative decisions which assign a pair of data points in the different classes to the different clusters.
\item FP: the number of a false positive decisions which assign a pair of data points of different classes to the same cluster.
\item FN: the number of a false negative decisions which assign a pair of data points of the same class to different clusters. 
\end{itemize}

Consequently, we have the precision 
\begin{equation}\label{eq:precision}
\mathbf{P} = \frac{\text{TP}}{\text{TP} + \text{FP}},
\end{equation}
and recall 
\begin{equation}\label{eq:recall}
\mathbf{R} = \frac{\text{TP}}{\text{TP} + \text{FN}}.
\end{equation}

Note that we usually have many more pairs of data points in different classes than in the same class in clustering. Therefore, we need to penalize false negative error more strongly than false positive error. $\mathbf{F}_\beta$ measure can take into account of this idea by using a scalar $\beta > 1$, defined as follow:
\begin{equation}\label{eq:Fbeta}
\mathbf{F}_{\beta} = \frac{\left(\beta^2 + 1 \right)\mathbf{PR}}{\beta^2\mathbf{P} + \mathbf{R}}.
\end{equation}

Following~\citep{le2015unsupervised}, we plug Equation~\eqref{eq:precision}, and Equation~\eqref{eq:recall} into Equation~\eqref{eq:Fbeta}, and observe that $\mathbf{F}_{\beta}$ penalizes false negative error ${\beta}^2$ times more than false positive error. Then, we can set 
\begin{equation}
\beta = \sqrt{\frac{| \texttt{D} |}{| \texttt{S} |}},
\end{equation}
where $| \cdot |$ denotes a cardinality of a set let; $\texttt{D}$, $\texttt{S}$ are sets of pairs of data points in different and same classes respectively. 

\subsection{More information about datasets}\label{sec:datasets}

\paragraph{Quantum chemistry.} One can download \texttt{qm7} dataset from: \url{http://quantum-machine.org/datasets/}. We emphasize that for simplicity, we only use the Cartesian coordinate of each atom ($\RR^3$) in the molecules. We do \textit{not} use the atomic nuclear charge for each molecule for the atomization energy prediction task as used in experiments of~\cite{peyre2016gromov, rupp2012fast}.

There are $7165$ molecules and each molecule has no more than $23$ atoms in \texttt{qm7} dataset.

\paragraph{Document classification with non-registered word embeddings.} One can download document datasets, e.g., \texttt{TWITTER, RECIPE, CLASSIC, AMAZON} datasets from: \url{https://github.com/mkusner/wmd}.

After preprocessing, there are 
\begin{itemize}
\item $3108$ documents in $3$ classes where each document length is not more than $29$ in \texttt{TWITTER} dataset, 
\item $4370$ documents in $15$ classes where each document length is not more than $628$ in \texttt{RECIPE} dataset, 
\item $7093$ documents in $4$ classes where each document length is not more than $348$ in \texttt{CLASSIC} dataset,
\item $8000$ documents in $4$ classes where each document length is not more than $4592$ in \texttt{AMAZON} dataset. 
\end{itemize}

\section{Some further discussions}\label{sec:further_discussions}

\paragraph{Network flow.} In combinatorial optimization, network flow is a class of computational problems in which the input is a graph with capacities on its edges~\citep{ahuja1988network}. The minimum-cost flow problem is one of popular classes of network flow problems. Especially, optimal transport (OT) for probability measures whose supports are in the same space, can be regarded as one of instances of the minimum-cost flow problems, and one can use the network simplex algorithm to solve it. However, for GW, the supports of input probability measures are in different spaces. Therefore, one may not use algorithms for minimum-cost flow problems, e.g., network simplex, to optimize the alignment in GW problem with tree metrics (Equation (1) in the main text) where supports of input probability measures are in different tree metric spaces.

Recall that our proposed flow-based alignment approaches (i.e., \FlowAlign~and \DepthAlign) for probability measures in different tree metric spaces is based on matching both \textit{flows} from a root to each support in the probability measure, and root alignment for the corresponding tree structures. Thus, one should distinguish between our proposed flow-based alignment approaches in \FlowAlign~and \DepthAlign~for probability measures in different tree metric spaces, and algorithms for minimum-cost flow problems. Note that the flows of our flow-based representation shares the same spirit with the flows modeled in the proof for the closed-form computation of tree-Wasserstein distance \citep{le2019tree} (\S3). 

\paragraph{Tree metric sampling.} Our goal is \textbf{\textit{not}} to approximate the GW distance between probability measures whose supports are in the Euclidean space (i.e., the ground metric is Euclidean metric), but rather to sample tree metrics for each space of supports, and then use those random sampled tree metrics as ground metrics for supports of input probability measures in GW, similar to tree-sliced-Wasserstein~\citep{le2019tree}.

Similar to the case of one-dimensional projections for sliced Wasserstein, or sliced GW, which do not give good properties from a distortion point of view, but remain useful for sliced Wasserstein or sliced GW in applications, we believe that tree metrics with a large distortion can be \textit{useful}, similar to the case of tree-sliced-Wasserstein in practical applications.

\paragraph{A correction of the binomial expansion trick in \citep{vayer2019sliced}.} There is a typo in the binomial expansion trick in \citep{vayer2019sliced}. We correct it as follows:
\begin{align}
\sum_{i,j}\left( (x_i - x_j)^2 - (y_{\sigma_i} - y_{\sigma_j})^2\right)^2 &= 2n\left(\sum_i x_i^4\right) - 8\left(\sum_i x_i^3\right)\left( \sum_i x_i\right) + 6 \left(\sum_i x_i^2 \right)^2  \nonumber \\
& \qquad + 2n\left(\sum_i y_i^4\right) - 8\left(\sum_i y_i^3\right)\left( \sum_i y_i\right) + 6 \left(\sum_i y_i^2 \right)^2 \nonumber \\
& \qquad -4\left(\sum_i x_i^2\right)\left(\sum_i y_i^2\right) - 4n\left(\sum_i x_i^2 y_{\sigma_i}^2 \right) \nonumber \\
& \qquad +8\left(\sum_i x_i \right)\left( \sum_i x_i y_{\sigma_i}^2\right) +8\left(\sum_i y_i \right)\left( \sum_i x_i^2 y_{\sigma_i}\right) \nonumber \\
& \qquad -8\left( \sum_i x_i y_{\sigma_i}\right)^2. \label{eq:bin_expansion}
\end{align}

The $\sigma$ in Equation~\eqref{eq:bin_expansion} is a permutation. Note that the binomial expansion trick can be applied for GW when one uses the squared $\ell_2$ loss and input probability measures have the same number of supports with uniform weights as considered in sliced GW \citep{vayer2019sliced}.

\section{Empirical relation for discrepancies for probability measures in different spaces}
\label{sec:empirical_relation}

We emphasize that the proposed \FlowTGW~and \DepthTGW~are two \textbf{\textit{novel}} discrepancies for probability measures in different tree metric spaces, and we do not try to mimic or approximate either the entropic GW or sliced GW.

In this section, we investigate an empirical relation between a pair of discrepancies, e.g., let denote those considered discrepancies as $d_{\alpha}$ and $d_{\beta}$. We carried out following experiments\footnote{The experimental setup is similar to that of \cite{le2019tree} for investigating an empirical relation between tree-sliced-Wasserstein and optimal transport with Euclidean ground metric.}: 

For a query point $q$, we denote $q_{\text{NN}}$ as the nearest neighbor of $q$ with respect to $d_{\alpha}$. Then, we investigate the frequency of rank order of $q_{\text{NN}}$ among nearest neighbor of $q$ with respect to $d_{\beta}$. For those experiments, we randomly split $90\%/10\%$ for training and test. Reported results are averaged over $1000$ runs.

We recall some following notations: FA for \FlowAlign, DA for \DepthAlign, SGW for sliced GW, EGW for entropic GW (only use entropic regularization for transport plan optimization, but exclude for computing entropic GW objective) and EGW$_0$ for standard entropic GW (use entropic regularization for both transport plan optimization and objective computation). We also recall that supports for probability measures are in low-dimensional spaces (dim=3) in \texttt{qm7} dataset; and in high-dimensional spaces (dim=300) in \texttt{TWITTER, RECIPE, CLASSIC, AMAZON} datasets. The number of supports for probability measures are small in \texttt{qm7} (\#supports $\le 23$), and \texttt{TWITTER} (\#supports $\le 29$) datasets; and are large in \texttt{RECIPE} (\#supports $\le 628$), \texttt{CLASSIC} (\#supports $\le 348$), and \texttt{AMAZON} (\#supports $\le 4592$) datasets.

\subsection{Empirical relation between FA and DA}\label{sec:relation_FTGW_DTGW}

We set $d_{\alpha}$ := FA and $d_{\beta}$ := DA. Figure~\ref{fg:rank_fixDTGW_FTGW} illustrates an empirical relation between FA and DA in \texttt{qm7, TWITTER, RECIPE, CLASSIC, AMAZON} datasets. We used DA with $1$ tree slice for \texttt{qm7}, $10$ tree slices for \texttt{TWITTER}, $5$ slices for \texttt{RECIPE}, $1$ tree slice for \texttt{CLASSIC}, and $1$ tree slice for \texttt{AMAZON}.

The empirical results show that FA agrees with some aspects of DA, especially when the number of support for probability measures is small (information about relative deep levels of supports is small), and the degree of agreement may increase when supports for probability measures are low-dimensional space (tree structure becomes simpler) as in \texttt{qm7}. From Figure~\ref{fg:rank_fixDTGW_FTGW}, we also observe that the degree of agreement decreases when the number of supports in datasets increases.

Recall that DA is a generalized version of FA which takes into account deep levels of supports in tree structures of tree metric spaces. When the number of supports for probability measures is large, the information about relative deep levels of supports is increased. Therefore, DA operates differently to FA, e.g., in \texttt{RECIPE} and \texttt{AMAZON} datasets. In addition, tree structure for high-dimensional spaces of supports (e.g., in \texttt{TWITTER} and \texttt{CLASSIC} datasets) is usually more complex than that of low-dimensional space of supports (e.g., in \texttt{qm7} dataset). Thus, DA behaves more similar to FA in \texttt{qm7} dataset than in \texttt{TWITTER} and \texttt{CLASSIC} datasets.

\begin{figure}
  \begin{center}
    \includegraphics[width=0.9\textwidth]{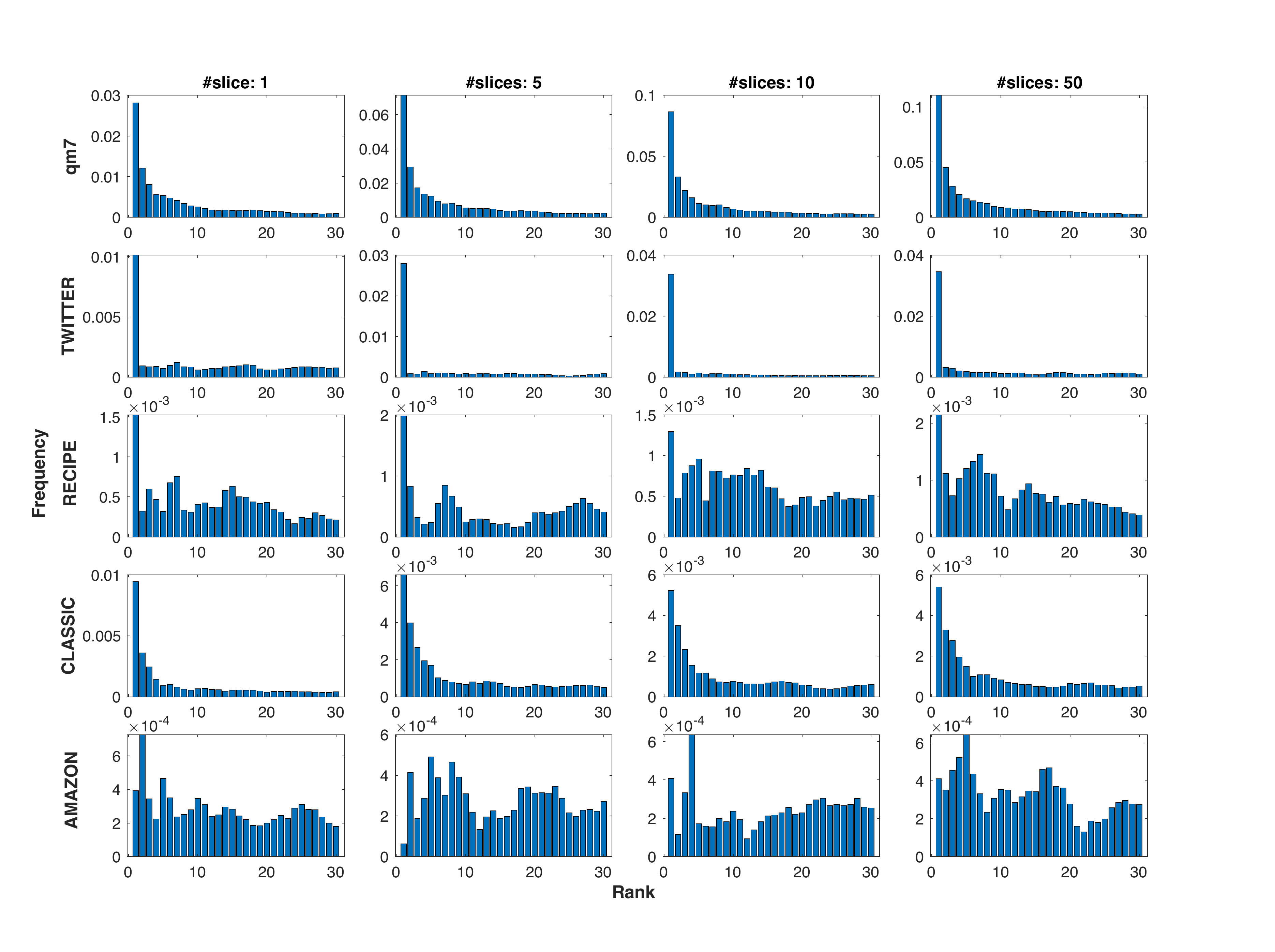}
  \end{center}
  \caption{Empirical relation: $d_{\alpha}$ := FA and $d_{\beta}$ := DA. We used DA with $1$ tree slice for \texttt{qm7}, $10$ tree slices for \texttt{TWITTER}, $5$ tree slices for \texttt{RECIPE}, $1$ tree slice for \texttt{CLASSIC}, and $1$ tree slice for \texttt{AMAZON}.}
  \label{fg:rank_fixDTGW_FTGW}
\end{figure}

\subsection{Empirical relation between FA and SGW}\label{sec:relation_FTGW_SGW}

We first set $d_{\alpha}$ := SGW and $d_{\beta}$ := FA (10 tree slices). For SGW in \texttt{CLASSIC, AMAZON} datasets, we only evaluate it until $10$ slices due to its slowness with a larger number of slices. We illustrate an empirical relation between FA and SGW in \texttt{qm7, TWITTER, RECIPE, CLASSIC, AMAZON} datasets in Figure~\ref{fg:rank_fixFTGW10_SGW}.

Secondly, we set $d_{\alpha}$ := FA and $d_{\beta}$ := SGW (10 slices). We illustrate another empirical relation between FA and SGW in \texttt{qm7, TWITTER, RECIPE, CLASSIC, AMAZON} datasets in Figure~\ref{fg:rank_fixSGW10_FTGW}.

The empirical results show that SGW and FA may agree with each other some aspects when supports are in low-dimensional spaces, e.g., in \texttt{qm7} dataset, but they become more different when supports are in high-dimensional spaces, e.g., in document datasets: \texttt{TWITTER, RECIPE, CLASSIC, AMAZON} datasets. Note that a one-dimensional space is a special case of tree metric (a tree metric is a chain). For supports in low-dimensional spaces, both projecting those supports in one-dimensional spaces and using tree metric sampling seem to be able to capture the structure of a distribution of supports at a certain level. However, for supports in high-dimensional spaces, projecting the supports in one-dimensional space limits its capacity to capture the structure of a distribution of supports~\citep{pmlr-v97-liutkus19a} while sampling tree metric can remedy this problem~\citep{le2019tree}. 

\begin{figure}
  \begin{center}
    \includegraphics[width=0.9\textwidth]{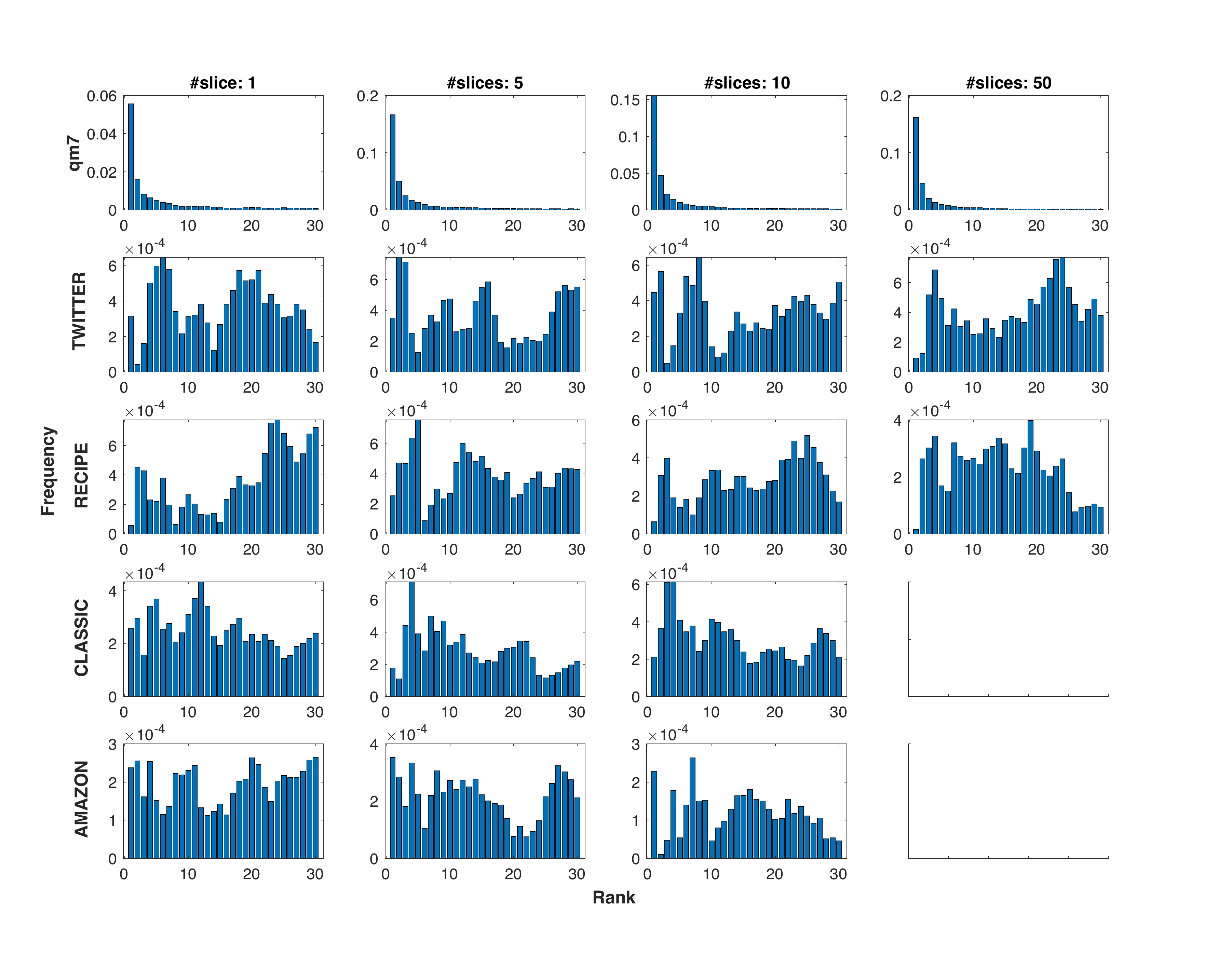}
  \end{center}
  \caption{Empirical relation: $d_{\alpha}$ := SGW and $d_{\beta}$ := FA (10 tree slices). For SGW in \texttt{CLASSIC, AMAZON} datasets, we only evaluate it until $10$ slices due to its slowness with a larger number of slices.}
  \label{fg:rank_fixFTGW10_SGW}
\end{figure}


\begin{figure}
  \begin{center}
    \includegraphics[width=0.9\textwidth]{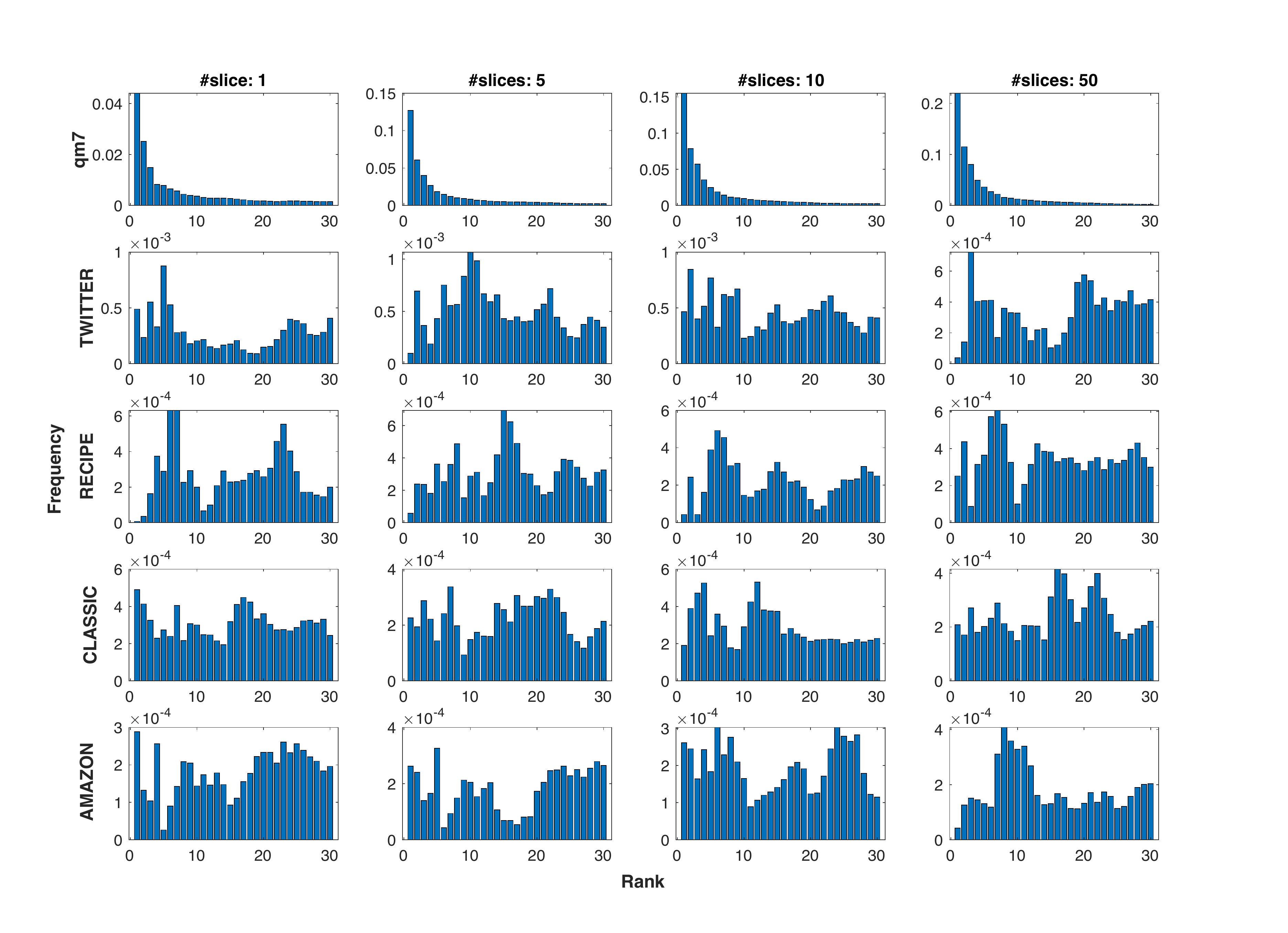}
  \end{center}
  \caption{Empirical relation: $d_{\alpha}$ := FA and $d_{\beta}$ := SGW (10 slices).}
  \label{fg:rank_fixSGW10_FTGW}
\end{figure}


\subsection{Some other empirical relations}

\subsubsection{Empirical relation between SGW and DA}

We set $d_{\alpha}$ := SGW and $d_{\beta}$ := DA. Figure~\ref{fg:rank_fixDTGW_SGW} illustrates an empirical relation between SGW and DA in \texttt{qm7, TWITTER, RECIPE, CLASSIC, AMAZON} datasets. We used DA with $1$ tree slice for \texttt{qm7}, $10$ tree slices for \texttt{TWITTER}, $5$ tree slices for \texttt{RECIPE}, $1$ tree slice for \texttt{CLASSIC}, and $1$ tree slice for \texttt{AMAZON}. For SGW in \texttt{CLASSIC, AMAZON} datasets, we only evaluate it until $10$ slices due to its slowness with a larger number of slices.

The empirical results show that SGW agrees with some aspects of DA when supports for probability measures are in a low-dimensional space (tree structure becomes simpler), as in \texttt{qm7} dataset. When supports for probability measure are in a low-dimensional space (e.g., in \texttt{qm7} dataset), both SGW and DA agree with some aspect of FA (see more discussions in Section~\ref{sec:relation_FTGW_DTGW}, and Section~\ref{sec:relation_FTGW_SGW}), or SGW and DA agrees with each other some aspects. However, when supports for probability measure are in high-dimensional spaces (e.g., in document datasets: \texttt{TWITTER, RECIPE, CLASSIC, AMAZON} datasets), they become different (similar to the empirical relation between SGW and FA as discussed in Section~\ref{sec:relation_FTGW_SGW}).

\begin{figure}
  \begin{center}
    \includegraphics[width=0.9\textwidth]{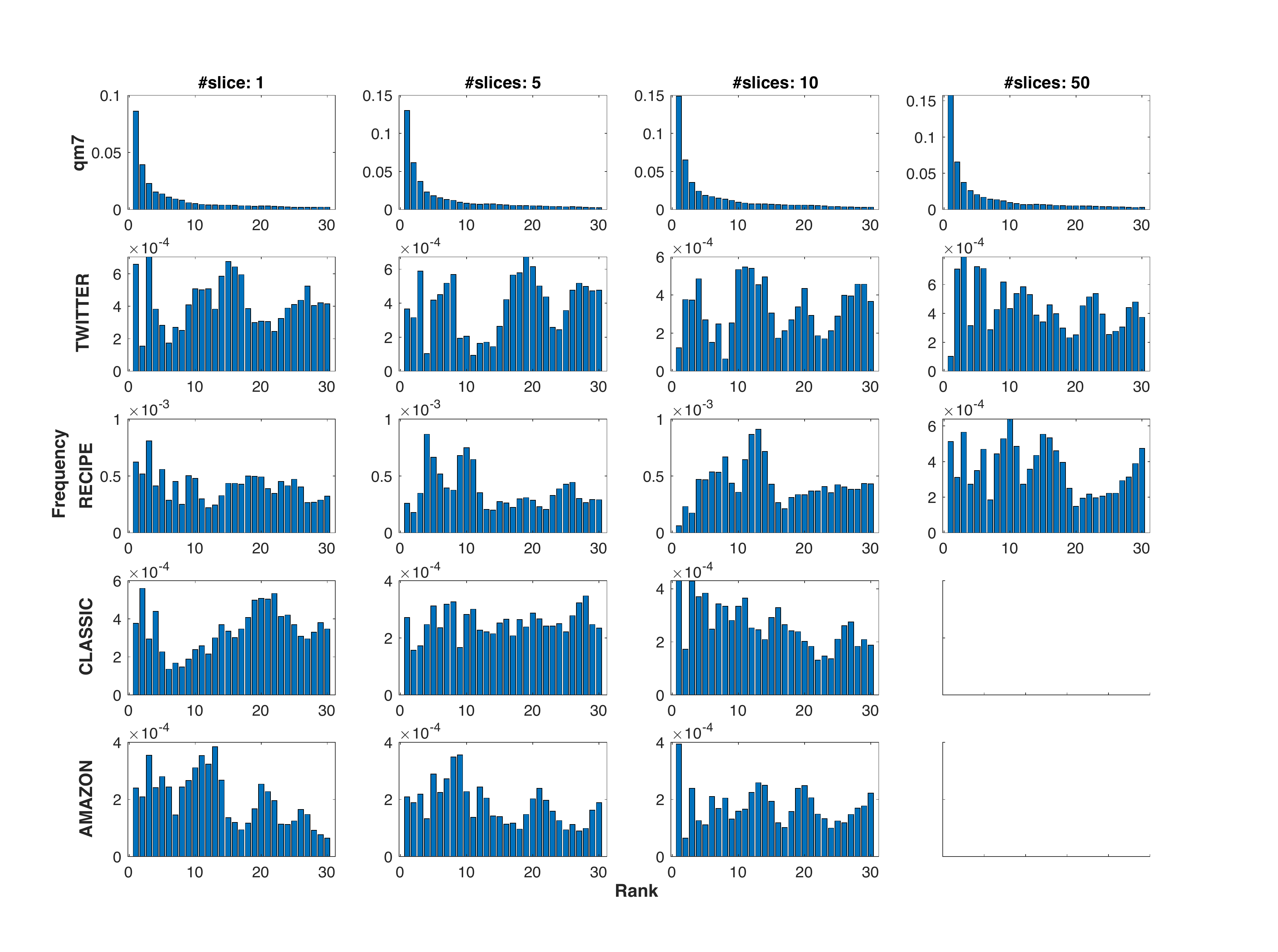}
  \end{center}
  \caption{Empirical relation: $d_{\alpha}$ := SGW and $d_{\beta}$ := DA.We used DA with $1$ tree slice for \texttt{qm7}, $10$ tree slices for \texttt{TWITTER}, $5$ tree slices for \texttt{RECIPE}, $1$ tree slice for \texttt{CLASSIC}, and $1$ tree slice for \texttt{AMAZON}. For SGW in \texttt{CLASSIC, AMAZON} datasets, we only evaluate it until $10$ slices due to its slowness with a larger number of slices.}
  \label{fg:rank_fixDTGW_SGW}
\end{figure}

\subsubsection{Empirical relation between FA/SGW and EGW$_0$/EGW}

For those experiments, the entropic regularization for EGW$_0$/EGW is set $5$ for \texttt{qm7} and \texttt{TWITTER} datasets, and $10$ for \texttt{RECIPE, CLASSIC, AMAZON} datasets. For SGW in \texttt{CLASSIC, AMAZON} datasets, we only evaluate it until $10$ slices due to its slowness with a larger number of slices.

\paragraph{Empirical relation between FA/SGW and EGW$_0$.} We first set $d_{\alpha}$ := FA and $d_{\beta}$ := EGW$_0$. We illustrate an empirical relation between FA and EGW$_0$ in \texttt{qm7, TWITTER, RECIPE, CLASSIC, AMAZON} datasets in Figure~\ref{fg:rank_fixEGW_FTGW}.

Secondly, we set $d_{\alpha}$ := SGW and $d_{\beta}$ := EGW$_0$. We illustrate an empirical relation between SGW and EGW$_0$ in \texttt{qm7, TWITTER, RECIPE, CLASSIC, AMAZON} datasets in Figure~\ref{fg:rank_fixEGW_SGW}.

\begin{figure}
  \begin{center}
    \includegraphics[width=0.9\textwidth]{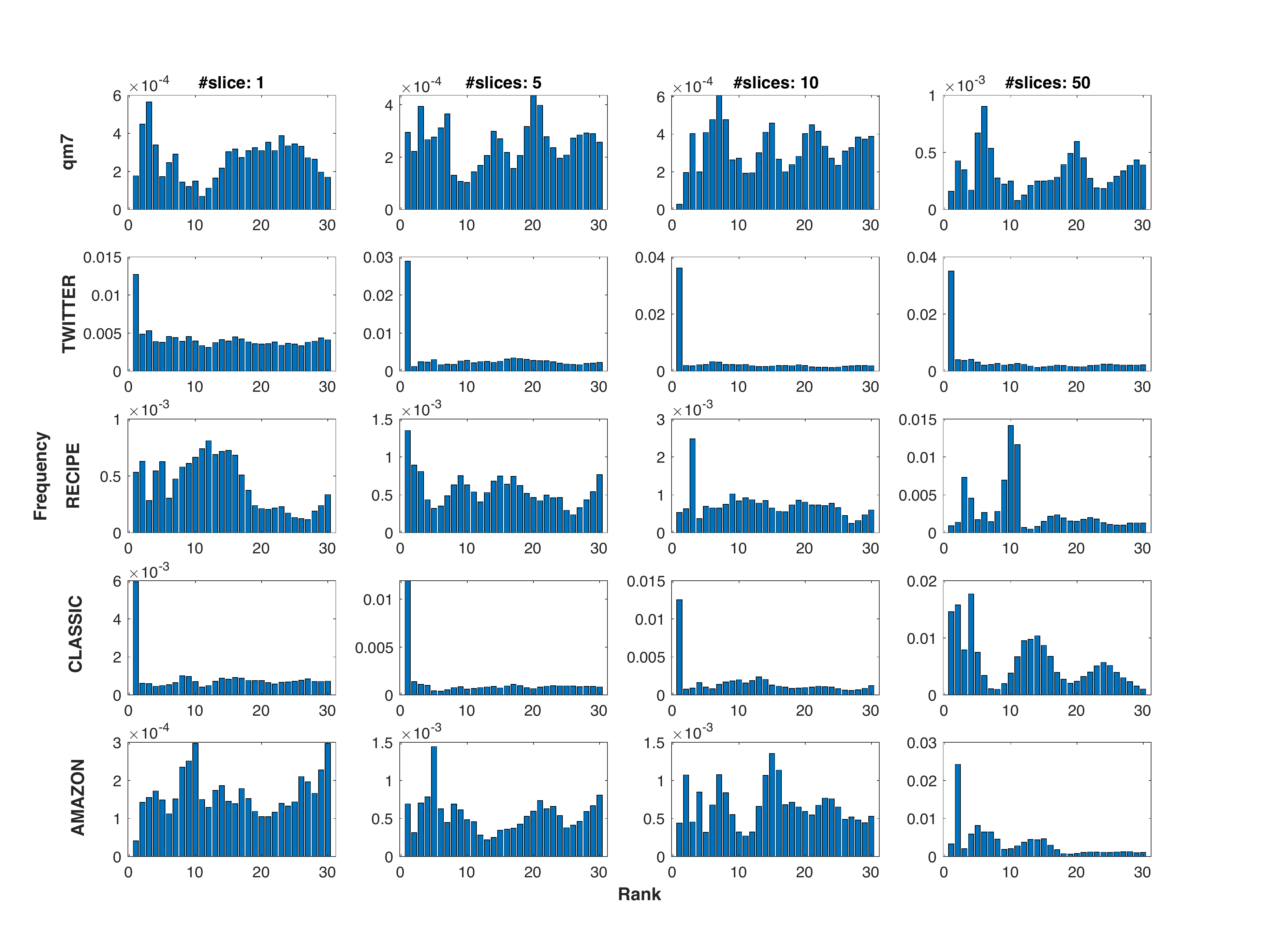}
  \end{center}
  \vspace{-10pt}
  \caption{Empirical relation: $d_{\alpha}$ := FA and $d_{\beta}$ := EGW$_0$. The entropic regularization for EGW$_0$ is set $5$ for \texttt{qm7} and \texttt{TWITTER} datasets, and $10$ for \texttt{RECIPE, CLASSIC, AMAZON} datasets.}
  \label{fg:rank_fixEGW_FTGW}
\end{figure}

\begin{figure}
  \begin{center}
    \includegraphics[width=0.9\textwidth]{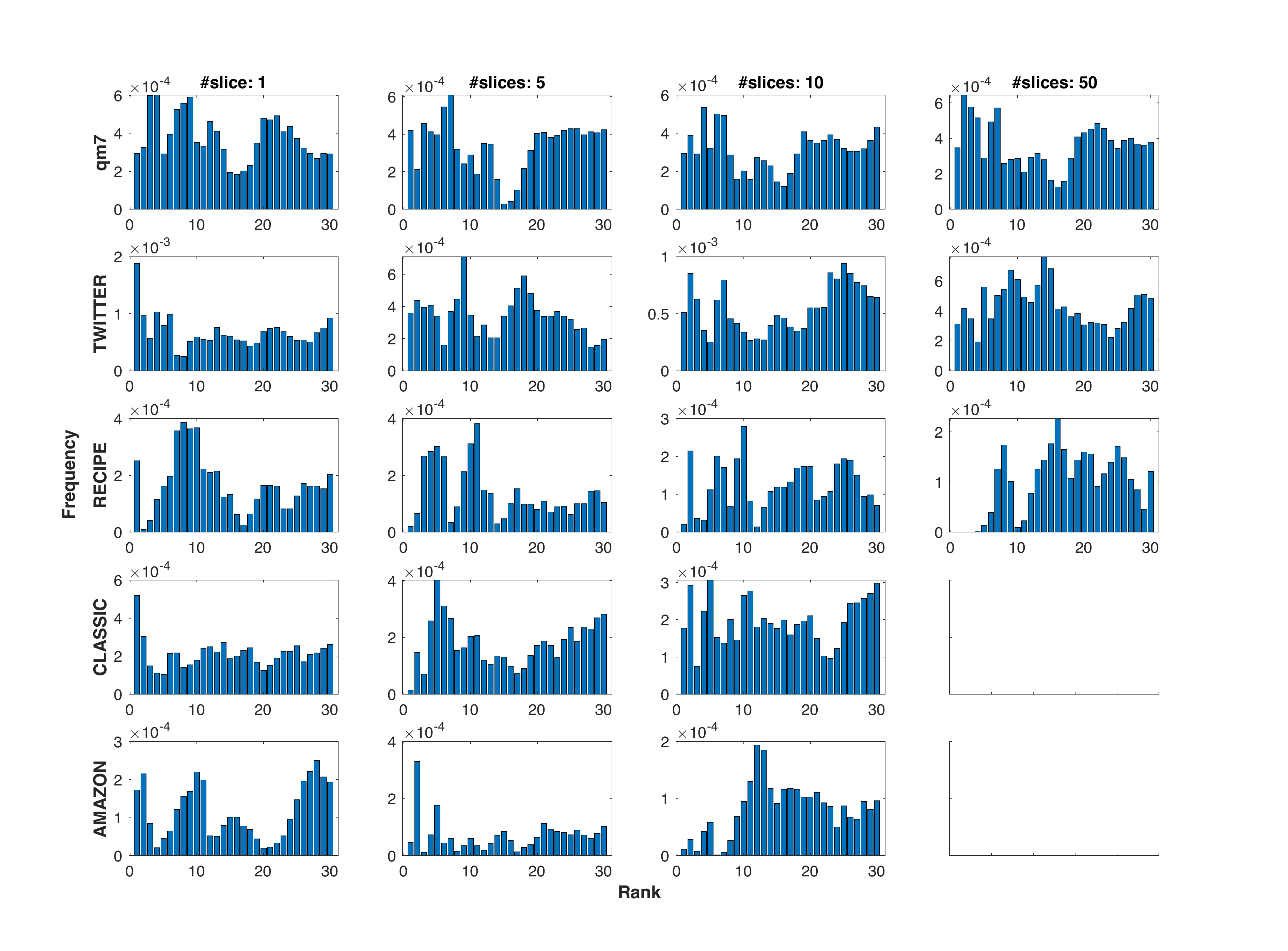}
  \end{center}
  \vspace{-10pt}
  \caption{Empirical relation: $d_{\alpha}$ := SGW and $d_{\beta}$ := EGW$_0$. The entropic regularization for EGW$_0$ is set $5$ for \texttt{qm7} and \texttt{TWITTER} datasets, and $10$ for \texttt{RECIPE, CLASSIC, AMAZON} datasets. For SGW in \texttt{CLASSIC, AMAZON} datasets, we only evaluate it until $10$ slices due to its slowness with a larger number of slices.}
  \label{fg:rank_fixEGW_SGW}
\end{figure}


\paragraph{Empirical relation between FA/SGW and EGW.} We first set $d_{\alpha}$ := FA and $d_{\beta}$ := EGW. We illustrate an empirical relation between FA and EGW in \texttt{qm7, TWITTER, RECIPE, CLASSIC, AMAZON} datasets in Figure~\ref{fg:rank_fixMyEGW_FTGW}.

Secondly, we set $d_{\alpha}$ := SGW and $d_{\beta}$ := EGW. We illustrate an empirical relation between SGW and EGW in \texttt{qm7, TWITTER, RECIPE, CLASSIC, AMAZON} datasets in Figure~\ref{fg:rank_fixMyEGW_SGW}.

\begin{figure}
  \begin{center}
    \includegraphics[width=0.9\textwidth]{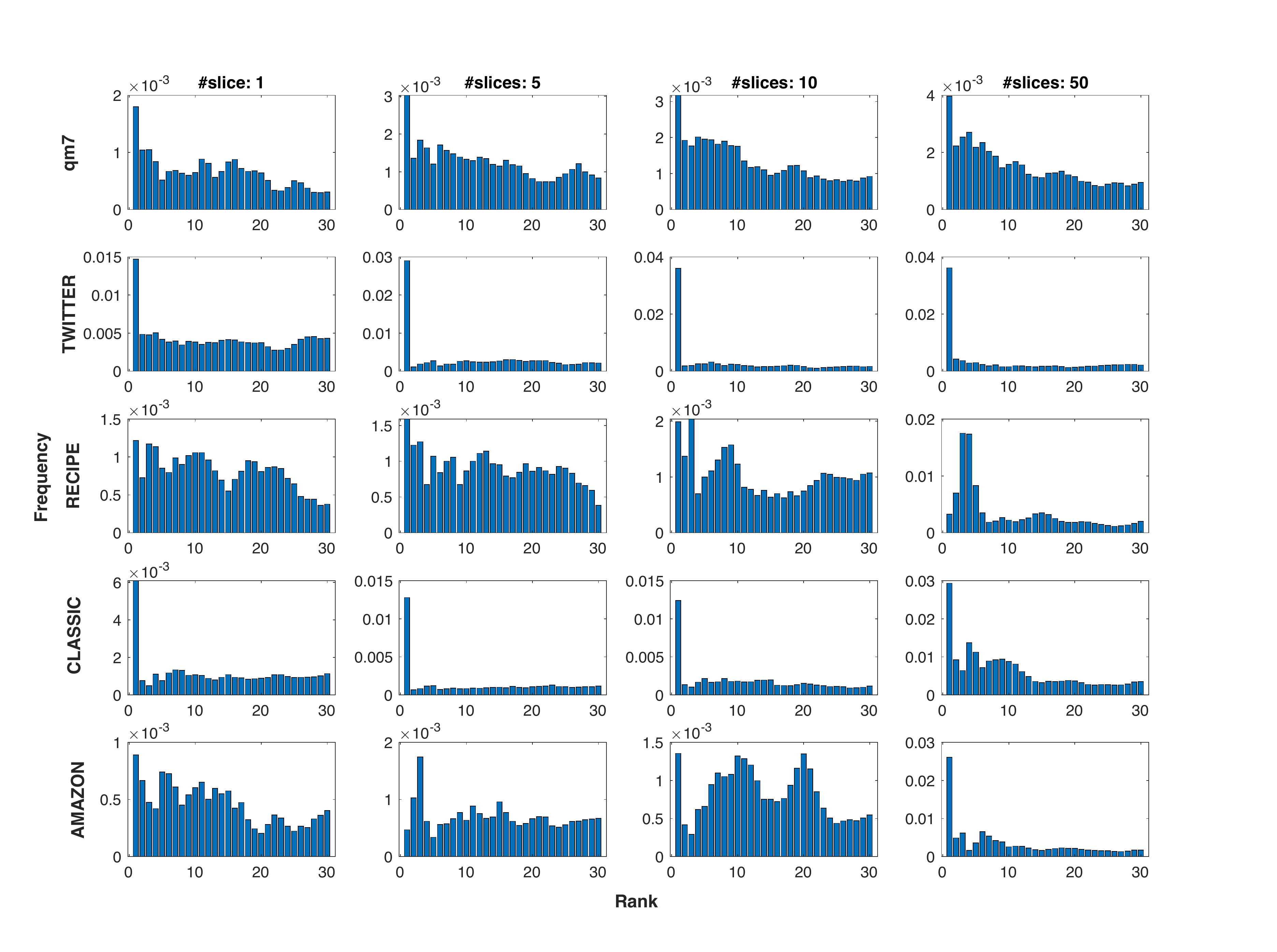}
  \end{center}
  \vspace{-10pt}
  \caption{Empirical relation: $d_{\alpha}$ := FA and $d_{\beta}$ := EGW. The entropic regularization for EGW is set $5$ for \texttt{qm7} and \texttt{TWITTER} datasets, and $10$ for \texttt{RECIPE, CLASSIC, AMAZON} datasets.}
  \label{fg:rank_fixMyEGW_FTGW}
\end{figure}

\begin{figure}
  \begin{center}
    \includegraphics[width=0.9\textwidth]{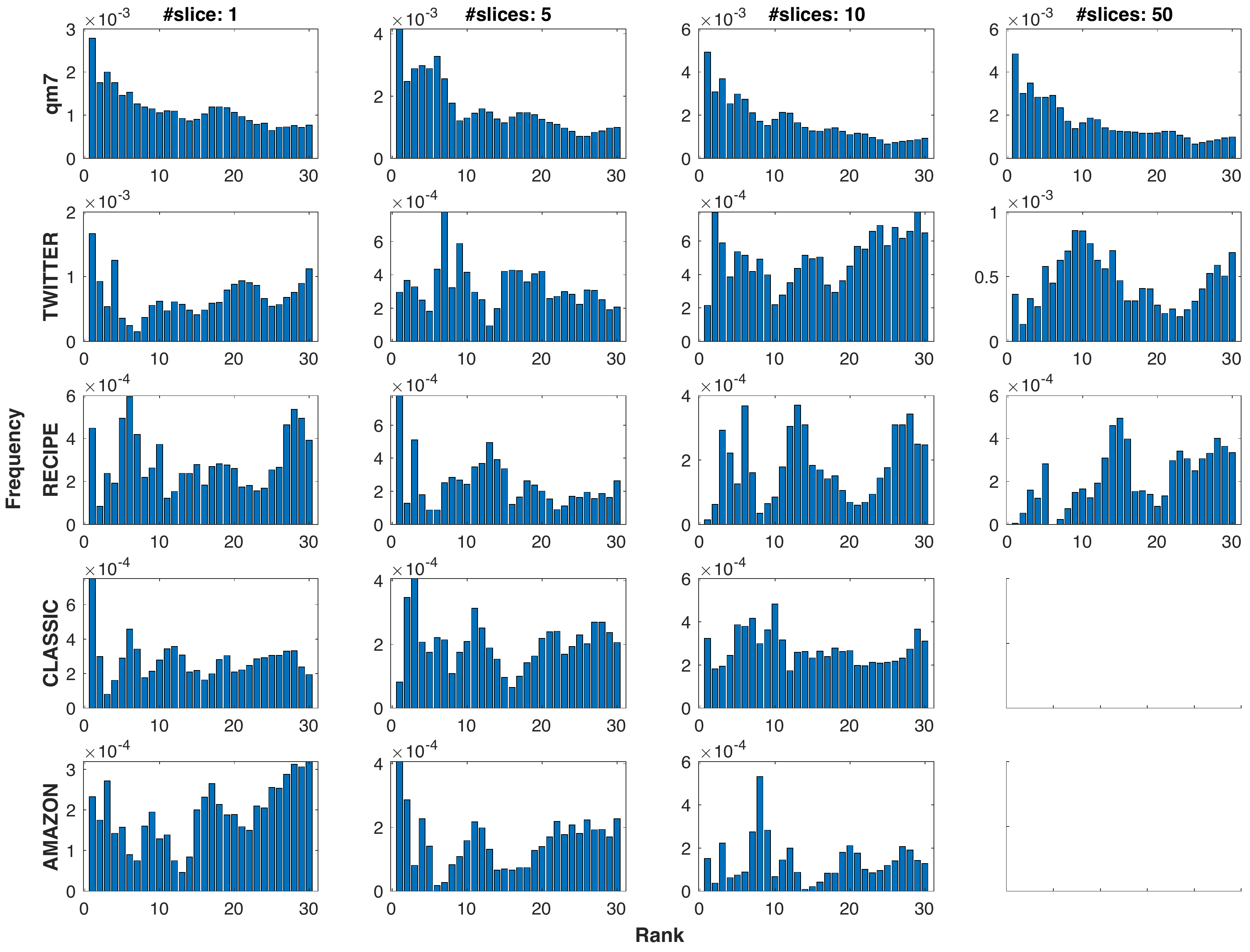}
  \end{center}
  \vspace{-10pt}
  \caption{Empirical relation: $d_{\alpha}$ := SGW and $d_{\beta}$ := EGW. The entropic regularization for EGW is set $5$ for \texttt{qm7} and \texttt{TWITTER} datasets, and $10$ for \texttt{RECIPE, CLASSIC, AMAZON} datasets. For SGW in \texttt{CLASSIC, AMAZON} datasets, we only evaluate it until $10$ slices due to its slowness with a larger number of slices.}
  \label{fg:rank_fixMyEGW_SGW}
\end{figure}

\paragraph{Discussions.} It seems that there is no much empirical relation between FA/SGW and EGW$_0$/EGW on \texttt{qm7, TWITTER, RECIPE, CLASSIC, AMAZON} datasets.


\end{document}